\pdfoutput=1
\documentclass{article}

\usepackage{wrapfig}
\usepackage{microtype}
\usepackage{graphicx}
\usepackage{subcaption}
\usepackage{booktabs}
\usepackage{mleftright}
\usepackage{xspace}
\usepackage[round-mode=places,round-precision=0]{siunitx}

\usepackage{stmaryrd}
\usepackage{tikz}
\usetikzlibrary{decorations.pathreplacing}

\PassOptionsToPackage{table,svgnames,dvipsnames}{xcolor}

\usepackage[accepted]{icml2026}

\usepackage{amsmath}
\usepackage{mathtools}

\usepackage{amssymb}
\usepackage{wasysym}
\usepackage{latexsym}

\usepackage{amsthm}
\usepackage{thmtools} %
\AddToHook{cmd/appendix/before}{\crefalias{section}{appendix}\crefalias{subsection}{appendix}} %

\usepackage{microtype}
\usepackage{url}
\usepackage{enumitem}
\usepackage{tcolorbox}

\usepackage{xcolor}
\definecolor{accent}{rgb}{0, 0, 0.5}

\usepackage{hyperref}
\hypersetup{colorlinks=true,citecolor=accent,linkcolor=accent,urlcolor=accent}
\usepackage[capitalize,noabbrev,nameinlink]{cleveref}
\usepackage{doi}

\usepackage{natbib}

\usepackage{relsize}

\newtheorem{theorem}{Theorem}[section]
\newtheorem{proposition}[theorem]{Proposition}

\newtheorem{lemma}[theorem]{Lemma}
\newtheorem{corollary}[theorem]{Corollary}

\theoremstyle{definition}

\newtheorem{definition}[theorem]{Definition}

\newtheorem{example}[theorem]{Example}

 \newcommand{\macro}[1]{{#1}}

\newcommand{\CRASP}{\macro{\ensuremath{\mathsf{C\textsf{\textnormal{-}}RASP}}\xspace}}

\newcommand{\CRASPpos}{\macro{\ensuremath{\mathsf{C\textsf{-}RASP}}_{\!+}\xspace}}
\newcommand{\TLP}{\macro{\TL[\Psop]}}

\makeatletter
\newcommand{\oset}[2]{%
  {\mathop{#2}\limits^{\vbox to -.5\ex@{\kern-\tw@\ex@
   \hbox{\scriptsize $#1$}\vss}}}}
\makeatother
\newcommand{\countl}{\ensuremath{\oset{\leftharpoonup}{\#}}}

\newcommand{\TL}{\ensuremath{\mathsf{TL}}}

\newcommand{\Psop}{\mathord{\rlap{\makebox[\widthof{$\Diamond$}]{\raisebox{.72pt}{\scalebox{1.25}{\scriptsize$-$}}}}\raisebox{-0.5pt}{$\Diamond$}}}
\newcommand{\Hsop}{\boxminus}
\newcommand{\then}{\rightarrow}

\newcommand{\cif}[3]{\ensuremath{#1\;\mathbf{?}\; #2\; \textbf{:} \;#3}}

\newcommand{\N}{\mathbb{N}}
\newcommand{\Z}{\mathbb{Z}}

\newcommand{\bnfto}{\mathrel{::=}}

\newcommand{\rtfr}{$(1,\log n)$-precision transformer}

\newcommand{\rtfrs}{$(1,\log n)$-precision  transformers}
\newcommand{\Rtfrs}{$(1,\log n)$-precision  Transformers}

\newcommand{\frtfr}{$(1,1)$-precision transformer}

\newcommand{\frtfrs}{$(1,1)$-precision transformers}
\newcommand{\Frtfrs}{$(1,1)$-precision transformers}
\newcommand{\tfr}{\ensuremath{\macro{T}}}

\newcommand{\mat}[1]{\mathbf{#1}}
\renewcommand{\vec}[1]{\mathbf{#1}}
\newcommand{\str}[1]{#1}

\newcommand{\numterms}{\macro{m}}
\newcommand{\termind}{\macro{\ell}}

\newcommand{\depth}{\macro{\textnormal{dp}}}

\newcommand{\lang}{\macro{\mathcal{L}}}
\newcommand{\Lang}{\macro{\mathcal{L}}}

\definecolor{lowcolor}{HTML}{4CAF50}  %
\definecolor{highcolor}{HTML}{B22222} %

\newlength{\cellheight}
\newlength{\cellwidth}
\newlength{\bsize}
\newcommand{\bit}[2]{\langle#1\rangle_{#2}}
\newcommand{\bigbit}[2]{\left\langle#1\right\rangle_{#2}}

\newcommand{\bos}{\macro{\texttt{<BOS>}}}

\newcommand{\bitind}{\macro{b}}
\newcommand{\numbits}{\macro{p}}
\newcommand{\fracbits}{\macro{s}}

\newcommand{\sympred}[1]{\macro{#1}}

\newenvironment{calign}
  {
    \setcounter{equation}{0}%
    \footnotesize
    \vspace{-1em} 
    \flalign
  }
  {
    \endflalign
    \vspace{-2em}
  }

\newcommand{\diopheq}{\mathcal{D}}

\newcommand{\F}{\mathbb{F}}

\newcommand{\OMIT}[1]{}

\newcommand{\round}{\textnormal{round}}

\newcommand{\hypclass}{\mathcal{H}}
\newcommand{\only}[1]{\textrm{Only}_{#1}}

\newcommand{\prog}{\macro{P}}

\newcommand{\ialphabet}{\Sigma}

\newcommand{\complexity}{\macro{c}}

\newcommand{\pprec}[1]{\macro{\text{prec}\left(#1\right)}}
\newcommand{\pgirth}[1]{\macro{\text{girth}\left(#1\right)}}

\newcommand{\numlayers}{\macro{L}}

\newcommand{\dyck}{\macro{\textnormal{D}}}

\newcommand{\restricted}{\macro{\text{regulated}}}

\newcommand{\poly}{\textnormal{poly}}

\newcommand{\xbar}{\ensuremath{\underline{x}}}
\newcommand{\ybar}{\ensuremath{\underline{y}}}
\newcommand{\zbar}{\ensuremath{\underline{z}}}

\newcommand{\AY}[1]{\textcolor{green!80!orange}{(AY : #1)}}
\newcommand{\DC}[1]{\textcolor{blue}{(DC : #1)}}

\icmltitlerunning{Length Generalization Bounds for Transformers}

\begin{document}

\twocolumn[
  \icmltitle{Length Generalization Bounds for Transformers}

  \begin{icmlauthorlist}
    \icmlauthor{Andy Yang}{nd}
    \icmlauthor{Pascal Bergsträßer}{rptu}
    \icmlauthor{Georg Zetzsche}{mpi}
    \icmlauthor{David Chiang}{nd}
    \icmlauthor{Anthony W. Lin}{rptu,mpi}
  \end{icmlauthorlist}

  \icmlaffiliation{nd}{University of Notre Dame, USA}
  \icmlaffiliation{rptu}{University of Kaiserslautern-Landau, Germany}
  \icmlaffiliation{mpi}{Max-Planck Institute for Software Systems, Germany}

  \icmlcorrespondingauthor{Andy Yang}{ayang4@nd.edu}
  \icmlcorrespondingauthor{Pascal Bergsträßer}{bergstraesser@cs.uni-kl.de}
  \icmlcorrespondingauthor{David Chiang}{dchiang@nd.edu}
  \icmlcorrespondingauthor{Anthony Lin}{awlin@mpi-sws.org}

  \icmlkeywords{Length generalization, transformers, RASP}

  \vskip 0.3in
]

\printAffiliationsAndNotice{}  %

\begin{abstract}
  Length generalization is a key property of a learning algorithm that enables it to make correct predictions on inputs of any length, given finite training data. 
  To provide such a guarantee, one needs to be able to compute a length generalization bound, beyond which the model is guaranteed to generalize. 
  This paper concerns the open problem of the computability of such generalization bounds for \CRASP{}, a class of languages which is closely linked to transformers. 
  A positive partial result was recently shown by \citeauthor{chen2025nonasymptotic}~for
  \CRASP{} with only one layer and, under some restrictions, also with two layers.  
  We provide complete answers to the above open problem. 
  Our main result is the non-existence of computable length generalization bounds for \CRASP{} (already with two layers) and hence for transformers.
  To complement this, we provide a computable bound for %
  the positive fragment of \CRASP{}, which we show equivalent to fixed-precision transformers. 
  For both positive \CRASP{} and fixed-precision transformers, we show that the length complexity is exponential, and prove optimality of the bounds.
\end{abstract}

\section{Introduction}
The past few years have witnessed intensive research efforts to provide theoretically sound analyses of transformers \citep[e.g.][]{transformers-survey}. 
While initial efforts focused primarily on transformer expressivity, more recent efforts \cite{varre2025learning,pmlr-v291-wang25a,chen2025nonasymptotic,huang2024formalframeworkunderstandinglength} have delineated 
what is
efficiently learnable by transformers, particularly in terms of \emph{length generalization}. 
Intuitively, length generalization is when a model can be trained to accurately process strings of \emph{any length}, given only a finite training sample of strings with \emph{bounded length}.
Understanding when transformers can length-generalize has important practical implications. 
On the one hand, computational and data constraints limit the sequence lengths that are seen during training; on the other hand, processing large contexts and reasoning over long chains-of-thought create a need for strong capabilities even on long inputs.

Empirical studies have found that length generalization in transformers can vary from problem to problem \citep{zhou2024what} through mechanisms that remain unclear. 
Theoretical work has begun to provide some characterizations of the formal languages which transformers length-generalize on.
These characterizations are largely based on the expressivity of $\mathsf{RASP}$, a programming language which was designed to capture the expressive power of transformers \citep{pmlr-v139-weiss21a}.
\Citet{zhou2024what} conjectured that transformers would length-generalize on a language if and only if it had a short solution in the $\mathsf{RASP}$ variant $\mathsf{RASP\text{-}L}$, and provided empirical evidence to support this claim. 
Later, \citet{huang2024formalframeworkunderstandinglength} formalized this conjecture by proving that transformers are guaranteed to length-generalize under an idealized inference procedure for exactly those languages which are definable in $\CRASP$, a $\mathsf{RASP}$ variant expressively equivalent to transformers with fixed precision outside of attention \citep{yang2025kneedeep}. 
However, as pointed out by \citet{chen2025nonasymptotic}, the results of \citeauthor{huang2024formalframeworkunderstandinglength}~only guarantee learnability \emph{in the limit} \citep{Gold67}, that is, without providing any quantitative bounds on the training resources needed to learn a given language.  

For an example of how quantitative bounds lead to real-world impacts, we can look at scaling laws for language models. 
Language models are known to reliably follow scaling laws, which quantitatively describe how the test loss of a model relates to the size of the model and the training data \citep{kaplan2020scalinglawsneurallanguage}.
These laws can be used to derive the optimal model size and dataset size to minimize loss, given a fixed compute budget \citep{hoffmann2022an}. 
While these scaling laws may effectively predict loss, they fail to predict length generalization.
A growing body of evidence shows that length generalization is often wholly independent of conventional scaling laws; in many cases, increasing the model size or number of training examples are not the critical factors which determine length generalization \citep{anil2022exploring,nye2022show}.
As an example, \citet{nogueira2021investigatinglimitationstransformerssimple} find that models ranging from 50M to 3B parameters could not add 20-digit numbers when trained on addition of 15-digit numbers. 
At the same time, length generalization could be observed after training on 30 digits and testing on 60 digits, but increasing the amount of data past a critical threshold did not make much of a difference. 
A distinct theoretical framework is thus needed to provide quantitative guarantees for length generalization in transformers.

To provide a framework for these formal guarantees, \citet{chen2025nonasymptotic}
formulated the notion of \emph{non-asymptotic length generalization}, which asks
for a \emph{computable} bound $N$ 
such that a learning algorithm only requires labeled training data consisting of 
strings of length up to $N$ in order to correctly classify any test data containing strings of length greater than $N$. 
Such a learner is guaranteed to terminate, since $N$ is computable and there are finitely many strings of length up to $N$.
This notion of non-asymptotic length generalization is connected to the classical framework of \emph{exact learning} \citep{Angluin87} with only membership queries, as well as learning minimal hypotheses, which was already studied by Gold for finite automata \citep{Gold78}. 
In fact, \citeauthor{chen2025nonasymptotic}~show that non-asymptotic length generalization is equivalent to finite identification \citep[p.~457]{Gold67} when the latter is allowed to take into account the complexity of the ground-truth hypothesis.

We adopt this general learning framework in order to provide length generalization guarantees while abstracting away from the training dynamics of specific gradient-based algorithms, which are notoriously difficult to analyze. 
This framework assumes the best-case scenario of having access to all possible examples up to a specified length and optimizing perfectly on those examples. 
Thus, if length generalization is proven to be difficult under these favorable conditions, then it should be even harder for SGD on limited data (as in practice). 
In short, we have proven a bound that would still apply even if learning dynamics were taken into account.

To study non-asymptotic length generalization in transformers, \citet{chen2025nonasymptotic} proved computable length generalization bounds for $\CRASP$ with one layer and $\CRASP$ with two layers and additional
restrictions (see \cref{app:nalg_crasp_definitions}). 
These bounds are
(respectively) polynomial and exponential in the maximum absolute value of
constants appearing in the $\CRASP$ program. 
For example, using this, one can surmise that learning the $\mathsf{MAJORITY}$ language $\{ w \in \{a,b\}^* : \text{$w$ has more $a$'s than $b$'s} \}$
using $\CRASP$ programs of depth~$1$ requires only strings up to length quadratic in the absolute value of constants. 
Similarly, \citet{izzo2025quantitative} %
established bounds on the length at which length generalization occurs in one- and two-layer transformers with some additional modifications (see \cref{sec:izzo}). %
These results left open whether general $\CRASP$ programs and transformers with more than two layers also admit non-asymptotic length generalization.

\paragraph{Contributions.} The main contribution of this paper is to answer the open problem of the non-asymptotic length generalization of transformers and \CRASP{} in the negative. 

\begin{theorem}[Informal version of \cref{cor:crasp_undecidable_complexity}]
    There is no terminating algorithm for perfectly learning a $\CRASP$ program $P$ (given an upper bound on the size of $P$), even if $P$ only has depth two. Thus, no such algorithm exists for transformers of depth two or beyond. 
\end{theorem}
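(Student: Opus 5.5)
We prove the statement by reduction from an undecidable problem, with Hilbert's tenth problem as the source. The informal theorem says no terminating learner exists. That is equivalent to saying there is no computable function $N(c)$ such that any two \CRASP{} programs of size at most $c$ that agree on all strings of length at most $N(c)$ agree everywhere. Indeed, such an $N$ would give a terminating learner: enumerate all programs of size at most $c$ and return one consistent with the labeled data up to length $N(c)$. Conversely, a terminating learner would yield such a bound, namely the maximal length of strings it queries on inputs of size at most $c$. So it suffices to show that no computable length generalization bound exists, already for depth two.

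The plan is as follows. For every Diophantine polynomial $p(x_1,\dots,x_k)$, we construct effectively a depth-two \CRASP{} program $P_p$, of size polynomial in the size of $p$, over an alphabet $\{a_1,\dots,a_k,\dots\}$. This program accepts a string $w$ iff $w$ encodes a tuple $(n_1,\dots,n_k)$ with $p(n_1,\dots,n_k)=0$. We also take a trivial program $P_\emptyset$ that rejects everything. Then $P_p \equiv P_\emptyset$ iff $p$ has no natural root. If a computable bound $N$ existed, we could decide this by checking all strings of length up to $N(|P_p|)$. That would decide Hilbert's tenth problem, which is impossible by the MRDP theorem.

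The key step is making depth two sufficient to check a polynomial equation, since \CRASP{} counts are linear. The idea is to encode multiplication via prefix counts. At depth one, the program can compute $\#a_i$ up to each position and test a linear (Presburger-style) condition, for instance whether a position lies in a block. At depth two, it can count positions where a depth-one predicate holds, which yields sums of counts, i.e.\ products. Concretely, we write $p$ as a system of equations $x_i = x_j x_l$ and linear equations, using auxiliary variables. A product $x_j x_l$ can be certified by a string structure in which, for each occurrence of $a_j$, the number of preceding $a_l$'s is fixed. We could encode it as $(a_j b^{n_l})^{n_j}$, with well-formedness checked at depth one by comparing counts. An alternative is to count pairs through a depth-two count of positions where a depth-one count comparison holds. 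I would first try to encode each product $x\cdot y$ as a string segment whose length, or some symbol count, equals $xy$. Its regularity (all blocks equal length) would be checked by a depth-one predicate: at each separator, the count of $b$'s since the segment start equals the block index times $y$. Since that is itself a product, the check must be reformulated as "the number of $b$'s so far equals the number of $b$'s in the first block plus those in subsequent blocks", using counts of separators.

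The main obstacle is exactly this: verifying the block-regularity of the witness string with only linear comparisons of prefix counts at depth one, plus one outer aggregation at depth two. I would try to use the trick of comparing each block to its predecessor via a difference of counts. A depth-one formula can say that position $i$ is a separator and that the count of $b$'s equals $y$ times the count of separators. This is not linear, but it can instead be phrased as a depth-two universal check, "no separator violates consistency with the previous one", using a $\countl$ of violating positions being $0$. Once this gadget works for a single multiplication, conjunctions of such constraints, combined with the final linear equations, remain at depth two, completing the reduction. The extension to transformers then follows from the known simulation of \CRASP{} by transformers, stated earlier in the paper.
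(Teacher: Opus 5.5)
\textbf{Genuine gap: the multiplication gadget.} Your high-level route is the paper's. You reduce Hilbert's tenth problem to emptiness (equivalently, to equivalence with the empty program), split the polynomial into elementary equations, and realize each product as a symbol count in a rigidly shaped string checked by depth-two universal constraints. You then transfer to transformers via the known equivalence. Your argument that a terminating learner amounts to a computable length bound is also fine.

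The gap is the multiplication gadget, which carries the whole reduction: you do not actually construct it, and the fix you propose does not work. A check $\countl[\mathrm{Bad}]=0$ (i.e.\ $\Hsop\lnot\mathrm{Bad}$) has depth two only if $\mathrm{Bad}$ has depth one. A depth-one formula is merely a Boolean combination of a test on the current symbol and linear comparisons of the prefix counts $\countl\sigma$ at the current position. With your encoding $(a_j b^{n_l})^{n_j}$ and a single block symbol $b$, ``this block is as long as the previous one'' refers to the value of $\countl b$ at the previous separator, which no depth-one formula can access. The global form ($\countl b$ equals $n_l$ times the number of separators seen) is nonlinear, as you note. Putting the check under a depth-two count does not by itself help. $\countl$ counts positions satisfying a unary predicate, so its value never exceeds $|w|$, and it does not yield ``sums of counts, i.e.\ products.'' The product must be laid out in the string, and the logic can only verify linear invariants of prefixes.

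\textbf{The missing idea: a reset.} What you need is a reset that turns block regularity into such a linear invariant. The paper uses $x^n(z^n y\,\zbar^n\ybar)^m$: a reference block $x^n$, then blocks alternating between $z$'s and $\zbar$'s, closed by markers $y$ and $\ybar$. The constraints are:
\begin{itemize}
\item Strict alternation of the markers is $0\le\countl y-\countl\ybar\le 1$ at every position, plus equality at the end.
\item The ordering constraints are guarded linear conditions: $x$ only while $\countl y=\countl z=0$, $z$ only when $\countl y=\countl\ybar$, and $\zbar$ only when $\countl y=\countl\ybar+1$.
\item Regularity becomes $\countl x=\countl z-\countl\zbar$ at every $y$ and $\countl z=\countl\zbar$ at every $\ybar$.
\end{itemize}
Each of these is depth one and is wrapped in $\Hsop$ to give depth two. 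Together they force $|w|_z=|w|_x\cdot|w|_y$.

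\textbf{A smaller omission: combining gadgets.} You do not say how several gadgets that share variables coexist. Each gadget fixes a global string shape, so they cannot simply be conjoined on one string. The paper places them in $\$$-separated segments, relativizes all counts to each segment, and equates the per-segment counts of each variable. Alternatively, you could give each gadget its own copy of the alphabet and add equalities $\countl x^{(i)}=\countl x^{(j)}$.
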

The consequence for transformers is a corollary of the depth-preserving equivalence between \CRASP{} and transformers 
\citep{yang2025kneedeep}.
In particular, the lengths of strings necessary for length generalization must
grow faster than any computable function.

\OMIT{
computable, that is, larger than $\Omega(\exp(\text{BB}(|P|)))$ (exponential in the busy beaver function \citep{rado1962on}).\footnote{Proof sketch: \Citet{jones1984register} encode register machines as Diophantine equations, where the size of the solution scales exponentially with the runtime of the (halting) machine.
From this, we can derive that the length complexity of any class of $\CRASP$ programs must grow at least exponentially in the runtime of halting Turing machines of that size.
}

}

To complement this uncomputability result, our secondary contribution is to provide a tight computable (in fact, \emph{exponential}) length generalization bound for the positive fragment $\CRASPpos$ of $\CRASP$, which is expressively equivalent to fixed-precision transformers. This is a natural restriction \citep{li2025characterizing} since real-world transformers are implemented on a device that supports only a fixed finite precision (i.e. floating-point computation).

\begin{theorem}[Informal version of \cref{cor:exp-len-comp}]
To perfectly learn a $\CRASPpos$ program $P$ (given an upper bound on the size of $P$), it is sufficient to train on strings with length exponential in the size of $P$, and, in the worst case, it is necessary to see at least one string of exponential length. The same bound holds for a %
transformer $\tfr$.
\end{theorem}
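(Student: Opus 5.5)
Since this is the informal version of \cref{cor:exp-len-comp}, the plan splits it into an upper bound (sufficiency) and a lower bound (necessity), proves both for $\CRASPpos$, and then transfers them to fixed-precision transformers.

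\textbf{Reduction to an indistinguishability bound.} Following \citet{chen2025nonasymptotic}, non-asymptotic length generalization for a hypothesis class reduces to a bound on how long a separating string must be. Suppose that for every two programs $P_1,P_2$ of size at most $n$ that define different languages, there is a string of length at most $N(n)$ on which they disagree. Then a learner that outputs any minimal-size program consistent with all labeled strings up to length $N(n)$ identifies the target language exactly. So the sufficiency half amounts to an exponential bound on the shortest word in the symmetric difference of two $\CRASPpos$ languages.

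\textbf{Upper bound via automata.} In $\CRASPpos$ every count is compared only against constants, never against other counts, so each count can be tracked up to a threshold. Concretely, I would show by induction on depth that every $\CRASPpos$ program $P$ is recognized by a DFA whose states record the truncated values $\min(\countl\,\phi, c+1)$ for each subformula, where $c$ is the largest constant in $P$. I would also like the resulting automaton to be small: either partially ordered or counter-free, with a number of states exponential in $|P|$. The product automaton for $P_1 \triangle P_2$ is then also exponential in size. A shortest accepted word in it is at most its number of states, which gives $N(n)=2^{O(n)}$. One point needs care: constants are written in binary, so a threshold $c$ contributes only $O(\log c)$ to the size but $c$ to the state count. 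I would keep the count to $\prod_i (c_i+1)$, which is $2^{O(|P|)}$. For transformers, I would use the equivalence between fixed-precision transformers and $\CRASPpos$ claimed in the abstract. I would check that this translation is polynomial in size, so that the exponential bound carries over.

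\textbf{Lower bound.} I would build a family of $\CRASPpos$ programs $P_k$ of size $O(k)$ whose language contains only words of length at least $2^k$, together with the empty-language program. Any learner must then see a string of length at least $2^k$ to tell the two apart. A natural first attempt uses a single binary constant: $\countl[a] \ge 2^k$ has size $O(k)$. If the size measure is unary, I would instead stack layers, letting each layer count positions where the previous predicate has just flipped, so that nesting doubles the required length. I would then translate $P_k$ into a fixed-precision transformer of polynomial size.

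\textbf{Main obstacle.} I expect the hardest step to be the small-automaton construction with a tight exponential bound under the paper's size measure. With nested counts, a naive product of counter ranges across layers might blow up to doubly exponential. To prevent this, I would argue that for each subformula only its truncated counter value matters, and that these combine multiplicatively rather than by iterated exponentiation.
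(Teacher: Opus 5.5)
For $\CRASPpos$ itself your argument is sound, but it takes a different route from the paper. The paper proceeds in three steps:
\begin{itemize}
\item it decomposes every comparison into thresholds $\countl[\phi]\ge c$ (\cref{lem:decomposition});
\item it unfolds each threshold into $c$ nested $\Psop$'s, giving a $\TLP$ program of size $O(n\,2^{\poly(p,g,d)})$ (\cref{thm:crasppos_to_tlh});
\item it invokes the polynomial small-model property of $\TLP$ (\cref{lem:tl-model}).
\end{itemize}
You instead build a truncated-counter DFA directly and bound the shortest word of the product automaton. This is more elementary and self-contained.

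Two small remarks on that part. First, the nesting you flag as the main obstacle is harmless: the state is one flat vector of truncated counters, updated bottom-up within a single step. Second, to really get $2^{O(|P|)}$ states, keep one truncated sum $\min(\sum_t\alpha_t\countl[\phi_t],k)$ per comparison. This is legitimate because all coefficients are nonnegative. One counter per count term against a shared threshold $k$ costs $(k+1)^g$ per comparison instead. Your lower bound is essentially the paper's ($\countl[a]=n$ with $n$ in binary).

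The genuine gap is the upper bound for \frtfrs. Your plan is to ``check that'' the transformer-to-$\CRASPpos$ translation is polynomial, but no such translation is known. The one in \cref{app:transformer_to_logic} has size $O(|\Sigma|Lp^2d\,2^{p^3d})$, because it enumerates all vectors of $\F^d$, and the paper explicitly leaves a polynomial translation open. Feeding an exponential-size program into your construction gives a product $\prod_s(k_s+1)$ over exponentially many comparisons. That is a doubly exponential number of states, and hence only a doubly exponential witness bound.

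The paper's route is designed for exactly this. The translation blows up only the size, while keeping depth $O(L)$, precision $O(p)$ and girth $O(p)$. The paper pairs this with a witness bound that is polynomial in program size and exponential only in $p,g,d$: the $\TLP$ blowup is $O(n\,2^{\poly(p,g,d)})$, and $\TLP$ has polynomial models.

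You can get the analogous bound inside your framework by replacing ``number of states'' with a monotonicity argument:
\begin{itemize}
\item the truncated sums never decrease, so along a run the state strictly changes at most $\sum_s k_s$ times;
\item any non-final position at which the state does not change can be deleted without altering the rest of the run or the output.
\end{itemize}
Hence a shortest witness has length at most $\sum_s k_s+1\le |P|\,2^{\pprec{P}}+1$. This stays singly exponential after the transformer translation. Your transformer lower bound, via the polynomial translation of \cref{thm:crasppos_to_transformers}, is fine.
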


\paragraph{Overview.}
To show the non-existence of computable bounds for length generalization in $\CRASP$ and transformers, we use the fact that non-asymptotic length generalization is equivalent to the decidability of the language equivalence problem for any finite class of languages \citep[Lemma~3.4, Theorem~3.2]{chen2025nonasymptotic}.
We note that, for $\CRASP$-definable languages,  decidability of language equivalence is equivalent to decidability of nonemptiness, and 
we prove undecidability of emptiness of $\CRASP$-definable languages via a reduction from the undecidability of Hilbert's 10th problem \citep{hilbert2019mathematical}.

To show exponential bounds for length generalization in $\CRASPpos$ and the equivalent class of \frtfrs{}, we show how $\CRASPpos$ is equivalent to the unary temporal logic $\TLP$ with only the (strict) past operator,
and therefore defines the same class of languages as \frtfrs{} \citep{li2025characterizing}. 
This translation into $\TLP$ incurs an exponential blow-up.
Then, since each satisfiable formula $\varphi$ in $\TLP$ has a witnessing string of length polynomial in $|\varphi|$, we obtain an exponential length generalization bound for $\CRASPpos$. 
We also show that this exponential bound is tight in the worst case.

We note that previous work has also investigated the complexity of emptiness checking for different transformer variants (though these results do not imply ours). 
\Citet{salzer2025transformer} showed the first undecidability result for log-precision transformer encoders with average-hard attention layers. 
Later, \citet{salzer2025counting} established undecidability of language emptiness (and therefore of equivalence) for uniform-attention transformers.
In general, $\CRASP$ is a more restricted class than all of the above; thus, our undecidability result is stronger than previous results. 
For the case of unique-hard attention transformers --- which are able to
express LTL (i.e. order-sensitive) properties \cite{BKLP24,YCA24} ---
\citet{bergstrasser2025transformers} showed that emptiness checking is $\mathsf{EXPSPACE}$-complete.

\OMIT{
When a model generalizes from to short inputs seen at training time to long inputs at inference time, we say it exhibits length generalization.
In the case of transformer-based language models, studies have found that the length generalization capabilities of transformers may vary significantly from task to task \citep{anil2022exploring,kazemnejad2023the}.
Understanding when length generalization occurs is important due to the demands placed upon language models; computational and data constraints limit the sequence lengths seen during training, while processing large contexts and reasoning over long chains-of-thought create a need for performance on long inputs. 

Theoretical work provided characterizations of which tasks transformers can and cannot length generalize on.
\citet{zhou2024what} conjectured that transformers would length generalize on a task if and only if it had a short solution in $\mathsf{RASP}$-l, a programming language based on $\mathsf{RASP}$ \citep{pmlr-v139-weiss21a}.
Later, \citet{huang2024formalframeworkunderstandinglength} formalized this $\mathsf{RASP}$-l conjecture using $\CRASP$, showing that transformers would length generalize on tasks which had a $\CRASP$ program, under an idealized inference procedure. 
}

\section{Preliminaries}
\label{sec:prelim}

\subsection{Notation}
We write $\mathbb{N}$ for the 
set of natural numbers including $0$. We write $[n]$ for the set $\{1, \ldots, n\}$.
Let $\ialphabet$ be a finite alphabet. The set of all strings over $\ialphabet$
is denoted by $\ialphabet^*$. If $\str{w} \in \ialphabet^*$ and $\sigma \in \ialphabet$, we write 
$|\str{w}|_\sigma$ for the number of occurrences of $\sigma$ in $\str{w}$. We
write $|\str{w}|$ for the length of $\str{w}$ and $\str{w}_i$ for the symbol at position $i \in [|\str{w}|]$. By $\ialphabet^{\leq n}$ with $n
\in \N$, we mean the set of all strings $w \in \ialphabet^*$ of length at most
$n$.

\subsection{$\CRASP$ (Counting $\mathsf{RASP}$)}
We define the syntax and semantics of $\CRASP$ (akin to the logics of \citet{yang2024counting,BKLP24}). In the sequel, we will define
other logics $\CRASPpos$ and $\TLP$ as fragments of $\CRASP$. 
\begin{definition}
\label{def:TLC_semantics}
The syntax of $\CRASP$ formulas is defined:
    \begin{align*}
        \phi & \bnfto \sympred\sigma \mid \Psop \phi \mid \Hsop\phi \mid \lnot \phi_1 \mid \phi_1\land \phi_2 \mid \sum _{t\in \mathcal{T}} \alpha_t t \sim k \\
        t & \bnfto \countl[\phi_1] %
        \mid c &&& 
    \end{align*} 
    where $\sympred\sigma\in\Sigma$, $\alpha_i,k,c\in\Z$ and $\mathord\sim\in\{<,\leq, =,\geq,>\}$.
    The semantics of formulas is defined as follows:
    \begin{subequations}
        \let\origiff\iff
        \renewcommand{\iff}{\hspace{\tabcolsep}\origiff\hspace{\tabcolsep}}
        \begin{alignat*}{2}
            &\str{w},i \models \sympred\sigma &\iff &\text{$\str{w}_i = \sigma$} \\
            &\str{w},i \models \lnot \phi & \iff &\str{w},i\not\models \phi\\
            &\str{w},i \models \phi_1\land \phi_2 &\iff &\text{$\str{w},i \models \phi_1$ and $\str{w},i \models \phi_2$} \\
            &\str{w}, i \models \Psop\phi &\iff& \text{$\str{w}, j \models \phi$ for some $j<i$}\\
            &\str{w}, i \models \Hsop\phi &\iff& \text{$\str{w}, j \models \phi$ for all $j\leq i$}\\
            &\str{w},i \models \sum _{t\in \mathcal{T}} \alpha_t t \sim k &\iff &\sum_{t\in \mathcal{T}} \alpha_t t^{\str{w},i} \sim k.
        \end{alignat*}
    \end{subequations}
    The semantics of terms is defined as follows:
    \begin{subequations}
        \begin{align*}
            \countl[\phi]^{\str{w},i} &= |\{j\in[1,i] \mid \str{w},j \models \phi\}| \\
            c^{\str{w},i} &= c.
        \end{align*}
    \end{subequations}
    We write $\str{w} \models \phi$ if $\str{w}, |\str{w}| \models \phi$, and we say that $\phi$ defines the language $\lang(\phi) = \{\str{w} \mid \str{w} \models \phi\}$.
\end{definition}
First, $\Psop \phi$ (``previously'') can be viewed as syntactic sugar for $\countl[\phi]\geq 2 \lor (\lnot \phi \land \countl[\phi]\geq 1)$; it is true at position $i$ iff $\phi$ is true for some position $j< i$.
Secondly, $\Hsop \phi$ (``historically'') can be viewed as syntactic sugar for $\countl[\neg\phi] = 0$; it is true at position $i$ iff $\phi$ is true at all positions $j\le i$.
Finally, $\Hsop \phi$ is also equivalent to $\phi\land \lnot(\Psop\lnot\phi)$, noting the strictness of $\Psop$ and the non-strictness of $\Hsop$. 

In the sequel, we will use a \emph{DAG} (directed acyclic graph) representation of $\CRASP$ formulas,
where a subformula $\varphi$ may be used \emph{multiple times} in a formula. 
Such a formula can be thought of as a straight-line \emph{program}, i.e.,
a sequence $\varphi = (\varphi_i)_{i=1}^n$, where $\varphi_i$ is any $\CRASP$
definition that could refer to $\varphi_j$ with $j < i$. The \emph{size}
$|\varphi_i|$ of a definition is the number of symbols, 
where we assume constants to be encoded in binary and each reference to $\varphi_j$ for $j < i$ to be of size 1.
Then the size $|\varphi|$ of 
$\varphi$ is defined to be $\sum_{i=1}^n |\varphi_i|$. 
For
example, the formula $\bigwedge_{a \in \ialphabet} (\varphi \to \countl a \ge k)$ --- 
which says that if $\varphi$ is true, then every $a$ occurs in the (non-strict) past at least $k$ times
--- can be represented by a program of size $O(|\varphi| + |\ialphabet|\log(k))$.
Note that $|\varphi|$ is counted once (not $|\ialphabet|$ times).

\OMIT{
\DC{in the B-RASP paper we called them operations. here, we have called them declarations, lines, and steps. I suggest sticking with the original}
\AY{Will do}
\begin{definition}
    A $\CRASP$ \emph{declaration} is an expression of the form $P:=\phi$ where $\phi$ is a $\CRASP$ formula and $P$ is a program line name. 
    A $\CRASP$ \emph{program} $\prog$ is a sequence of $\CRASP$ declarations. \DC{confusing to use the same letter twice}
    The output of the program is derived by evaluating the semantics of each program line as $\CRASP$ formulas, line-by-line, and taking the output of the last line at the final position. 
\end{definition}
\AY{Would it be helpful to put a very formal definition in the appendix?}

We define measures of complexity for $\CRASP$ programs:
}

\begin{definition}
\label{def:TLC_depth}
The \emph{depth} of formulas and terms is defined:
\begin{align*}
\depth(\sympred\sigma)=\depth(c)  &= 0 \\
\depth(\neg \phi) &= \depth(\phi) \\
\depth(\phi_1 \land \phi_2) &= \max \{ \depth(\phi_1), \depth(\phi_2) \} \\
\depth(\countl[\phi])=\depth(\Psop \phi) =\depth(\Hsop \phi) &= \depth(\phi)+1\\
\depth\left(\sum _{t\in \mathcal{T}} \alpha_t t \sim k\right)&=\max_{t\in\mathcal{T}} \depth(t).
\end{align*}
We write $\CRASP_k$ for the set of all $\CRASP$ formulas with depth at most $k$. 
\end{definition}
The \emph{depth} of a $\CRASP$ program $P$ is $\max_{\phi\in P} \depth(\phi)$.
The \emph{precision} $\pprec{P}$ is the number of bits needed to encode its largest constant.
The \emph{girth} $\pgirth{P}$ is the maximum number of summands occurring in any sum of $P$.
\begin{example}\label{example:dyck}
First, define $\dyck_k$, the Dyck language of bounded depth $k$, as the language of strings with the following regular expression defined inductively:
    \begin{align*}
        \dyck_1&=(ab)^*\\
        \dyck_{k+1}&=(a\dyck_{k}b)^*
    \end{align*}
This is the language of balanced parentheses, using $a$ and $b$ instead of $($ and $)$, with a maximum nesting depth of $4$. Below is a $\CRASP$ program which recognizes $\dyck_4$:
\begin{tcolorbox}[title={Dyck of depth 4} ]
    \begin{calign}
        \phi_{\text{lower}} &:=  \countl a -\countl b \geq 0 & \\
        \phi_{\text{upper}} &:=  \countl a -\countl b \leq 4 & \\
        \phi_{\text{bounded}} & :=  \phi_{\text{lower}} \land \phi_{\text{upper}}& \\
        \phi_{\text{all\_bounded}} & := \Hsop \phi_{\text{bounded}} & \\
        \phi_{\text{balanced}} & :=  \countl a = \countl b & \\
        \phi_{\dyck_4} &:= \phi_{\text{all\_bounded}} \land \phi_{\text{balanced}} &  
    \end{calign}
\end{tcolorbox}
This program has depth $2$, girth $2$, and precision $2$.  
\end{example}

\subsection{Transformers}

Our results concern two variants of transformers in this paper, which round to finite-precision in different ways. 

\begin{definition}
    A $(p,q)$-precision transformer is one which uses $O(p)$ bits of precision outside of attention and $O(q)$ bits of precision inside of attention.
\end{definition}
Please see \cref{sec:transformer_definitions} for precise definitions. 

\subsection{Computational Learning Theory}
We discuss notions from computational learning theory (cf. \citealp[Chapter
1.2.2]{COLT-book}) instantiated to formal languages. 
Our learning algorithm learns a \emph{hypothesis} (a language $S 
\subseteq \ialphabet^*$).
Let $\hypclass$ be a set of possible hypotheses that can be learned. 
A \emph{representation scheme} $\Lang$ for $\hypclass$ is a
surjective partial function from strings in $\Gamma^*$ to hypotheses in 
$\hypclass$. If $\Lang(E) = w$, we say that $E$ \emph{represents} $w$.

This allows us to measure the \emph{size} of a hypothesis by the length of its shortest representation. The \emph{descriptional complexity}
of a hypothesis $L \in \hypclass$ with respect to $\Lang$ is the length of the shortest 
representation(s) for $L$, that is, $\min \{ |E| : E \in \Gamma^*, \Lang(E) = L \}$.

\OMIT{
In our setting, $\hypclass$ will be taken as the set of $\CRASP$-definable languages, $\Gamma^*$ to be the set of all $\CRASP$ formulas, $|E|$ for $E\in\Gamma^*$ to be the size of a formula, and $\Lang(E)$ defined as the language recognized by a formula.
}

\subsection{Non-Asymptotic Length Generalization}

Suppose we want to learn a hypothesis $L$, and we know that $L$ has descriptional complexity (with respect to $\Lang$) at most $n$. \emph{Up to what string length} do we need to see training strings, so that we can learn a representation $E$ with $\Lang(E) = L$ of size at most $n$?

The notion of \emph{length complexity} gives a way to answer this question.
\begin{definition}[Length complexity] 
    Given a hypothesis class $\hypclass$ and a representation scheme $\Lang$ for
    $\hypclass$, the \emph{length complexity} of $\hypclass$ with respect to
    $\Lang$ is the minimal function $f_\Lang \colon \N \to \N$ such that 
    for any two hypotheses
    with descriptional complexity (with respect to $\Lang$) at most $\complexity$, there is a string of length at most $f_\Lang(\complexity)$ that distinguishes them.
    That~is, 
    \[
            f_\Lang(\complexity) = \max_{\substack{E,E' \in \Gamma^{\le \complexity} \\ \Lang(E) \setminus \Lang(E') \ne \emptyset}} \min\{ |w| : w \in \Lang(E) \setminus \Lang(E') \}.
    \]
\end{definition}
If $f_\Lang(\complexity)$ has a \emph{computable} upper bound on the maximum string length, then it is possible in principle to learn any language $L \in \hypclass$ perfectly, in the following way:
\begin{enumerate}
\item Receive the maximum descriptional complexity $\complexity \in \N$.
\item\label{item:bound} Compute a maximum string length $N \in \N$.
\item\label{item:membership} Receive training data $T = \{\str{w} \in L : |\str{w}| \le N \}$.
\item\label{item:learner} Output representation $E \in \Gamma^*$ such that $|E|
    \le n$, $\Lang(E) \cap \ialphabet^{\leq n} = T$, and $\Lang(E) = L$.
\end{enumerate}
Step \ref{item:membership} is computable because of our assumption that membership is
decidable for $\Lang$.
The learner (step \ref{item:learner}) works, in principle, by enumerating all
possible hypotheses (as there are only finitely many of them with descriptional
complexity at most $n$) and checking each one against the training data.
Assuming that the true hypothesis 
has descriptional complexity (with respect to
$\Lang$) at most $\complexity$, length generalization ensures the uniqueness of 
$\Lang(E)$.
This definition is akin to the problem of finding a minimum representation in computational learning theory \citep[e.g.,][]{COLT-book}.

Therefore, whether the length complexity of $\hypclass$ (with respect to $\Lang$) can be computably bounded (step \ref{item:bound}) is an indication of whether learning length generalization is possible.
This was shown by \citet{chen2025nonasymptotic} to be equivalent to decidability of language equivalence.
\begin{proposition}[{\citealt[Lemma~3.4]{chen2025nonasymptotic}}]
    For any hypothesis class $\hypclass$ over formal languages and
    representation scheme $\Lang: \Gamma^* \to \hypclass$ with decidable
    membership problem, there is a computable upper bound on length complexity 
    for $\hypclass$ with respect to $\Lang$ iff language equivalence (that is, given $E,E' \in \Gamma^*$,
    check whether $\Lang(E) = \Lang(E')$) is decidable.
\end{proposition}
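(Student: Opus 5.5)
Both directions go through the decidable membership problem and the finiteness of $\Gamma^{\le \complexity}$. We also use the standard convention that the descriptional complexity of $\Lang(E)$ is at most $|E|$.

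\emph{($\Leftarrow$)} Suppose equivalence is decidable. Given $\complexity$, enumerate the finitely many pairs $E,E' \in \Gamma^{\le\complexity}$, ignoring those where $E$ is not in the domain of $\Lang$; we assume the domain is decidable, or, as usual, that $\Lang$ is total on well-formed strings.
\begin{itemize}
\item For each pair, decide whether $\Lang(E)=\Lang(E')$.
\item If the languages differ, then $\Lang(E)\setminus\Lang(E')$ or $\Lang(E')\setminus\Lang(E)$ is nonempty. Enumerate strings $w$ in order of length, using decidable membership to test $w\in\Lang(E)\triangle\Lang(E')$. This search terminates because a witness exists, and it returns a shortest distinguishing string for the pair.
\item For a pair with $\Lang(E)\setminus\Lang(E')\neq\emptyset$, extract the minimum length of a string in that one-sided difference by the same search restricted to it; it also terminates.
\end{itemize}
Taking the maximum over all pairs computes $f_\Lang(\complexity)$ exactly, so it is in particular a computable upper bound.

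\emph{($\Rightarrow$)} Suppose $g$ is computable with $f_\Lang\le g$. Given $E,E'$, set $\complexity=\max(|E|,|E'|)$ and compute $N=g(\complexity)$. Using decidable membership, check whether some $w\in\ialphabet^{\le N}$ lies in $\Lang(E)\triangle\Lang(E')$.
\begin{itemize}
\item If such a $w$ is found, the languages differ.
\item If none is found, the languages are equal. Otherwise, say $\Lang(E)\setminus\Lang(E')\neq\emptyset$ (the other case is symmetric). By the definition of $f_\Lang$, applied to this pair whose complexities are at most $\complexity$, there is a witness of length at most $f_\Lang(\complexity)\le N$. This contradicts the outcome of the check.
\end{itemize}
Hence equivalence is decidable.

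The only delicate point is bookkeeping. The maximum in the definition of $f_\Lang$ ranges over representations $E$ of size at most $\complexity$, which matches the bound on descriptional complexity. In the $\Leftarrow$ direction, the unbounded search terminates only because we run it after equivalence has certified that a witness exists. I expect no real obstacle beyond making that ordering explicit.
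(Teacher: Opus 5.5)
Your ($\Rightarrow$) direction is correct. The ($\Leftarrow$) direction, however, has a genuine gap in its third step, and it is not a bookkeeping issue. The paper's formula for $f_\Lang$ maximizes, over \emph{ordered} pairs, the length of a shortest string in the \emph{one-sided} difference $\Lang(E)\setminus\Lang(E')$. To compute or bound that quantity, you must know for each ordered pair whether $\Lang(E)\not\subseteq\Lang(E')$. An equivalence oracle does not tell you this: when $\Lang(E)\neq\Lang(E')$ it only certifies that the symmetric difference is nonempty.

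Suppose your search finds a shortest $w\in\Lang(E)\triangle\Lang(E')$, say with $w\in\Lang(E)\setminus\Lang(E')$. The other side $\Lang(E')\setminus\Lang(E)$ may then be empty, in which case the ``search restricted to it'' never halts and you cannot detect this. Or it may be nonempty with a much longer shortest witness, which does count toward $f_\Lang(\complexity)$ and which you have no way to bound. Your closing remark fails at exactly this point: equivalence certifies a witness in the symmetric difference, not in the particular one-sided difference you then search.

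Under the one-sided formula this cannot be repaired. Run your own ($\Rightarrow$) argument on $\Lang(E)\setminus\Lang(E')$ alone: it shows that a computable bound on that $f_\Lang$ decides \emph{inclusion}. Yet there are schemes with decidable membership and equivalence but undecidable inclusion, e.g.\ deterministic pushdown automata. Their equivalence is decidable by S\'enizergues, but inclusion is not, since DCFLs are closed under complement and emptiness of the intersection of two DCFLs is undecidable.

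The fix is to read length complexity symmetrically, as the paper's prose does (``a string \dots\ that distinguishes them''). That is, take the maximum over pairs with $\Lang(E)\neq\Lang(E')$ of the length of a shortest $w\in\Lang(E)\triangle\Lang(E')$. Your first two steps compute this exactly, and your ($\Rightarrow$) argument goes through unchanged. The paper itself gives no proof here, only the citation. For \CRASP{}, which is effectively closed under Boolean operations, inclusion reduces to equivalence, so the discrepancy does not affect the paper's application.
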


In our setting, $\hypclass$ will always be a set of formal languages $L \subseteq
\ialphabet^*$. 
In particular, we take $\hypclass$ to be various subset of
formal languages expressible by transformers, i.e., $\CRASP$-definable languages.
We assume a representation scheme $\Lang$ such that checking $w \in
\Lang(E)$, for any given string $w \in \ialphabet^*$, is decidable.
In particular, this is the case if we use $\CRASP$ or any equivalent representation of $\CRASP$, for example, \rtfrs{}.

\section{Undecidability of $\CRASP$}\label{sec:undecidable}

Our main result establishes the undecidability of language emptiness for $\CRASP$, thus answering an open question of \citet{chen2025nonasymptotic}.

\begin{theorem} \label{thm:undecidable}
It is undecidable whether a given \CRASP{} program defines the empty language~$\emptyset$ or not.
\end{theorem}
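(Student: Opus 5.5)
We reduce from Hilbert's 10th problem. We use the variant in which one decides whether a polynomial equation $p(x_1,\dots,x_m)=q(x_1,\dots,x_m)$ with coefficients in $\N$ has a solution in $\N^m$; this variant is undecidable (Matiyasevich). Given such an equation, we construct a \CRASP{} program $\varphi$ of depth two such that $\lang(\varphi)\neq\emptyset$ iff the equation has a solution.

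\textbf{Step 1: encode the variables as counts.} Linear constraints on letter counts come for free in \CRASP{}, since $\countl[\sigma]$ at the last position equals $|w|_\sigma$. The difficulty is multiplication, and we handle it with a structural encoding. Take one letter $a_i$ per variable $x_i$ and one letter $b_M$ per monomial $M$ that occurs. To check a product $z = x\cdot y$, we use a depth-one guard that forces the string into a block shape $(a^{x}\,c)^{y}$ or a similar pattern. The counts of the symbols inside the blocks then realise the product.

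\textbf{Step 2: check multiplication with depth two.} The core gadget states that every block has the same length. For example, we force $w \in (a^+ c)^+$ and require that at every $c$-position $i$,
\[
\countl[a]^{w,i} \;=\; \countl[c]^{w,i}\cdot x .
\]
This constraint is not linear, so we rephrase it. Define the depth-one formula
\[
\psi := c \land \bigl(\countl[a] - \countl[c]\cdot ?\bigr)\ldots
\]
In practice we compare each block with the first block, or with a companion letter. We interleave a second letter $d$ such that each block is $a^{k} d^{k}$. Inside the block, position $i$ checks that $\countl[a] - \countl[d]$ returns to $0$ at every separator. This gives equal-length blocks of $a$ and $d$ only up to the block start, so we also need per-block uniformity. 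For that we use a running difference, $\countl[a]-x_{\text{first}}\cdot\countl[c]$, where the block size is tracked by yet another letter whose count, at each separator, is affine in the counts.

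A cleaner option, which we plan to try first, is to encode squares instead of general products, using $xy = \tfrac12\bigl((x+y)^2-x^2-y^2\bigr)$. Squares come from strings of the shape $a\,b^{1}a\,b^{3}a\,b^{5}\cdots$, i.e. $\prod_k a\, b^{2k-1}$. The key property is that at each $a$-position, $\countl[b] = (\countl[a]-1)^2$. This can be enforced by depth two. A depth-one formula marks the positions that immediately follow an $a$, which is done via $\Psop$ and counts. A $\Hsop$ then asserts that between consecutive $a$'s the block length increases by exactly $2$, expressed as a linear relation between the current counts and the counts at the previous marker. That relation in turn needs an auxiliary letter recording the previous block length, arranged as duplicated blocks $b^{k}e^{k}$ so that the comparison becomes linear: $\countl[e]$ after the block equals $\countl[b]$ before the block plus a correction.

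\textbf{Step 3: glue the parts together.} Each variable and each needed square gets its own segment over a fresh sub-alphabet. Depth-one formulas fix the segment order, in the style of a regular pattern checkable via $\Hsop$ over letter-precedence conditions. Finally, the polynomial identity becomes a linear equation over the final letter counts at the last position.

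\textbf{Main obstacle.} We expect the hardest step to be making the squaring gadget sound within depth two. The non-strict counting of $\countl$ only sees prefixes, so any condition of the form ``block $k+1$ is two longer than block $k$'' must be expressed linearly in prefix counts at a single position. We intend to resolve this with the duplicated-letter trick, so that a $\Hsop$ over a linear depth-one condition suffices.
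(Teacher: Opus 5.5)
\textbf{The proposal has a genuine gap: the multiplication (or squaring) gadget is never built.}

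Your overall plan matches the paper's: reduce from Hilbert's tenth problem, let letter counts carry the values, get linear constraints for free, and realize multiplication by a block structure enforced with $\Hsop$ over depth-one linear conditions. But the one nontrivial ingredient, a depth-two gadget that certifies a product or a square, is never constructed, and each concrete suggestion you make for it fails as stated.

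\begin{itemize}
\item In Step~2, $\countl[a] - x_{\text{first}}\cdot\countl[c]$ is not a \CRASP{} term. Coefficients must be integer constants, and $x_{\text{first}}$ is an unknown. The formula $\psi$ is also left unfinished.
\item In the squaring variant, no depth-one formula can mark ``the position immediately after an $a$''. At position $i$, a depth-one formula sees only $w_i$ and the letter counts of $w_1\cdots w_i$; note that $\Psop$ means ``at some earlier position'', not ``at the previous one''. Already on the intended strings $\prod_k a\,b^{2k-1}$, the positions to be marked have $(\countl a,\countl b)=(k,(k-1)^2+1)$, while their unmarked right neighbours have $(k,(k-1)^2+2)$. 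Finitely many linear constraints cannot separate such pairs along a parabola.
\item ``A linear relation between the current counts and the counts at the previous marker'' is exactly what \CRASP{} cannot state, since it has no binary position predicates.
\item Your remedy, ``$\countl[e]$ after the block equals $\countl[b]$ before the block plus a correction'', is as written still a comparison between two positions. You do not say which count, read at a single position, stands in for ``$\countl[b]$ before the block'', nor which constraints force it to.
\end{itemize}

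So the step you flag as the main obstacle is essentially the whole technical content of the proof, and it remains open.

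\textbf{The missing idea.} You need a barred companion letter whose count freezes the running count at each block boundary, so that every check becomes a linear equality among counts at one position. It is simplest to compare each block not with its predecessor but with a target that is itself linear in the current counts.

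The paper does this for products directly. On the shape $x^*(z^*y\,\zbar^*\,\ybar)^*$, it imposes $\Hsop(y\to \countl x=\countl z-\countl \zbar)$ and $\Hsop(\ybar\to\countl z=\countl\zbar)$. The second constraint resets $\countl z-\countl\zbar$ to $0$ before each $z$-block, so the first forces each $z$-block preceding a $y$ to have length $\countl x$. This is how the paper obtains $x^n(z^n y\,\zbar^n\,\ybar)^m$, with no polarization needed.

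The same device repairs your squaring route. On $(a\,b^*\,\bar a\,\bar b^*)^*$, impose
\begin{itemize}
\item $\Hsop(\bar a\to \countl b-\countl\bar b=2\countl a-1)$,
\item $\Hsop(a\to\countl b=\countl\bar b)$,
\item $\countl b=\countl\bar b$ at the last position.
\end{itemize}
Then the $k$-th $b$-block has length $2k-1$, and $|w|_b=|w|_a^2$.

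With such a gadget in hand, the rest of your plan goes through: fresh sub-alphabets per gadget, linear constraints tying gadget inputs to variable counts, $2|w|_{b_{xy}}+|w|_{s_x}+|w|_{s_y}=|w|_{s_{x+y}}$, chaining for higher-degree monomials, and a final linear equation at the last position. It differs from the paper's gluing, which concatenates one segment per elementary equation separated by \$ and relativizes counts to segments, but that difference is inessential.
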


We prove this by reduction from solvability of Diophantine equations. The decidability of Diophantine equations was Hilbert's Tenth Problem and answered negatively by \citet{matiyasevich1993hilbert}.

Any Diophantine equation can equivalently be expressed as a system of equations of the form $x=c$, $x+y=z$, or $x\cdot y = z$, where $x,y,z \in \mathcal{V}$ are variables ranging over $\mathbb{N}$ and $c \in \mathbb{N}$ is a constant \citep[p.~3]{matiyasevich1993hilbert}. Furthermore, we may assume without loss of generality that in any equation, all the variables are distinct. (If an equation contains $x$ twice, rename one of them to $x'$ and add the equations $x' = x + e$, $e = 0$.) 

Given a system of equations over $\mathcal{V}$, let $\Sigma = \mathcal{V} \cup \{\xbar : x \in \mathcal{V}\}$.
We will construct a $\CRASP$-definable language $\lang \subseteq \Sigma^*$ such that $\lang \neq \emptyset$ iff the equations have a solution.

\begin{definition}
We say that a language $\lang \subseteq \Sigma^*$ \emph{encodes} an equation $\diopheq$ over $\mathcal{V} \subseteq \Sigma$ if 
\begin{itemize}
\item If $\str{w} \in \lang$, then $x_1 \mapsto |\str{w}|_{x_1}, \ldots, x_m \mapsto |\str{w}|_{x_m}$ satisfies~$\diopheq$.
\item For any $n_1, \ldots, n_m \in \mathbb{N}$ satisfying $\diopheq$, there is a string $\str{w} \in \lang$ such that $|\str{w}|_{x_i} = n_i$ for all $i \in [m]$. 
\end{itemize}
\end{definition}

\begin{definition}
A formula of $\CRASP$ is \emph{\restricted{}} if no atomic formula $\sympred{\sigma}$ (for any $\sigma \in \Sigma$) occurs outside of a counting operator $\#$. Let \emph{\restricted{} $\CRASP$} be the set of all \restricted{} formulas of $\CRASP$.
\end{definition}

First, we will show that equations of the aforementioned forms can be encoded by languages definable in \restricted{} $\CRASP$.
Then we will show how to combine these languages to obtain the result.

\subsection{Encoding One Equation}
\begin{lemma} \label{lem:single_equation}
Any equation $x=c$, $x+y=z$, or $x\cdot y = z$ can be encoded by a language definable in \restricted{} $\CRASP$.
\end{lemma}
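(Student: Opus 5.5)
The two linear equations are immediate, and all the work goes into $x\cdot y=z$. Throughout I use one observation: a condition that should hold at every position carrying a particular symbol can be written as a single count, which keeps the formula \restricted{}. The formula $\Hsop(\sigma \to \psi)$ is equivalent to $\countl[\sigma \land \lnot\psi] = 0$. Here the atom $\sigma$ occurs only inside $\#$, so the result is \restricted{} provided $\psi$ is.

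\emph{Linear equations.} For $x=c$ I take $\countl[x] = c$. For $x+y=z$ I take $\countl[x] + \countl[y] - \countl[z] = 0$. These are evaluated at the last position, so each defines exactly the strings whose letter counts satisfy the equation. Both are \restricted{}, and both encoding conditions hold trivially, for example via the witness $x^{n_1}y^{n_2}z^{n_3}$.

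\emph{Multiplication $x\cdot y = z$.} Given $m \cdot n = k$, I aim for witnesses of the shape
\[
  y^{n}\; x\, \ybar^{\,n}\; x\, y^{n}\; x\, \ybar^{\,n} \cdots,
\]
with $m$ occurrences of $x$, each opening a block of length $n$. The blocks alternate between $y$ and $\ybar$, and every symbol in a block after an $x$ is followed by one $z$. The trouble is that this string has $\lceil m/2\rceil n$ copies of $y$ rather than $n$. To fix this I let the equation variable $y$ be a fresh letter $y_0$ placed only in the first block, and use $y,\ybar$ (or two other unused barred letters) as the alternating block letters. This keeps the variable counts $|w|_x=m$, $|w|_{y_0}=n$ and $|w|_z=k$.

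\emph{Defining $n$ inside the formula.} The first-block length is a term $N := \countl[y_0]$. I require $y_0$ never to occur after an $x$, via $\countl[y_0 \land \Psop(\countl[x]\ge 1)]=0$, and no $y,\ybar,z$ to occur before the first $x$, by analogous counts.

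\emph{Enforcing that every block has length exactly $N$.} Let $D := \countl[y_0]+\countl[y]-\countl[\ybar]$. I impose, using the observation above:
\begin{itemize}
\item at every position, $0 \le D \le N$;
\item $y$ may appear only while the number of preceding $x$'s is even, and $\ybar$ only while it is odd (parity expressed below);
\item at every $x$-position, $D = 0$ or $D = N$, according to the block just closed;
\item each $z$ immediately follows a $y$ or $\ybar$, so that $\countl[z] = \countl[y]+\countl[\ybar]$.
\end{itemize}
Since $D$ rises through a $y$-block and falls through a $\ybar$-block and is pinned to $\{0,N\}$ at separators, every block has length exactly $N$. The final checks at the last position are then that the last block is complete and that $\countl[z] = \countl[y]+\countl[\ybar]$.

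\emph{Main obstacle.} The hard part is expressing ``alternation'' and ``immediately follows'' without a successor or parity operator, since we have only counts, $\Psop$ and $\Hsop$. My first attempt is to drop parity and use sign information instead: a $y$ is allowed only where the last block reached $0$, i.e.\ where the last $x$ occurred with $D=0$. I would encode this by attaching to each $x$ a barred marker, $x$ for ``going up'' and $\xbar$ for ``going down'', with $\countl[x]$ counting both; this is the role of the barred letters. I would similarly pair each block letter with its $z$ through prefix conditions of the form $0\le \countl[y]+\countl[\ybar]-\countl[z]\le 1$.

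It remains to check two things. First, every accepted string satisfies $|w|_z = |w|_x\cdot|w|_{y_0}$. Second, the canonical witness is accepted. The nesting of $\Hsop$ over $N$ may raise the depth above two, which is acceptable for this lemma.
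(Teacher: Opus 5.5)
Your cases $x=c$ and $x+y=z$ are fine. The multiplication case has a genuine gap, exactly at the step you call the main obstacle.

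Two of your constraints need the parity of $\countl[x]$, i.e.\ the direction of the current block, and you never express it. These are the rule that $y$ occurs only after an even number of $x$'s and $\ybar$ only after an odd number, and the clause ``$D=0$ or $D=N$, according to the block just closed''. The parity-free constraints alone do not pin the block lengths. For $n\ge 1$ and $m\ge 2$, the string $y_0^{\,n}x^{m}$ satisfies $0\le D\le N$, has $D=N$ at every $x$, and has $\countl[z]=0=\countl[y]+\countl[\ybar]$. Yet $|w|_z=0\ne mn$.

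Your suggested repair does not work as stated. If up- and down-separators are the distinct letters $x$ and $\xbar$, then $\countl[x]$ counts only the letter $x$. So $|w|_x$ becomes the number of up-separators, not $m$. Relatedly, the equation's $y$ must be counted by the letter $y$ itself, so there is no ``fresh'' $y_0$; every auxiliary symbol must be a letter other than $x,y,z$. The two checks you defer at the end (that accepted strings satisfy the product relation, and that the witness is accepted) are exactly what the missing mechanism would have to deliver.

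The missing idea is to make the two kinds of separator two different letters whose counts strictly alternate. The equation's variables are then assigned to the prefix, to \emph{one} separator type, and to the up-steps. The paper uses $\{x^n(z^n y\,\zbar^n\ybar)^m : m,n\in\N\}$:
\begin{itemize}
\item The direction is read off from $\countl[y]-\countl[\ybar]\in\{0,1\}$, imposed under $\Hsop$, with equality at the end.
\item $z$ may occur only when this difference is $0$, $\zbar$ only when it is $1$, and $x$ only before any $y$ or $z$.
\item Block lengths are pinned by $\countl[z]-\countl[\zbar]=\countl[x]$ at every $y$ and $\countl[z]=\countl[\zbar]$ at every $\ybar$.
\end{itemize}
This gives $|w|_x=n$, $|w|_y=m$ and $|w|_z=nm$, with no parity and depth two.

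Your own layout could alternatively be rescued by classifying separators with nested counts. Require $0\le\countl[x\land D=N]-\countl[x\land D=0]\le 1$ at every position. Allow up-block letters only when the difference is $0$ and down-block letters only when it is $1$. This supplies the direction at depth three, but some such device has to be given and verified.
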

\begin{proof}
The first two cases are easy; multiplication is more difficult.

For any $A \subseteq \Sigma$, define the following formula, which restricts strings to only use symbols in $A$: 
\[ \only{A} := \bigwedge_{\sigma \in \Sigma \setminus A} \countl \sympred{\sigma} = 0. \]
\paragraph{Constants} An equation $x = c$ is encoded by the language $\lang = \{ x^c \}$, which is defined by the formula $\left(\countl \sympred{x} = c \land \only{x}\right)$.
\paragraph{Addition} An equation $x + y = z$ is encoded by the language $\{ \str{w} \in \{x,y,z\}^* : |\str{w}|_x+|\str{w}|_y=|\str{w}|_z\}$, which is defined by the formula $\left(\countl \sympred{x} + \countl \sympred{y} = \countl \sympred{z} \land \only{x,y,z}\right)$.
\paragraph{Multiplication}
An equation $x\cdot y = z$ is encoded by the language $\lang = \{x^n(z^ny\zbar^n\ybar)^m : m,n \in \mathbb{N} \}$, which encodes $x \cdot y = z$. We need to show that $\lang$ is definable in \restricted{} $\CRASP$.

First, we require that $y$'s and $\ybar$'s strictly alternate, which is equivalent to:
(1) at each position, the number of $y$'s exceeds the number of $\ybar$'s by $0$ or $1$ and (2) at the end of the string, the number of $y$'s and $\ybar$'s are equal.
\begin{tcolorbox}[title={$y$ and $\ybar$ strictly alternating} ]
\begin{calign}
    \phi_{1} &:= \left(\countl \sympred{y} = \countl \sympred{\ybar} \lor \countl \sympred{y} = \countl \sympred{\ybar} + 1\right) & \\
    \phi_{2} &:= (\countl \sympred{y} = \countl \sympred{\ybar}) & \\
    \textrm{Alt}_y &:= \Hsop \phi_1 \land \phi_2 \land \only{y,\ybar}& 
\end{calign}
\end{tcolorbox}
We extend this formula to define $x^*(z^*y\zbar^*\ybar)^*$.
We need that (3) no other symbols precede an $x$, 
(4) $z$ only occurs when the numbers of $y$'s and $\ybar$'s are equal, and (5) $\zbar$ only occurs when the number of $y$'s exceeds the number of~$\ybar$'s.

\begin{tcolorbox}[title={$x^*(z^*y\zbar^*\ybar)^*$} ]
\begin{calign}
    \phi_{3} &:= \sympred{x} \then \left(\countl \sympred{y} = 0 \land \countl \sympred{z} = 0\right) & \\
    \phi_{4} &:= \sympred{z} \then \left(\countl \sympred{y} = \countl \sympred{\ybar}\right) & \\
    \phi_{5} &:= \sympred{\zbar} \then \left(\countl \sympred{y} = \countl \sympred{\ybar}+1\right) & \\
    \phi_{6} &:= \Hsop \phi_3 \land \Hsop \phi_4 \land \Hsop\phi_5 \\
    \textrm{Ord}_{x,y,z} &:= \textrm{Alt}_y \land \phi_{6} \land \only{x,y,z,\ybar,\zbar} &
\end{calign}
\end{tcolorbox}

Now to define $x^n(z^ny\zbar^n\ybar)^m$, we need the additional constraints that all blocks of $x$,$z$, and $\zbar$ have the same size.
This can be achieved by asserting that (6) at every $y$, the difference between $\countl \sympred{z}$ and $\countl \sympred{\zbar}$ is equal to $\countl \sympred{x}$, and (7) at every $\ybar$, they are balanced.

\begin{tcolorbox}[title={$x^n(z^ny\zbar^n\ybar)^m$} ]
\begin{calign}
    \phi_{7} &:= \sympred{y} \then \left(\countl \sympred{x} = \countl \sympred{z} - \countl \sympred{\zbar}\right) & \\
    \phi_{8} &:= \sympred{\ybar} \then \left(\countl \sympred{z} = \countl \sympred{\zbar}\right) & \\
    \phi_{x \cdot y = z} &:= \textrm{Ord}_{x,y,z} \land \Hsop\phi_7 \land \Hsop\phi_8 & 
\end{calign}
\end{tcolorbox}
\end{proof}

\subsection{Combining Equations}

First, we 
show how to concatenate multiple languages that encode equations.

\begin{lemma}\label{lem:crasp_rpol}
    Let $\Sigma$ be an alphabet and $\$\not\in \Sigma$. For any languages $\lang_1, \ldots, \lang_n$ over $\Sigma$ such that each $\lang_i$ for $i \in [n]$ is definable in \restricted{} $\CRASP$, the language $\lang_1\$\lang_2\$\cdots\lang_n$ is definable in $\CRASP$.
\end{lemma}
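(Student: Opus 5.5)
The idea is to relativize each regulated formula to the segment between two consecutive \$ symbols, using the count of \$'s to identify the current segment. For each $i\in[n]$ we let $\phi^{(i)}$ be a regulated $\CRASP$ formula with $\lang(\phi^{(i)})=\lang_i$. We then transform $\phi^{(i)}$ into a formula $\phi^{(i)}_{\text{rel}}$ over $\Sigma\cup\{\$\}$. At a position $p$ lying in the $i$-th segment (exactly $i-1$ symbols \$ at or before $p$), $\phi^{(i)}_{\text{rel}}$ should evaluate as $\phi^{(i)}$ would on the segment's prefix ending at $p$.

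The relativization proceeds by induction on formulas and terms. Regulatedness is what makes this possible: atoms $\sigma$ occur only inside counting terms $\countl[\psi]$, so every quantity a regulated formula depends on is a count. We replace each count $\countl[\psi]$ by the local count $\countl[\psi' \land \countl\$ = i-1 \land \lnot\$]$, which counts only positions in segment $i$. Here $\psi'$ is the relativization of $\psi$.

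We must also handle $\Psop$ and $\Hsop$. They are either treated as their sugar, $\Hsop\psi \equiv \countl[\lnot\psi]=0$ and $\Psop$ via counts, so that they become counts and are relativized like the others, or relativized directly by restricting to positions with $\countl\$=i-1$. Boolean connectives and linear comparisons are left unchanged. The key check is that, at a position in segment $i$, every count equals the corresponding count in the segment-local string. This holds because the relativized inner formula is only ever evaluated at positions of segment $i$, where by induction it agrees with the original on the local prefix. Nested counts cause no trouble, since inner formulas are themselves evaluated at positions of the same segment.

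To assemble the language, we need to evaluate each $\phi^{(i)}_{\text{rel}}$ at the last position of segment $i$. Since the formulas are evaluated at the end of the string, we record the value on the last position of the segment: the position immediately before the $i$-th \$, or the final position for $i=n$. Concretely, we define
\[ \chi_i := \countl\big[\, \$ \land \countl\$ = i \land \neg(\Psop\neg\$ \land \ldots)\big] \]
and more simply use a counting trick:
\[ \countl\big[\,\countl\$=i-1 \land \neg\$ \land \neg\phi^{(i)}_{\text{rel}} \land \mathrm{Next}\,\big]=0, \]
where $\mathrm{Next}$ would mark the last position of the segment.

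Because $\CRASP$ has only past operators, detecting the last position of a segment is not directly available. The obstacle is the final-position evaluation, and empty segments need separate treatment. We avoid it by evaluating the formula at the \$ position itself. At the $i$-th \$, counts restricted by $\countl\$ = i-1 \land \lnot\$$ over the strict past $j<p$ give exactly the local counts of the whole segment. So we use the guard ``$\countl\$ = i-1$ at $j$, taken strictly before $p$''. This is expressible: at the \$ position we have $\countl\$=i$, and the positions of segment $i$ are the non-\$ positions with $\countl\$=i-1$. The top-level counts are evaluated at the \$ position, while nested subformulas are evaluated at segment positions as above. The \emph{top} level is therefore relativized with guard $i-1$ while being evaluated at the $i$-th \$, and this also handles empty segments correctly.

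The final formula is
\[ \Big(\countl\$ = n-1\Big)\;\land\; \bigwedge_{i<n}\countl\big[\$\land\countl\$=i\land\neg\phi^{(i)}_{\text{top}}\big]=0 \;\land\; \phi^{(n)}_{\text{top}}, \]
where $\phi^{(n)}_{\text{top}}$ is evaluated at the final position, which must be non-\$ or the end. For the last segment we reuse the same trick by considering the end of the string in place of a \$. The main work to verify is the inductive correctness of relativization, which hinges on regulatedness.
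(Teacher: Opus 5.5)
Your construction is correct and is essentially the paper's proof: relativize each regulated $\phi_i$ by guarding every count with $\countl\$ = i-1$, and rely on regulatedness so that the top level depends only on these guarded counts. The paper evaluates all the $\phi_i'$ at the end of the whole string, since the guarded counts stop changing once segment $i$ ends, so the ``last position of the segment'' obstacle never arises and your detour through the $i$-th \$ (with the extra nesting level of the wrapper $\countl[\$\land\countl\$=i\land\lnot\phi^{(i)}_{\text{top}}]=0$) is unnecessary; your added $\lnot\$$ conjunct and total count $n-1$ do, however, fix small slips in the paper's version, which lets the $(i-1)$-th \$ fall inside segment $i$ and writes $\countl\$ = n$.
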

\begin{proof}
For any $i \in [n]$, let $\phi_i$ be a \restricted{} formula defining $\lang_i$. Let $\phi_i'$ be the result of taking $\phi_i$ and 
replacing every subformula $\countl \phi$ with $\countl((\countl \sympred{\$} = i-1) \land \phi)$.
Observe that $\phi_i' \land (\countl \sympred{\$} = n)$ defines the language $\{ \str{u} \$ \str{v} \$ \str{w} : |\str{u}|_\$ = i-2, \str{v} \in \lang_i, |\str{w}|_\$ = n-i \}$. 
Then construct the formula \begin{equation*} \phi_{\lang} := \phi_1' \land \cdots \land \phi_n' \land (\countl \sympred{\$} = n). \tag*{\qedhere} \end{equation*}
\end{proof}
This allows us to reduce the solving of Diophantine equations to emptiness-checking in $\CRASP$. 

\begin{proof}[Proof of \cref{thm:undecidable}]
    Consider any Diophantine equation $\diopheq$ with $\mathcal{V}$ as the set of all variables. 
    We can rewrite it as a system of equations $\diopheq_i$ of the form $x=c$, $x+y=z$, and $x\cdot y = z$, where $x,y,z$ are variables ranging over $\mathbb{N}$ and $c \in \mathbb{N}$ is a constant. 
    For each $\diopheq_i$, by \cref{lem:single_equation} there is a $\CRASP$-definable language $\lang_i$ that encodes $\diopheq_i$.
    By \cref{lem:crasp_rpol} there is a $\CRASP$ formula $\phi_{\lang}$ defining $\lang_1\$\lang_2\$\cdots\$\lang_n$.
    It remains to add additional constraints on $\phi$ to ensure that each $\lang_i$ has the same number of occurrences of each variable.

    We add for each variable $x$ the following conjunction:
    $$\psi_x:=\bigwedge_{i=1}^{n-1} \left(\countl[x\land \countl[\$]=i]=\countl[x\land \countl[\$]=i+1]\right).$$

    Then we define the following formula
    \[\phi:=\phi_{\lang}\land\bigvee_{x\in\mathcal{V}} \psi_x.\]

    Finally, we can see that $\lang(\phi)$ is nonempty iff $\diopheq$ has a solution. 
\end{proof}

\begin{corollary}\label{cor:crasp_undecidable_complexity}
The length complexity of $\CRASP$ is not computably bounded.
\end{corollary}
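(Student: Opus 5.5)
The plan is to combine \cref{thm:undecidable} with the characterization of \citet[Lemma~3.4]{chen2025nonasymptotic} stated in the preliminaries. That proposition says that, for a representation scheme with decidable membership, a computable upper bound on the length complexity exists iff language equivalence is decidable. $\CRASP$ formulas have decidable membership: given $w$ and $\phi$, we evaluate every subformula and term bottom-up at every position of $w$. The proposition therefore applies, and it suffices to show that equivalence of $\CRASP$ programs is undecidable.

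For that step, I reduce emptiness to equivalence. Fix a trivially unsatisfiable program $\phi_\emptyset$, for instance $\countl \sympred{\sigma} < 0$ for some $\sigma\in\Sigma$. Then $\lang(\phi)=\emptyset$ iff $\lang(\phi)=\lang(\phi_\emptyset)$. The map $\phi\mapsto(\phi,\phi_\emptyset)$ is computable, so a decider for equivalence would yield a decider for emptiness, contradicting \cref{thm:undecidable}.

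It is also worth giving the short direct argument, since it shows the mechanism. Suppose $f$ were a computable upper bound on $f_\Lang$. Given $\phi$ of size $c$, with $c$ at least $|\phi_\emptyset|$, we would enumerate all strings of length at most $f(c)$ and test membership in $\phi$. If $\lang(\phi)\neq\emptyset$, then $\lang(\phi)\setminus\lang(\phi_\emptyset)\neq\emptyset$. By definition of length complexity, a witness then exists of length at most $f_\Lang(c)\le f(c)$, so this finite search decides emptiness.

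The only point needing care is the depth refinement in the informal version of the corollary, namely that the result already holds at depth two. The formula built in the proof of \cref{thm:undecidable} nests $\countl$ inside $\countl[(\countl\sympred{\$}=i-1)\land\phi]$ and places $\Hsop$ over comparisons of counts. I would check that the relativized subformulas still have depth at most two: they count guarded atoms, and $\Hsop\phi_k$ can be rewritten as a depth-two count $\countl[\neg\phi_k]=0$. If so, the reduction produces programs of depth two, and the same argument gives non-computability for $\CRASP_2$. This bookkeeping is the main obstacle; the logical core of the corollary is the routine reduction above.
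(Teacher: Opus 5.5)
Your proposal is correct and is essentially the paper's proof: you invoke the characterization of \citet{chen2025nonasymptotic} (given decidable membership, a computable length bound exists iff equivalence is decidable), reduce emptiness to equivalence against a fixed empty program, and apply \cref{thm:undecidable}; your direct search argument is just that reasoning unfolded. One caveat on the depth-two aside, which the corollary as stated does not need: the check you sketch fails for the literal construction of \cref{lem:crasp_rpol}, because after relativization each $\phi_k$ contains depth-two terms such as $\countl[(\countl\sympred{\$}=i-1)\land\sympred{y}]$, so $\Hsop\phi_k'$ has depth three, and depth two must instead come from the normal-form reduction of \cref{app:crasp_two} (or from giving each equation its own copy of the alphabet so that no relativization is needed).
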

\begin{proof}
By Lemma~4.5 of \citet{chen2025nonasymptotic}, the length complexity is computably bounded iff language equivalence is decidable, which is decidable iff language emptiness is decidable. 
The result follows from \cref{thm:undecidable}.
\end{proof}

This applies even to $\CRASP$ programs of depth $2$, as the formulas discussed above are all of depth $2$.
We remark that in general, $\CRASP$ can be reduced to $\CRASP$ with two layers while preserving language nonemptiness.
This is shown in \cref{app:crasp_two}.

\subsection{Relationship to Previous Bounds}

We have just shown that the length complexity of $\CRASP_2$ is uncomputable, which appears to contradict the bound shown by \citet{chen2025nonasymptotic}. 
However, this is because they consider a restricted subset of programs, which they notate as $\CRASP^{2,K,T}$. 
In essence, their restricted $\CRASP$ lacks constant bias terms (e.g. $\countl a -\countl b \leq 4$).
We prove this class is strictly weaker than $\CRASP_2$ (even with bounds on size, precision, and girth).
\begin{theorem}
    Dyck languages of bounded depth are not expressible in $\CRASP^{2,K,T}$, for any $K$ and $T$. 
\end{theorem}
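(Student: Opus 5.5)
The plan is to show that without constant terms a depth-$2$ program cannot tell apart two strings that agree on a small set of ``sign statistics'', and then to exhibit, for each $k\ge 2$, a string in $\dyck_k$ and a string outside it that agree on those statistics. (For $k=1$, the depth-$1$ language $\dyck_1$ is handled the same way, using $(ab)^{2N}$ against $(a^2b^2)(ab)^{\cdot}$-type mixes.) I am assuming, from the informal description above and \cref{app:nalg_crasp_definitions}, that in $\CRASP^{2,K,T}$ every comparison is homogeneous, i.e.\ of the form $\sum_t \alpha_t t \sim 0$ with no constant terms. I also assume the coefficients are bounded by $2^K$ and there are at most $T$ summands. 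Only the boundedness of the coefficients is essential: it leaves finitely many possible atoms over $\{a,b\}$.

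\textbf{Step 1 (depth-$1$ atoms collapse).} A depth-$1$ atom at position $i$ has the form $\alpha\countl a+\beta\countl b\sim 0$, evaluated at the prefix counts $(p,q)=(|w_{1..i}|_a,|w_{1..i}|_b)$. On strings of the kind I will use, every prefix satisfies $|p-q|\le k+1$. Once $p+q\ge R$ for a threshold $R$ depending only on $K$, the sign of $\alpha p+\beta q$ is therefore fixed when $\alpha\ne-\beta$. When $\alpha=-\beta$, the sign equals the sign of $\alpha(p-q)$. The same reasoning applies to the bounded Boolean combinations of such atoms, including $\Psop$ and $\Hsop$ of depth-$0$ formulas, which only see the current letter and which letters have occurred.

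Consequently, beyond an initial region of length about $R$, the truth of every depth-$1$ formula at position $i$ is a function of the \emph{type} of $i$. The type is the pair consisting of the letter $w_i$ and the value $\mathrm{sign}(p-q)\in\{+,0\}$; the sign $-$ never occurs in our strings.

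\textbf{Step 2 (depth-$2$ atoms are determined by type profiles).} A depth-$2$ count $\countl[\psi]$ at the last position equals the number of positions of the string satisfying $\psi$. Suppose two strings share the same prefix up to length $R$. Then $\countl[\psi]$ is the same for both strings, provided they also have the same \emph{profile}: the number of positions of each type beyond the shared prefix. Moreover, $\Hsop\psi$ and $\Psop\psi$ at the end depend only on which types occur. The depth-$1$ atoms appearing directly inside the top-level formula are evaluated at the last position, so they agree whenever the last positions have the same type and the same total counts. Hence two strings with a common long prefix, the same profile, the same total letter counts, and the same final type are either both accepted or both rejected.

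\textbf{Step 3 (the witnesses).} For $k\ge2$, let $P=(a^kb^k)^{R}$ and set
\[
w = P\,(a^kb^k)^{2}, \qquad w' = P\,a^{k+1}b^{k+1}a^{k-1}b^{k-1}.
\]
Then $w\in\dyck_k$, while $w'\notin\dyck_k$ because its nesting reaches $k+1$.

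Both suffixes have $2k$ letters $a$ and $2k$ letters $b$. Computing the profiles of the suffixes:
\begin{itemize}
\item In $w$, the $a$-positions all have type $(a,+)$, giving $2k$ of them; the $b$-positions give $2(k-1)$ of type $(b,+)$ and $2$ of type $(b,0)$.
\item In $w'$, the $a$-positions all have type $(a,+)$, giving $(k+1)+(k-1)=2k$; the $b$-positions give $k+(k-2)=2k-2$ of type $(b,+)$ and $2$ of type $(b,0)$.
\end{itemize}
The profiles coincide, both strings end in a position of type $(b,0)$, and the prefix $P$ is shared and has length at least $R$. By Step 2, no formula in $\CRASP^{2,K,T}$ separates $w$ from $w'$, so none defines $\dyck_k$.

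\textbf{Main obstacle.} The delicate point is Step 1. I must verify that the threshold $R$ is uniform over all of the finitely many depth-$1$ subformulas, and I must handle the initial region rigorously, which is why the long common prefix is there. I also need to confirm against the exact definition in \cref{app:nalg_crasp_definitions} that no other construct, such as a constant term $c$ or some other primitive, can reintroduce an additive offset. If that definition permits more, for instance inhomogeneous comparisons only at depth $2$, I would adapt the construction by scaling the suffix blocks by a large factor $N$ so that the offsets become negligible.
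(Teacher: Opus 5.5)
Your indistinguishability argument is correct for $k\ge 2$, and it takes a different and stronger route than the paper. It uses only that all comparisons are constant-free, so it rules out every homogeneous depth-$2$ formula. The paper instead relies on the rigid shape of $\CRASP^{2,K,T}$: each $\psi$ is eventually constant along Dyck paths, so the single top-level comparison collapses to $C_1|w|+C_2>z|w|$.

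The gap is $k=1$, which the statement includes. Your suggested pair does not have equal profiles, because the block $a^2b^2$ contributes a position of type $(b,+)$, and such a position never occurs in $(ab)^*$. More seriously, no pair can work at the generality you chose, because $\dyck_1$ \emph{is} definable by a constant-free depth-$2$ formula:
\[
\countl[\countl a<\countl b]=0\;\land\;\countl a=\countl b\;\land\;\countl[\countl a>\countl b]=\countl a .
\]
To see this, take a string with nonnegative prefix heights and final height $0$. Its positions of positive height number $n-m$, where $m$ is the number of top-level blocks, and $n-m=n/2$ iff every block is $ab$. Your profiles encode exactly $(|w|_a,|w|_b,m)$. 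That is why they can separate $(ab)^*$ from deeper strings, but not $\dyck_k$ from $\dyck_{k+1}$ when $k\ge 2$.

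To handle $k=1$ you must use the restricted syntax of the class. In prefix counts $(p,q)$, the atom $\psi=\alpha_\psi\countl a>\beta_\psi\countl\top$ reads $(\alpha_\psi-\beta_\psi)p-\beta_\psi q>0$. Its coefficient sum $\alpha_\psi-2\beta_\psi$ is nonzero, since $\alpha_\psi/\beta_\psi\in(0,1)$ excludes the ratio $2$. So no $\psi$ falls into the exceptional case of your Step 1 (coefficients of $p$ and $q$ summing to $0$). Hence every $\psi$ is eventually constant on strings of bounded height, independently of the type.

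The top level is a single comparison between the $\countl\psi$ and $z\countl\top$, with no letter atoms, no $\Psop$ or $\Hsop$, and no Boolean connectives. It follows that any two strings with a sufficiently long common prefix, equal length, and bounded height are indistinguishable. For example, $(ab)^R(ab)^2\in\dyck_1$ and $(ab)^R a^2b^2\notin\dyck_1$ cannot be separated. This covers $k=1$, and in fact every $k$, and it is essentially the mechanism of the paper's proof.
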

On the other hand, bounded-depth Dyck languages are straightforwardly expressed by $\CRASP_2$ with bounded size, like in \cref{example:dyck}.
The full definition of $\CRASP^{2,K,T}$ and a proof can be found in \cref{app:nalg_crasp_definitions}. 

Similarly, \citet{izzo2025quantitative} also derive exponential bounds for a specific theoretical definition of a two layer transformer. 
This specific definition is based off of the limit transformer defined by \citet{huang2024formalframeworkunderstandinglength} (which can simulate $\CRASP$ programs) except with Lipschitz-continuity enforced in the position-wise functions. 
Since non-Lipschitz-continuity appears essential for simulation of $\CRASP$ programs \citep{yang2024counting,huang2024formalframeworkunderstandinglength}, this suggests that their model is not comparable with $\CRASP$, and thus also does not contradict our results. 
More details can be found in \cref{sec:izzo}.

\section{Positive Fragment of $\CRASP$}
\label{sec:crasp-pos}

While the general class of $\CRASP$ programs has an undecidable language emptiness problem, and thus uncomputable length complexity, we find that a natural subset of $\CRASP$ does in fact admit computable length complexity bounds.
In essence, this restricts $\CRASP$ so that one can only count up to a threshold. 
The logic can be seen as a version of counting $\mathsf{LTL}$ \citep{laroussinie2010counting} while omitting the $\mathbf{until}$ operator.

\begin{definition}\label{def:CRASPpos_syntax}
    The syntax of $\CRASPpos$ is defined:
    \begin{align*}
        \phi & \bnfto \sympred \sigma \mid \lnot \phi_1 \mid \phi_1\land \phi_2 \mid \sum _{t\in \mathcal{T}} \alpha_t t \sim k \\
        t & \bnfto \countl[\phi_1] \mid c &&& 
    \end{align*} 
    where $\alpha_t,k,c\in\N$ and $\mathord\sim\in\{<,\leq,=,\geq,>\}$.
    See \cref{def:TLC_semantics} for the semantics. 
\end{definition}

A $\CRASPpos$ program is said to be \emph{decomposed} if all counting formulas are of the form
$\countl[\phi] \ge c$, where $\phi$ is decomposed and $c \in \N$.
Note that the girth of a decomposed program is at most 1.
\begin{lemma}\label{lem:decomposition}
For every $\CRASPpos$ program of size $n$, depth $d$, precision $p$, and girth $g$ %
there is an equivalent decomposed $\CRASPpos$ program 
of size $O(n\,2^{\poly(p,g,d)})$,
depth $O(d)$, and precision $O(p)$.
\end{lemma}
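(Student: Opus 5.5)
We turn each linear threshold $\sum_{t} \alpha_t t \sim k$ (all $\alpha_t,k,c\in\N$) into a Boolean combination of single-count thresholds $\countl[\psi]\ge c'$, working bottom-up through the program DAG. The key fact is monotonicity. Because all coefficients are nonnegative, we can first normalize: absorb the constant terms $c$ into $k$, and rewrite every comparison $\sim$ as a Boolean combination of $\ge$ comparisons. For example, $s\le k$ becomes $\lnot(s\ge k+1)$, and $s=k$ becomes $s\ge k\land\lnot(s\ge k+1)$. This at most doubles the number of atoms and keeps precision $O(p)$. It then suffices to handle $\theta := \sum_{j=1}^{g'} \alpha_j \countl[\phi_j] \ge k$, where $g'\le g$, $\alpha_j\ge 1$, and $k < 2^{p}$.

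For such a $\theta$, every summand with $\alpha_j\ge 1$ satisfies $\alpha_j\countl[\phi_j]\ge \min(\countl[\phi_j],k)$ whenever it matters. So only the truncated values $v_j := \min(\countl[\phi_j],k)\in\{0,\dots,k\}$ are relevant: $\theta$ holds iff $\sum_j \alpha_j v_j\ge k$. Truncation is safe because any $v_j=k$ already satisfies the inequality by itself.

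The naive rewriting enumerates all tuples $(v_1,\dots,v_{g'})$ with $\sum\alpha_jv_j\ge k$. It expresses "$v_j\ge c$" as $\countl[\phi_j]\ge c$ and takes a disjunction over minimal tuples, $\bigvee_{\vec v\text{ minimal}}\bigwedge_j \countl[\phi_j]\ge v_j$. This is correct by monotonicity. However, it has up to $(k+1)^{g}=2^{O(pg)}$ disjuncts, each of size $O(gp)$. Each $\phi_j$ is referenced rather than copied, since the size convention counts a reference to an earlier line as $1$. Thus one sum of size $\ge 1$ is replaced by $2^{O(pg)}$ symbols, and summing over all lines gives $O(n\,2^{O(pg)})$. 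The depth is unchanged, because $\countl[\phi_j]$ keeps its depth and the surrounding $\land,\lor,\lnot$ add none. The precision also stays $O(p)$, since all thresholds are at most $k$.

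The main obstacle is the claimed $2^{\poly(p,g,d)}$ dependence on $d$. If the rewriting is applied line by line in the DAG, nothing is duplicated and $d$ does not even enter. I would therefore do exactly that: process lines in topological order, and replace each sum-comparison line by its decomposed Boolean combination, which refers to the already-decomposed lines $\phi_j$. The induction hypothesis is that each earlier $\phi_j$ is already a decomposed line. A $d$-dependence would arise only if one insisted on tree-shaped formulas, and the DAG convention avoids it. The depth bound $O(d)$ rather than exactly $d$ leaves room in case $\Psop$ and $\Hsop$ are treated as sugar that unfolds into counts, each adding $O(1)$ depth.

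If the coarse enumeration bound is not accepted, I would instead bound the number of minimal tuples more carefully by a product of per-coordinate choices. Either way the count is at most $(k+1)^g\le 2^{(p+1)g}$, which fits within $2^{\poly(p,g,d)}$.
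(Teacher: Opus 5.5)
Your proposal is correct and follows essentially the same route as the paper. Both arguments rewrite every comparison into Boolean combinations of $\ge$, drop zero coefficients and constant terms, and replace $\sum_j \alpha_j t_j \ge k$ by a disjunction, over at most $(k+1)^g$ tuples in $[0,k]^g$, of conjunctions of single-count thresholds. Your explicit truncation argument and line-by-line processing of the DAG fill in details the paper leaves implicit, and they show that the bound $O(n\,2^{O(pg)})$ already holds without any dependence on $d$.
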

\begin{proof}
First, note that we can rewrite 
\newcommand{\textto}{\quad\text{to}\quad}
\begin{align*}
    \sum_{i=1}^k \alpha_i t_i > c &\textto \sum_{i=1}^k \alpha_i t_i \geq c+1 \\
    \sum_{i=1}^k \alpha_i t_i = c &\textto \sum_{i=1}^k \alpha_i t_i \geq c \wedge \neg\left(\sum_{i=1}^k \alpha_i t_i \geq c+1\right) \\
    \sum_{i=1}^k \alpha_i t_i \le c &\textto \neg\left(\sum_{i=1}^k \alpha_i t_i \geq c+1\right) \\
    \sum_{i=1}^k \alpha_i t_i< c &\textto \neg\left(\sum_{i=1}^k \alpha_i t_i \geq c\right).
\end{align*}

In any counting formula $\sum_{i=1}^k \alpha_i t_i \geq c$, since we can remove any summand $0 \cdot t_i$, we may assume without loss of generality that $\alpha_i>0$ for all $i$.
Let 
\[S := \{(c_1,\dots,c_k) \in [0,c]^k : \alpha_1 c_1 + \dots + \alpha_k c_k \ge c\}.\]
We now write $\sum_{i=1}^k \alpha_i t_i \geq c$ as
$\bigvee_{(c_1,\dots,c_k) \in S} \bigwedge_{i = 1}^k t_i \ge c_i$ and
replace every $t_i \ge c_i$ where $t_i$ is a constant with $\top$ or $\bot$ depending on whether the inequality is satisfied.
Note that the size of $S$ is bounded by $(c+1)^k$, i.e., exponential in the precision $p$ and girth $g$.
Thus, the size of the resulting program is $O(n 2^{\poly(p,g,d)})$.
\end{proof}

\begin{definition}\label{def:TLH_syntax}
    The syntax of $\TLP$ is defined:
    \begin{align*}
        \phi & \bnfto \sympred \sigma \mid \lnot \phi_1 \mid \phi_1\land \phi_2 \mid \Psop \phi \mid \Hsop \phi
    \end{align*} 
    See \cref{def:TLC_semantics} for the semantics. 
\end{definition}

\begin{proposition} \label{thm:crasppos_to_tlh}
For every $\CRASPpos$ program of size $n$, depth $d$, precision $p$, and girth $g$ 
there exists an equivalent $\TLP$ program of size $O(n\, 2^{\poly(p,g,d)})$.
\end{proposition}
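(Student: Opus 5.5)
First apply \cref{lem:decomposition} to obtain an equivalent decomposed $\CRASPpos$ program of size $O(n\,2^{\poly(p,g,d)})$, depth $O(d)$ and precision $O(p)$. All counting atoms now have the form $\countl[\phi]\ge c$ with $c<2^{O(p)}$ and $\phi$ decomposed. The remaining task is to eliminate each threshold count by a $\TLP$ formula. Each eliminated line may only reference earlier lines, so the DAG (program) representation keeps sizes additive over lines.

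The core construction handles one atom. For a line $\phi$ (already translated into $\TLP$), I define formulas $\chi_c$ expressing ``$\phi$ holds at at least $c$ positions $j\le i$'' by induction on $c$:
\[ \chi_0 := \top,\qquad \chi_1 := \phi \lor \Psop\phi, \]
\[ \chi_{c+1} := (\phi \land \Psop\chi_c') \lor \Psop(\phi\land \Psop \chi_c'), \]
where $\chi_c'$ means ``at least $c$ positions $\le$ the current one satisfy $\phi$'', i.e.\ $\chi_c' = \chi_c$. The idea is that at least $c+1$ positions in $[1,i]$ satisfy $\phi$ iff there is a position $j\le i$ with $\phi$ true at $j$ and at least $c$ $\phi$-positions strictly before $j$. ``Strictly before $j$ there are $\ge c$'' is exactly $\Psop\chi_c$ evaluated at $j$, because $\chi_c$ is monotone in the position: if $\chi_c$ holds at some $j'<j$ then $\ge c$ hold in $[1,j-1]$, and conversely one may take $j'=j-1$. ``Some $j\le i$'' is $\theta\lor\Psop\theta$. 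So I set
\[ \chi_{c+1} := \theta_c \lor \Psop\theta_c,\qquad \theta_c := \phi\land\Psop\chi_c. \]
Correctness then follows by a straightforward induction on $c$ using the semantics of \cref{def:TLC_semantics}. Each step adds a constant number of program lines referencing $\phi$ and $\chi_c$ by name, so $\countl[\phi]\ge c$ costs $O(c)=2^{O(p)}$ lines. It is crucial that lines are shared: inlining would blow up exponentially in $c$.

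For the size bound, I replace every counting atom of the decomposed program in this way, processing lines in order. There are at most $O(n\,2^{\poly(p,g,d)})$ atoms, each costing $2^{O(p)}$ new lines of constant size. The total is therefore $O(n\,2^{\poly(p,g,d)})$, absorbing $2^{O(p)}$ into the polynomial exponent. Boolean connectives and atoms $\sigma$ carry over unchanged. Equivalence follows by induction over program lines.

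The main obstacle I expect is the strict-versus-nonstrict bookkeeping in the recursion, since $\countl$ counts $[1,i]$ while $\Psop$ is strict, together with the monotonicity argument that justifies $\Psop\chi_c$. I would verify the base cases $c=0,1$ carefully and state the monotonicity of $\chi_c$ as part of the induction hypothesis. Note that $\TLP$ depth grows to $O(d\,2^{p})$, but the proposition only claims a size bound, so that is acceptable.
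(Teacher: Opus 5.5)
Your proof is correct and follows essentially the same route as the paper. Both apply \cref{lem:decomposition} and then replace each $\countl[\psi]\ge c$ by an $O(c)$-size formula of nested $\Psop$'s, handling the strict/non-strict mismatch with a disjunction over whether the current position is counted. The only difference is presentational: the paper writes one explicit chain $\Psop(\psi\land\Psop(\psi\land\cdots\Psop\psi\cdots))$ of depth $c$ (or $\psi$ conjoined with such a chain of depth $c-1$), which is already linear in $c$ even as a tree, whereas your recursion $\chi_{c+1}=\theta_c\lor\Psop\theta_c$ uses $\theta_c$ twice and so genuinely relies on the DAG sharing you point out.
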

\begin{proof}
For the translation from $\CRASPpos$ to $\TLP$ we use \cref{lem:decomposition} to assume a decomposed $\CRASPpos$ program $P$.
Since atomic formulas and Boolean operations can be directly expressed, it suffices to consider formulas of the form $\countl[Q] \ge c$, 
where by induction we assume that we already defined a $\TLP$ formula $\psi$ such that
$\str{w},i \models \psi$ iff $\str{w},i \models Q$ for all $\str{w} \in \Sigma^*$ and $i \in [|\str{w}|]$.
We then define
\begin{align*}
\varphi := &\ \Psop (\psi \wedge \Psop (\psi \wedge \dots \Psop(\psi \land \Psop \psi) \dots) ) \ \vee\\
&\ \psi \wedge \Psop (\psi \wedge \Psop (\psi \wedge \dots \Psop(\psi \land \Psop \psi) \dots) )
\end{align*}
where the $\Psop$ nesting depth in the first row is equal to $c$ and in the second row equal to $c-1$.
(But if $c = 0$, we set $\varphi := \top$.)
The second row is needed since $\countl[Q]$ includes the current position in the counting,
whereas $\Psop \psi$ is interpreted as strictly in the past.
Note that $|\varphi| \in O(c \cdot |\psi|)$, that is, exponential in the number of bits needed to encode $c$.
Thus, together with \cref{lem:decomposition} this yields a $\TLP$ program of size $O(n\,2^{\poly(p,g,d)})$ in total.
\end{proof}

\begin{lemma}\label{lem:tl-model}
For every $\TLP$ program $P$ with $\lang(P) \ne \emptyset$ there exists a string $\str{w} \in \lang(P)$ of length at most polynomial in the size of $P$.
\end{lemma}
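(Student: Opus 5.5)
The plan is a ``type-change'' argument. Let $\mathrm{Sub}(P)$ be the set of (DAG) subformulas of $P$, so $|\mathrm{Sub}(P)|\le |P|$. For a string $\str{w}$ and position $i$, call $\tau(i)=\{\psi\in\mathrm{Sub}(P) : \str{w},i\models\psi\}$ the \emph{type} of $i$. The key observation concerns temporal subformulas $\Psop\psi$ and $\Hsop\psi$: their truth values are monotone in $i$. Once $\Psop\psi$ becomes true it stays true, and once $\Hsop\psi$ becomes false it stays false. So along any string, the set of temporal subformulas that hold changes at most $|P|$ times. This cuts $\str{w}$ into at most $|P|+1$ maximal intervals on which the truth values of all temporal subformulas are constant.

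Fix a nonempty interval $I$ of this partition. Inside $I$, the type of a position is determined by its letter together with the (constant) truth values of the temporal subformulas, since non-temporal subformulas are Boolean combinations of letters and temporal subformulas. Hence two positions of $I$ with the same letter have the same type. The aim is to shorten $I$ by deleting positions while keeping, for each letter occurring in $I$, at least one occurrence. To do this safely, I will also always keep the first position of $I$, and, for each temporal subformula $\Psop\psi$ that switches on at the first position of the next interval, the position witnessing $\psi$ (it lies in $I$ or earlier). This keeps at most $|\Sigma|+|P|+1$ positions per interval, giving total length $O(|P|\cdot(|\Sigma|+|P|))$. Since only letters occurring in $P$ matter (all others can be merged into one dummy letter, or deleted by the same argument), this is polynomial in $|P|$.

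Correctness is proved by showing that, after deletion, every remaining position keeps its type. I will argue by induction on the structure of subformulas, processing positions left to right. For $\Psop\psi$ at a kept position $i$, its value was false throughout the earlier intervals and possibly turned true at the start of some interval. If it is true at $i$, some witness $j<i$ with $\psi$ was kept by construction: either the designated witness at the switch point, or, when the switch happens inside the first interval, the first position of that interval. By the induction hypothesis, $\psi$ still holds at $j$. If it is false at $i$, no position before $i$ satisfied $\psi$, and deleting positions cannot create one, because types of kept positions are preserved. The case $\Hsop\psi$ is handled dually: a falsifying position is retained as a witness whenever $\Hsop\psi$ switches off. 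The last position of $\str{w}$ is always kept (add it to the kept set), so $\str{w}' \models P$.

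The main obstacle is making the witness bookkeeping airtight. In particular, deleting positions must not change the ``switch'' behaviour, since the interval partition is defined with respect to the original string. The induction has to be set up so that the kept subsequence, viewed as a string, has types equal to the restriction of the original types. That requires every switch of every temporal subformula to have its justifying position kept, including nested cases where the witness for $\psi$ itself depends on earlier witnesses. I expect the simultaneous induction on formula structure and position index to resolve this, because each witness is a kept position whose type is already established.
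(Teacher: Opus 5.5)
Your proof is correct, but it takes a different route from the paper.

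The paper does not prove a small-model property itself. It flattens the DAG program into the single formula $\Hsop\big(\bigwedge_{i<k}(P_i\leftrightarrow\phi_i)\big)\wedge\phi_k$ over sets of propositions from $\Sigma\cup\{P_1,\dots,P_k\}$; the fresh propositions avoid the exponential cost of unfolding shared subformulas. It then invokes the polynomial small-model theorem of Etessami, Vardi, and Wilke for unary temporal logic, and projects the short model back to $\Sigma^*$.

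You instead derive the small-model property directly from the monotonicity of $\Psop$ and $\Hsop$, working on the DAG and on strings over $\Sigma$ throughout. This makes the lemma self-contained and removes the projection step. In the paper's version that step tacitly needs an extra conjunct forcing each position of the short model to carry exactly one letter of $\Sigma$.

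Your bookkeeping can also be tightened:
\begin{itemize}
\item If $\Psop\psi$ switches on at $s$, its unique witness is $s-1$, the last position of the preceding interval. Since $\Psop\psi$ is false at position $1$, your ``first interval'' case is vacuous.
\item If $\Hsop\psi$ switches off at $s$, the witness is $s$ itself, or position $1$ if $\psi$ fails there.
\end{itemize}
Hence a plain structural induction on subformulas shows that keeping only the first and last position of each interval preserves the types of all kept positions. The induction hypothesis for $\psi$ holds at all kept positions simultaneously, so no induction on positions is needed. Keeping a copy of every letter is also unnecessary, and you get a witness of length at most $2(|P|+1)$, independent of $|\Sigma|$.

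The paper's route is shorter on the page because it delegates the combinatorics to a citation; yours gives an explicit linear bound.
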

\begin{proof}
    Let $P$ be the $\TLP$ program $P_1 := \phi_1, \dots, P_k := \phi_k$ over alphabet $\Sigma$.
We first construct a single formula (not in DAG representation)
\[\psi := \Hsop\big(\big(P_1 \leftrightarrow \phi_1) \wedge \dots \wedge (P_{k-1} \leftrightarrow \phi_{k-1})\big) \wedge \phi_k\]
where we slightly adapt the semantics so that it accepts strings of sets of propositions $\Gamma := \Sigma \cup \{P_1,\dots,P_k\}$.
More precisely, $w',i \models \gamma$ iff $\gamma \in w'_i$ for $\gamma \in \Gamma$, $w' \in (2^\Gamma)^*$, and $i \in [|w'|]$.
It is shown in \citep{ETESSAMI2002279} that $\TLP$ formulas with this adapted semantics have the property that 
if the accepted language is nonempty, then there is an accepted string of length at most polynomial in the size of the formula.
That is, under the assumption that the language of $\psi$ is nonempty, there is a string $w' \in (2^\Gamma)^*$ whose length is at most polynomial in the size of $\psi$.
Observe that by definition of $\psi$, there is a word $w \in \Sigma^*$ such that $w_i \in w'_i$ for all $i$.
Thus, $w$ is a word accepted by $P$ of length at most polynomial in the size of $P$.

\end{proof}

We use this lemma to prove length complexity bounds.

\begin{proposition}\label{prop:crasp-model}
For every $\CRASPpos$ program $P$ with $\lang(P) \ne \emptyset$ there exists a string $\str{w} \in \lang(P)$ of length at most exponential in the size of $P$.
This bound is in fact optimal in the worst case.
\end{proposition}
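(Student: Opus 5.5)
The upper bound follows by chaining the two preceding results. Given a $\CRASPpos$ program $P$ of size $n$, its depth $d$, girth $g$ and precision $p$ are all at most $n$. By \cref{thm:crasppos_to_tlh} we obtain an equivalent $\TLP$ program $P'$ of size $O(n\,2^{\poly(p,g,d)}) \le 2^{\poly(n)}$, and $\lang(P') = \lang(P) \neq \emptyset$. Applying \cref{lem:tl-model} to $P'$ gives a string $\str{w} \in \lang(P)$ of length polynomial in $|P'|$. A polynomial of $2^{\poly(n)}$ is again $2^{\poly(n)}$, which is exponential in $|P|$. The only care needed is to check that \cref{lem:tl-model} applies to DAG-represented programs, which it does by its statement.

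For optimality I would exhibit a family of $\CRASPpos$ programs $P_m$ of size $O(m)$ whose shortest accepted string has length at least $2^m$. The cheapest choice uses binary-encoded constants: over $\Sigma=\{a\}$, let
\[
P_m := \countl[\sympred a] \ge 2^m .
\]
Its size is $O(m)$, since constants are written in binary. Every accepted string has at least $2^m$ positions, and $a^{2^m}$ is accepted. Hence the witness length is exponential in the program size, so the upper bound is tight up to the polynomial in the exponent.

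If the reader insists that tightness be measured with constants of bounded precision, I would instead nest depth. Define
\[
\phi_0 := \sympred a, \qquad \phi_{k+1} := \countl[\phi_k] \ge 2 .
\]
Then $\phi_k$ is true at position $i$ of $a^i$ exactly when $i \ge k+1$, so nesting alone only yields linear growth. Exponential growth with small constants would need a DAG encoding of a binary counter, such as alternating-parity patterns. I expect this variant to be the main obstacle and would only pursue it if required. The paper's measure counts constants in binary, so the simple $\countl[\sympred a]\ge 2^m$ example suffices, and the substantive work is the first paragraph.
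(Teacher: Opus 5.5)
Your argument is correct and is essentially the paper's: the upper bound chains \cref{thm:crasppos_to_tlh} with \cref{lem:tl-model}, and tightness comes from comparing the single count $\countl[\sympred a]$ with a binary-encoded constant (the paper uses $\countl[\sympred a] = n$, whose only witness is $a^n$; your $\countl[\sympred a] \ge 2^m$ works equally well). Your bounded-precision aside is unnecessary and could not succeed anyway: with bounded precision and girth, the constructions of \cref{lem:decomposition} and \cref{thm:crasppos_to_tlh} incur only a polynomial blow-up in the DAG representation (unrolling $\countl[Q]\ge c$ costs $O(c)$ lines), so \cref{lem:tl-model} already yields polynomial-length witnesses; the exponential lower bound comes precisely from the binary encoding of constants.
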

\begin{proof}
Given a $\CRASPpos$ program $P$ with $\lang(P) \ne \emptyset$, we first apply \cref{thm:crasppos_to_tlh} to obtain a $\TLP$ program $P'$ 
with $\lang(P') = \lang(P)$ of size at most exponential in the size of $P$.
By \cref{lem:tl-model}, if $\lang(P') \ne \emptyset$, there exists a string of length at most polynomial in the size of $P'$,
i.e., exponential in the size of $P$, as required.

To see that the exponential bound is optimal in the worst case,
consider the infinite family $\{P_n\}_{n \ge 1}$ of $\CRASPpos$ programs $P_n := (\countl[a] = n)$ over the alphabet $\Sigma = \{a\}$, where the number $n$ is encoded in binary.
We observe that the smallest (and only) string contained in $\lang(P_n)$ has length $n$, which is exponential in the number of bits needed to encode $n$, so also exponential in the size of~$P_n$.
\end{proof}

\begin{theorem}\label{cor:exp-len-comp}
The length complexity of a $\CRASPpos$ program is exponential in the size of the program.
\end{theorem}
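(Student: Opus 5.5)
We prove the upper bound and the matching lower bound separately, reducing both to \cref{prop:crasp-model}.

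\textbf{Upper bound.} Fix $\complexity$ and two $\CRASPpos$ programs $E,E'$ of size at most $\complexity$ with $\lang(E)\setminus\lang(E')\neq\emptyset$. We build a $\CRASPpos$ program for $\lang(E)\setminus\lang(E')$. Take the DAG representations of $E$ and $E'$, rename the lines of $E'$ so they are disjoint from those of $E$, and concatenate them. Then add one final line $\phi_{\text{out}} := \phi_E \land \lnot \phi_{E'}$, where $\phi_E$ and $\phi_{E'}$ are the output lines. This line only uses Boolean connectives, so the result $D$ is still a $\CRASPpos$ program.

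The size of $D$ is at most $|E|+|E'|+O(1)\le 2\complexity+O(1)$. Its depth, precision and girth are the maxima of those of $E$ and $E'$. Since $\lang(D)=\lang(E)\setminus\lang(E')\neq\emptyset$, \cref{prop:crasp-model} gives a witness $w\in\lang(D)$ of length at most $2^{\poly(|D|)}$. Unfolding the proof of \cref{prop:crasp-model}, the bound is polynomial in $O(|D|\,2^{\poly(p,g,d)})$, where $p,g,d\le|D|$, so it is $2^{\poly(\complexity)}$. Taking the maximum over all such pairs gives $f_\Lang(\complexity)\le 2^{\poly(\complexity)}$.

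This bound is computable, so by the cited result of \citet{chen2025nonasymptotic} the learning procedure described earlier terminates.

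\textbf{Lower bound.} We use the family $P_n := (\countl[a]=n)$ from \cref{prop:crasp-model}, which has size $\complexity_n=\Theta(\log n)$. Compare $P_n$ with a trivially unsatisfiable program $E'$ of constant size, for example $\countl[a]<0$, or $\lnot(\countl[a]\ge 0)$ if only positive coefficients are allowed. Then $\lang(P_n)\setminus\lang(E')=\{a^n\}$. The shortest distinguishing string therefore has length $n=2^{\Omega(\complexity_n)}$, so $f_\Lang(\complexity)\ge 2^{\Omega(\complexity)}$ along this family.

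\textbf{Main obstacle.} The delicate step is the size accounting in the upper bound, namely that the exponent stays polynomial in $\complexity$. Two points need checking. First, the blow-up $2^{\poly(p,g,d)}$ in \cref{lem:decomposition} must be bounded using only $p,g,d\le\complexity$. Second, the Boolean combination of $E$ and $E'$ must stay inside $\CRASPpos$, which it does because $\lnot$ is permitted there. Once these hold, the bound is exponential both above and below, as the theorem claims.
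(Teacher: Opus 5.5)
Your proposal is correct and follows essentially the same route as the paper: combine the two programs with Boolean connectives into a program recognizing the distinguishing strings, apply \cref{prop:crasp-model}, and get the lower bound from the family $P_n = (\countl[a]=n)$. The differences are cosmetic: you use the one-sided difference $E \land \lnot E'$, which matches the definition of $f_\Lang$ exactly, where the paper uses the symmetric difference; and you make the lower bound explicit by pairing $P_n$ with a constant-size unsatisfiable program, whereas the paper leaves it implicit in the optimality part of \cref{prop:crasp-model}.
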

\begin{proof}
Given $\CRASPpos$ programs $P$ and $P'$,
the $\CRASPpos$ program $D := (P \wedge \neg P') \vee (\neg P \wedge P')$
recognizes exactly the set of strings that distinguish $P$ and $P'$.
By \cref{prop:crasp-model}, the smallest string in $\lang(D)$, if nonempty, is of length at most exponential in the size of $D$.
Thus, if $\lang(P) \ne \lang(P')$, there exists a distinguishing string of length at most exponential in the size of $P$ and $P'$. 
\end{proof}

\section{Implications for Transformers}
\label{sec:transformers}

The previous sections discussed length complexity results for $\CRASP$ and $\CRASPpos$, which we connect to transformers here.
We consider transformers that round numbers to a fixed number of bits of precision, but in two slightly different ways. A \emph{\rtfr} does not round inside self-attention, while a \emph{\frtfr} rounds even inside self-attention. 
Intuitively, the distinction boils down to being able to attend uniformly to every single position or only being able to attend to a fixed number of positions. 
Please see \cref{sec:transformer_definitions} for precise definitions.

For the first main result, because \rtfrs{} and $\CRASP$ programs define the exact same class of languages (cf.~\citealp[Theorem~3.1]{yang2025kneedeep}), the uncomputable length complexity bound from \cref{sec:undecidable} applies directly to these transformers. 

\begin{theorem} %
    \Rtfrs{} (even with two layers) do not admit non-asymptotic length generalization.
    \label{th:trans-unbounded}
\end{theorem}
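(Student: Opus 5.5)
The plan is to transfer \cref{cor:crasp_undecidable_complexity} to transformers via the depth-preserving equivalence of \citet[Theorem~3.1]{yang2025kneedeep}. Take the hypothesis class $\hypclass$ of languages recognized by \rtfrs{} with at most two layers. Its representation scheme $\Lang$ maps an encoding of a transformer (architecture, weights, positional encodings) to the language it recognizes. Membership in $\Lang(E)$ is decidable: we simulate the transformer on the input with the prescribed rounding, and this is a finite computation. So the hypotheses of \citet[Lemma~3.4]{chen2025nonasymptotic} hold, and it suffices to show that language equivalence, equivalently emptiness, is undecidable for two-layer \rtfrs{}.

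Suppose emptiness were decidable for two-layer \rtfrs{}. Given a $\CRASP$ program $\phi$ of depth $2$, we apply the effective, depth-preserving translation from $\CRASP$ to \rtfrs{} to obtain a two-layer transformer $\tfr_\phi$ with $\lang(\tfr_\phi)=\lang(\phi)$. We then decide whether $\lang(\tfr_\phi)=\emptyset$. The formulas built in the proof of \cref{thm:undecidable} are all of depth $2$, so this would decide emptiness for depth-$2$ $\CRASP$, contradicting \cref{thm:undecidable}. Hence emptiness is undecidable. Equivalence is therefore undecidable too, since emptiness reduces to equivalence with a fixed two-layer transformer rejecting every string (for example, one whose output ignores its input and is constantly negative). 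By \citet[Lemma~3.4]{chen2025nonasymptotic}, no computable bound on length complexity exists. For more layers, the same argument works after padding with identity layers, or directly because the class contains the two-layer one.

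The main obstacle is making sure the translation from $\CRASP$ to transformers is \emph{computable}, not merely an existence statement, and that it preserves depth exactly. In particular, the depth-$2$ formulas must yield two attention layers. Subtleties arise from the strict operator $\Psop$, from $\Hsop$, and from nested counting such as $\countl[x\land\countl[\$]=i]$ in $\psi_x$. Each of these has counting depth at most $2$, and the cited theorem counts depth in exactly this sense. I would check that the construction of \citet{yang2025kneedeep} is explicit layer by layer: each counting term becomes a uniform-attention head, and each Boolean or linear comparison becomes a feed-forward gadget. This would show the map $\phi\mapsto\tfr_\phi$ is computable and yields the claimed two-layer bound.
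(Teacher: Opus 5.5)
Your proposal is correct and follows the paper's route. The paper's whole argument is that $(1,\log n)$-precision transformers and \CRASP{} coincide in a depth-preserving way \citep[Theorem~3.1]{yang2025kneedeep}, so \cref{cor:crasp_undecidable_complexity} transfers; you make explicit what it leaves implicit (an effective translation, decidable membership, and reducing emptiness to equivalence against a transformer that always rejects).

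One caution about the ``two layers'' part. The relativization in \cref{lem:crasp_rpol} puts $\countl[\$]$ inside every count, so the relativized $\Hsop\phi_1$, for example, has depth $3$, not $2$. So for depth $2$ you should not claim that the formulas of \cref{thm:undecidable} already have depth $2$. Instead, invoke the nonemptiness-preserving reduction to depth $2$ in \cref{app:crasp_two}, or give each equation its own disjoint copy of the variable symbols, which removes the need for relativization.
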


This theorem implies that no learning algorithm can decide if a \rtfr{} has seen enough data.
The same bound applies to any class of transformers which are known to contain $\CRASP$. 
For instance, transformers with polynomial padding tokens, which subsume $\mathsf{FO}$-uniform $\mathsf{TC}^0$ \citep{merrill2025exact}, transformers with polynomial temperature scaling \citep{yang-etal-2025-simulating}, and limit transformers \citep{huang2024formalframeworkunderstandinglength}.

In contrast, by bounding the precision within attention, we find that \frtfrs{} do 
admit length generalization.
\begin{theorem} \label{thm:frtfr_lg}
    \Frtfrs{} admit length generalization with exponential length complexity.
\end{theorem}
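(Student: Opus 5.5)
The plan is to move between \frtfrs{} and $\CRASPpos$ with only polynomial blow-up in each direction, and then reuse \cref{cor:exp-len-comp}. For an upper bound on length complexity we only need the direction from transformers to logic: every \frtfr{} $\tfr$ should translate into an equivalent $\TLP$ (or $\CRASPpos$) program of size polynomial in $|\tfr|$. Here $|\tfr|$ counts the number of layers, the width, and the number of precision bits. For the lower bound we need the converse direction: a family of transformers of small size whose shortest accepted string is exponentially long.

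\emph{Upper bound.} I would invoke the characterization of \citet{li2025characterizing}: fixed-precision transformers, with rounding also inside attention, recognize exactly the languages definable in $\TLP$. Concretely, I would reprove a quantitative version of their translation by induction on layers.

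Every activation takes one of at most $2^{O(p)}$ values. So each coordinate value at layer $\ell$ can be described by a Boolean predicate $\psi_{\ell,v}$.

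The attention step is where the work lies. With rounding inside attention, the softmax-weighted sum saturates: an attention weight below $2^{-p}$ rounds to zero. Consequently the output at position $i$ depends only on which (key, value) types occur among positions $j\le i$, and on how many times each type occurs, capped at some threshold $c=2^{O(p)}$.

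Such capped counts are expressible as $\countl[\psi]\ge c'$ with $c'\le c$ written in binary. This gives a $\CRASPpos$ program of size polynomial in $|\tfr|$ and precision $O(p)$. The feedforward and residual steps are finite lookup tables over the $2^{O(p)}$ value types and become Boolean combinations.

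Composing this translation with \cref{cor:exp-len-comp}, via the difference program $D$, gives a distinguishing string of length at most exponential in $|\tfr|+|\tfr'|$. The generic learner described above then applies.

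\emph{Lower bound.} I would construct, for each $n$, a one-layer \frtfr{} with $O(\log n)$ bits of precision recognizing $\{a^n\}$ over $\{a\}$, or the language $\countl[a]\ge n$. This is the transformer analogue of the family $P_n$ in \cref{prop:crasp-model}.

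One route is the converse direction of \citet{li2025characterizing}: compile $\countl[a]=n$, or its $\TLP$ form with nested $\Psop$, into a transformer. I would rather avoid the linear-in-$n$ depth that the nested $\Psop$ form would give. To do so, I would use attention in which a \bos{}-like position receives weight about $2^{-p}$ relative to the $a$ positions, so that the rounded average detects when the count reaches $n\approx 2^{p}$.

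The main obstacle is this precision bookkeeping. On the upper-bound side, I must verify that the counting threshold induced by rounding is $2^{O(p)}$ and hence binary-encodable in $O(p)$ bits, so that the resulting $\CRASPpos$ program stays polynomial in $|\tfr|$. On the lower-bound side, I must show that $O(\log n)$ bits suffice to count to $n$ before saturation. If the direct construction fails, I would fall back on the weaker statement that the length complexity is at most exponential, with tightness inherited through the equivalence.
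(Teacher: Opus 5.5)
\textbf{The upper bound has a gap; the lower bound is fine.} Your lower bound is essentially the paper's: a one-layer \frtfr{} of precision $p$ recognizing $\countl[a]\ge 2^p$, or formally the polynomial translation of \cref{thm:crasppos_to_transformers}. The problem is that your upper bound rests on translating \frtfrs{} into $\TLP$ or $\CRASPpos$ with size \emph{polynomial} in $p$, $d$, $L$.

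For $\TLP$ this is impossible, and your own lower-bound example shows it. That transformer has size polynomial in $p$, but its shortest accepted string has length $2^p$. By \cref{lem:tl-model}, any polynomial-size $\TLP$ equivalent would accept a string of polynomial length.

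For $\CRASPpos$ your sketch does not deliver polynomial size either. The ``types'' you must enumerate are whole vectors in $\F^d$: the query at position $i$, the keys and values at positions $j$, and the inputs of $f^{(\ell)}$. There are $2^{pd}$ of them, and each position-wise map becomes a lookup table over them. Even the $2^{O(p)}$ values of a single coordinate are already exponential in $|\tfr|$. Accordingly, the paper's translation in \cref{app:transformer_to_logic} has size $O(|\Sigma|Lp^2d\,2^{p^3d})$, and the paper explicitly leaves open whether a polynomial translation exists. Once the program is exponentially large, using \cref{cor:exp-len-comp} as a black box (witness length exponential in program size) only gives a \emph{doubly} exponential bound.

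The missing idea is to use the finer shape of the $\CRASPpos$ bound. The translation of \cref{thm:crasppos_to_tlh} produces a $\TLP$ program of size $O(n\cdot 2^{\poly(p',g',\delta')})$, where $n,p',g',\delta'$ are the size, precision, girth and depth of the $\CRASPpos$ program. So the blow-up is only \emph{linear} in $n$, and the witness length from \cref{lem:tl-model} is polynomial in the $\TLP$ size.

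What you actually need to verify is therefore not that the translation is small. It is that the translation keeps depth $O(L)$, precision $O(p)$ and girth $O(p)$, while its size is $2^{\poly(p,d,L)}$. Your observation that every threshold is at most $2^{O(p)}$, hence needs only $O(p)$ bits, is exactly what gives the precision bound; it is just being used for the wrong purpose. With these parameters, the $\TLP$ program for the symmetric difference of two transformers has size $2^{\poly(p,d,L)}$, and \cref{lem:tl-model} gives a singly exponential distinguishing string.

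Equivalently, you can unfold each threshold count $\countl[\psi]\ge c'$ into $c'$ nested $\Psop$'s. This gives an exponential-size $\TLP$ program directly, to which you apply \cref{lem:tl-model} instead of \cref{cor:exp-len-comp}. This is how the paper argues.
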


This is shown using the expressive equivalence between \frtfrs{} and $\CRASPpos$, which is a corollary of the
equivalence between \frtfrs{} and $\TLP$ \citep[cf.][]{li2025characterizing},
and \cref{thm:crasppos_to_tlh}. 
We provide a direct translation into $\CRASPpos$ in \cref{app:size_bounds}, and derive upper bounds on its size, depth, precision, and girth.

First, we show a singly exponential length complexity lower bound for \frtfrs{} via a polynomial sized translation from $\CRASPpos$ to \frtfrs{}.

\begin{proposition}
    The length complexity of \frtfrs{} with precision $p$, dimension $d$, and depth $L$ is $\Omega(2^{\poly(p,d,L)})$. 
\end{proposition}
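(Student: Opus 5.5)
Following the sentence preceding the statement, the plan is to give a polynomial-size translation from $\CRASPpos$ into \frtfrs{} and then reuse the hard family from \cref{prop:crasp-model}.

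The hard instance is $P_n := (\countl[a] = n)$. Its only accepted string is $a^n$, so it has length complexity $n$ relative to, say, the empty language (e.g., $\countl[a] < 0$, or $a \land \lnot a$). With $n$ written in binary, $|P_n| = O(\log n)$. It therefore suffices to build, for each $n$, a \frtfr{} $T_n$ recognizing $\lang(P_n)$ whose precision, dimension, and depth are all polynomial in $\log n$. We also need a trivial \frtfr{} for $\emptyset$. Then any pair of hypotheses of parameter size $(p,d,L)$ in this family is distinguished only by the string $a^n$, with $n = 2^{\Omega(p)}$ (or $2^{\Omega(\poly(p,d,L))}$ after reparametrizing), which gives the lower bound.

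The core step is constructing $T_n$, i.e.\ counting to $n$ using only $O(\log n)$ bits. A fixed-precision attention that rounds inside attention cannot average over all positions, so it cannot compute $\countl[a]$ directly. Instead I would go through $\TLP$-style operations that it can implement. $\Psop\phi$ can be computed by attending with scores concentrated on positions satisfying $\phi$ (strict past via masking), and $\Hsop$ is handled similarly. Counting to $n$ via the nested-$\Psop$ formula of \cref{thm:crasppos_to_tlh} would cost depth $n$, which is too much.

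So I would instead use a binary counter. Over the unary alphabet, position $i$ can compute the bits of $i$ layer by layer. Bit $0$ of $i$ is the parity of $i$, and "$i \geq n$" is a threshold. Over $\{a\}$, the counter $\countl[a]$ at the last position is just the length. I would try to compute the bits of the position index by a depth-$O(\log n)$ construction, where each layer looks at the previous position's bits (strict-past attention with tie-breaking toward the most recent position, expressible in fixed precision via rightmost-hard attention from the appendix definitions) and performs an increment using an MLP on $O(\log n)$ bits. The final MLP checks equality with $n$.

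The main obstacle is that "attend to the immediately previous position" is not obviously available in fixed precision without positional encodings. If it is not, the fallback is to not insist on unary strings. Instead, change the family to a larger alphabet where the shortest witness is forced to be a binary counter sequence, e.g.\ a $\TLP$-style $O(\log n)$-size formula over bit-symbols enforcing a correct incremental enumeration of all $\log n$-bit numbers. This is the classic exponential-length satisfiable formula, which is expressible with $\Psop$/$\Hsop$ of polynomial size if the symbols mark bit-flip events. Translated gate-by-gate into a \frtfr{}, it has polynomial precision, dimension, and depth, and its shortest model has length $2^{\Omega(\log n)}$ in the relevant parameter.

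I expect fixing the correct counter encoding in $\TLP$ to be the hardest part. I would first try the unary route using the appendix's positional tie-breaking, since it gives the cleanest bound.
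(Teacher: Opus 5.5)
Your overall reduction is right and is also what the paper does. You take the family $\countl[a]=n$ (the paper's explicit witness is $\countl[a]\ge 2^p$), realize it by a \frtfr{} whose precision, dimension and depth are polynomial in $\log n$, and compare it with a transformer for $\emptyset$. The paper gets that transformer from the polynomial translation in \cref{thm:crasppos_to_transformers}.

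The gap is in your construction of $T_n$: both of your routes need something a \frtfr{} provably cannot do.

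\textbf{Primary route.} By \cref{thm:transformer_to_tlp}, every \frtfr{} is simulated position-wise by a $\TLP$ formula. The appendix definitions also have neither positional encodings nor rightmost-hard attention; attention is rounded softmax.
\begin{itemize}
\item Over a unary alphabet (plus $\bos$), an easy induction shows that every $\TLP$ formula defines a Boolean combination of thresholds $i\ge k$, because $\Psop$ and $\Hsop$ map such sets to sets of the same form.
\item So the parity of $i$ (your bit~$0$), and with it any positional binary counter, is not computable at any depth.
\item Independently, your increment recurrence moves information only one position per layer. So $O(\log n)$ layers could not produce the bits of $i$ even if a previous-position operator were available.
\end{itemize}

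\textbf{Fallback.} This is ruled out by \cref{lem:tl-model}: every satisfiable $\TLP$ formula has a model of length polynomial in its size. Hence no polynomial-size $\TLP$ formula can force an exponentially long counter witness. The classical construction needs a next/yesterday operator to relate consecutive counter values. Translating such a formula into a transformer therefore gives only polynomial-length witnesses.

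\textbf{The missing idea.} The exponential comes from the precision, not from logical structure. Your premise that rounding inside attention prevents counting is too strong. Rounding only kills attention weights below $2^{-s}$, so counts up to $2^{\Theta(s)}$ remain visible.

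Concretely, take one layer that attends uniformly (all scores $0$) over $\bos\cdot w$, with value $1$ at $\bos$ and $0$ elsewhere. At position $i$ it outputs $\round(1/i)=\lfloor 2^s/i\rfloor 2^{-s}$.
\begin{itemize}
\item This value is nonzero iff $i\le 2^s$. At the last position $i=|w|+1$, so a feed-forward test yields a one-layer \frtfr{} with precision $O(s)$ and constant dimension recognizing exactly $\countl[a]\ge 2^s$. Its shortest member has length $2^s$.
\item If $2^s\ge (n+1)(n+2)$, then $\lfloor 2^s/i\rfloor=\lfloor 2^s/(n+1)\rfloor$ only for $i=n+1$. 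So $\countl[a]=n$ is also recognized, with precision $O(\log n)$.
\end{itemize}
This is the same idea as in \cref{thm:crasppos_to_transformers}: a threshold $k$ written in binary costs only precision $O(\log k)$. That keeps the translation polynomial and carries the exponential lower bound of \cref{prop:crasp-model} over to \frtfrs{}.
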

\begin{proof}
    The length complexity of $\CRASPpos$ is exponential (\cref{cor:exp-len-comp}). By \cref{thm:crasppos_to_transformers} there is a polynomial transformation from $\CRASPpos$ programs into transformers, attaining the bound.
\end{proof}

Intuitively, this means that any learning algorithm would need to check strings of exponential length before being able to identify the ground-truth solution which is able to length generalize.
Consider the witnessing language $\countl[a]\geq 2^p$ (all strings with at least $2^p$ $a$'s). 
A $1$-layer \frtfr{} with precision $p$ can recognize this language, and this transformer only accepts strings of exponential length $\geq 2^p$. 

For the matching singly exponential upper bound, the translation shown in \cref{app:transformer_to_logic} produces a $\CRASPpos$ program with exponential size, but linear depth, precision, and girth.
Thus, the translation into $\TLP$ shown in \cref{thm:crasppos_to_tlh} does not incur an additional blowup, and the entire program will have exponential size. 

\begin{proposition}
    \Frtfrs{} can be converted to exponentially large equivalent 
    $\TLP$ formulas.
\end{proposition}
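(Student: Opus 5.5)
We compose two translations and track sizes: first a \frtfr{} to a $\CRASPpos$ program (the translation of \cref{app:transformer_to_logic}), then $\CRASPpos$ to $\TLP$ via \cref{thm:crasppos_to_tlh}. The key is to keep depth, precision and girth of the intermediate program linear in the transformer parameters, so \cref{thm:crasppos_to_tlh} adds no second exponential.

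\emph{Step 1: transformer to $\CRASPpos$.} Fix a \frtfr{} with precision $p$, width $d$ and $L$ layers. Every activation coordinate lies in a finite set of $2^{O(p)}$ representable values. So each layer's position-wise computation (embeddings, feed-forward, residual, rounding) is a finite function of finitely many bit-valued predicates, one per (coordinate, bit) pair. We express it as a Boolean combination of the previous layer's predicates; a truth-table description costs $2^{O(pd)}$ per layer.

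For attention with rounding inside attention, the attention score of the query at $i$ on key $j$ depends only on the finite types of positions $i$ and $j$. Because rounding is fixed-precision, the normalized weight sum and the weighted value sum are determined by how many positions $j \le i$ of each key type $\tau$ there are, but only up to a threshold $2^{O(p)}$. Beyond that threshold, further contributions are absorbed by rounding.

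Hence each output bit of attention is a Boolean function of the predicates $\countl[\tau] \ge c$ with $c \le 2^{O(p)}$. These are decomposed counting formulas with girth $1$ and precision $O(p)$, and each adds depth $1$. Their number per layer is $2^{O(pd)}$ (types times thresholds), and the Boolean glue is also exponential. After $L$ layers the resulting DAG program has size $L\cdot 2^{O(pd)}$, depth $O(L)$, precision $O(p)$ and girth $1$.

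\emph{Step 2: $\CRASPpos$ to $\TLP$.} Apply \cref{thm:crasppos_to_tlh}, or more directly its inner construction, to the decomposed program: each $\countl[Q]\ge c$ becomes a nested $\Psop$ formula of size $O(c)$ in the DAG representation. Summed over the program, this multiplies size by at most $2^{O(p)}$. The total is therefore $L\cdot 2^{O(pd)}$, which is exponential in the transformer's size.

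\emph{Main obstacle.} The delicate part is the saturation argument inside attention: showing that fixed precision inside attention genuinely makes the result depend only on thresholded counts with thresholds $2^{O(p)}$. I would first handle the softmax denominator. Rounded exponentials are at least some fixed positive quantum, so once a type's count exceeds the representable range, the rounded running sum is frozen.

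If the paper's rounding semantics instead rounds the final average, I would argue differently. The average of values from finitely many types is a rational function of the counts, and its rounding changes only at $2^{O(p)}$ count boundaries per type ratio. If necessary, I would fall back on the known equivalence with $\TLP$ \citep{li2025characterizing} to justify the thresholding and use the above only for size accounting.
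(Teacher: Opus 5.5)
Your two-step plan matches the paper's: translate the transformer into a $\CRASPpos$ program of exponential size with linear depth, precision and girth, so that \cref{thm:crasppos_to_tlh} adds only one more exponential factor. However, your Step~1 is not the translation of \cref{app:transformer_to_logic}. You insist on girth~$1$, using only per-type predicates $\countl[\tau]\ge c$. All of the attention arithmetic (the denominator $B_i$, the rounded weights, the weighted value sum and its rounding) then lands in ``Boolean glue'', and you only assert that this glue has exponential size. That assertion is the whole content of the proposition: existence of some equivalent $\TLP$ formula is already known from \citet{li2025characterizing}. Nothing you wrote supports it. The glue is a Boolean function of $2^{O(pd)}$ threshold predicates. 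The truth-table method you use for the position-wise maps would therefore cost $2^{2^{O(pd)}}$. Expanding $\sum_\tau e_\tau\countl[\tau]\ge c$ into a disjunction over admissible vectors $(c_\tau)_\tau$, as in \cref{lem:decomposition}, costs $(c+1)^{2^{pd}}$. Both are doubly exponential.

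To close the gap you must exploit the additive structure. The paper does this inside $\CRASPpos$: it groups positions by the bits of the rounded exponentiated score, writing $C_{\vec{q},B}=\sum_{b}2^{b+s-1}\countl[A_{\vec{q},b}]$, and does the same for the weighted value sum. This gives linear terms of girth $O(p)$ with constants $2^{O(p)}$, so \cref{lem:decomposition} costs only $(c+1)^{g}=2^{O(p^2)}$. Alternatively, keep girth~$1$ but argue explicitly that, for each query value $\vec{q}$, the iterated addition of $2^{O(pd)}$ numbers of $O(p)$ bits, the division and the rounding are all computed by Boolean DAG programs of size polynomial in the number of inputs.

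Two smaller points. First, under the paper's semantics $B_i$ is an exact sum, not a frozen rounded one. What saturates is that $\alpha_{ij}=\round(A_{ij}/B_i)=0$ for every $j$ once $B_i>2^{s}\max\F$. In addition, at most $2^{s}$ positions have $\alpha_{ij}>0$, since $\sum_j\alpha_{ij}\le 1$, and this second fact is what makes the numerator depend only on type counts truncated at $2^{s}$. Second, your fallback for the case where only the final average is rounded would fail. The output would then depend on ratios of counts; uniform attention already yields MAJORITY, which no $\TLP$ formula defines. That is the $(1,\log n)$-precision setting, for which no such translation exists.
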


Whether or not there exists a polynomial size translation from \frtfrs{} into $\CRASPpos$, we leave as an open question.
Then, using this proposition we can derive the upper bound.  

\begin{proposition}
    The length complexity of \frtfrs{} with precision $p$, dimension $d$, and depth $L$ is $O(2^{\poly(p,d,L)})$. 
\end{proposition}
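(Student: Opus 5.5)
Given two \frtfrs{} $\tfr$ and $\tfr'$ with precision $p$, dimension $d$, and depth $L$, we bound the length of a shortest string on which they disagree. By the preceding proposition, each converts into an equivalent $\TLP$ formula of size $2^{\poly(p,d,L)}$. Concretely, we use the translation of \cref{app:transformer_to_logic}, which produces a $\CRASPpos$ program of size $2^{\poly(p,d,L)}$ whose depth, precision and girth are all $O(\poly(p,d,L))$. We then apply \cref{thm:crasppos_to_tlh}. Its blow-up factor $2^{\poly(p',g',d')}$ is again $2^{\poly(p,d,L)}$, because the precision $p'$, girth $g'$ and depth $d'$ are only polynomial in $p,d,L$. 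So the product of the two factors stays singly exponential. Call the resulting $\TLP$ programs $Q$ and $Q'$.

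Next we form the symmetric-difference program $D := (Q\wedge\neg Q')\vee(\neg Q\wedge Q')$, exactly as in the proof of \cref{cor:exp-len-comp}. $D$ is still a $\TLP$ program in DAG representation, of size $O(|Q|+|Q'|) = 2^{\poly(p,d,L)}$, and $\lang(D)$ is exactly the set of strings distinguishing $\tfr$ and $\tfr'$. If $\lang(\tfr)\neq\lang(\tfr')$, then $\lang(D)\neq\emptyset$. \Cref{lem:tl-model} then gives a witness of length polynomial in $|D|$, that is, $\poly(2^{\poly(p,d,L)}) = 2^{\poly(p,d,L)}$. Taking the maximum over all pairs of transformers with these parameters gives $f(\cdot) = O(2^{\poly(p,d,L)})$.

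The main obstacle is the first step: checking that the transformer-to-$\CRASPpos$ translation keeps depth, precision and girth polynomial and not exponential. \Cref{lem:decomposition} and \cref{thm:crasppos_to_tlh} are exponential in exactly these parameters, so if any of them were exponential we would get a doubly exponential bound. I would check this layer by layer. With rounding inside attention, each attention output is determined by threshold counts of query–key matches up to a bound $2^{O(p)}$. Such counts are expressible with constants of $O(p)$ bits (precision $O(p)$) and sums of $\poly(d,p)$ terms (girth polynomial). Each layer adds $O(1)$ counting depth, so the depth is $O(L)$. The exponentially many possible value vectors ($2^{pd}$) are enumerated as separate program lines. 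This only affects the size $n$, which enters the final bound linearly, so it is harmless.
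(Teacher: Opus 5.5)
Your proposal is correct and follows the paper's own route. You translate each transformer via the appendix construction into a $\CRASPpos$ program of exponential size but with $O(L)$ depth and $O(p)$ precision and girth, pass to $\TLP$ via \cref{thm:crasppos_to_tlh} with no further blow-up, and invoke \cref{lem:tl-model}. Your explicit symmetric-difference program $D$ simply spells out a step that the paper's two-line proof leaves implicit, mirroring \cref{cor:exp-len-comp}.
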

\begin{proof}
    By the previous proposition, we can find an equivalent $\TLP$ program of exponential size. 
    By \cref{lem:tl-model}, we only need to check strings of length polynomial in the program size, and thus exponential in the transformer size.  
\end{proof}

\section{Discussion}

We have shown that there is no algorithm for learning length generalizing transformers in the general case, and any algorithm for \frtfrs{} must train on strings of exponential length.  
This was shown by analyzing the complexity of the language emptiness problem for $\CRASP$ and $\CRASPpos$, in order to show that length complexity was uncomputable and exponential, respectively.

Our results provide a perspective on the observed difficulty of length generalization in transformers. 
First, length generalization in practice is often quite sensitive to the initialization of the model, learning rate, and other intricacies \citep{zhou2024transformers}. 
Furthermore, even in controlled experiments, length generalization is only ever observed to be partial. 
For instance, transformers may exhibit generalization from lengths 50 to 150 \citep{huang2024formalframeworkunderstandinglength}, or even 40 to 500 \citep{li2025characterizing,deletang2023neural}, but performance inevitably degrades at long enough lengths. 
In these experiments, these transformers have enough depth, width, and precision to express the ground truth solution, so any deficiencies must be as a result of learning dynamics.
Our results give one reason for these failures: any learning algorithm may need to see unfeasibly long strings in order to learn a perfectly length-generalizing transformer. 

We have seen that length-generalization as an empirical phenomenon does not follow the typical scaling laws in machine learning, and logical complexity provides a useful perspective. 
In particular, scaling model size and data does not necessarily help , while expressivity in $\CRASP$ accurately predicts length generalization \citep{huang2024formalframeworkunderstandinglength}.
This insight is not novel, but our bounds bring this insight closer to practice by proving the first length-complexity bounds for entire classes of transformers, rather than restricted sub-classes.

An important future research direction is to identify other fragments of C-RASP and limit transformers, for which computable length generalization bounds exist. In addition, we believe that a more fine-grained analysis of the sample size required for length generalization --- although it has not been much addressed for transformer languages --- is important. For example, although $\CRASPpos$ has an exponential length generalization bound, it is possible that the required sample size is still of a polynomial size. We leave this as future work.

\section*{Impact Statement}
This paper presents fundamental results in  machine learning. Although there may be potential societal consequences of our work, none are direct enough to be specifically highlighted here. 

\section*{Acknowledgements}

We thank Aarohi Srivastava and Katsumi Ibaraki for helpful feedback on writing, as well as Dana Angluin and Michael Hahn for fruitful discussion. 
We thank the anonymous reviewers for their helpful comments.
This material is based in part upon work supported by Deutsche 
Forschungsgemeinschaft (grant number 522843867), the European Union
\includegraphics[width=0.75cm]{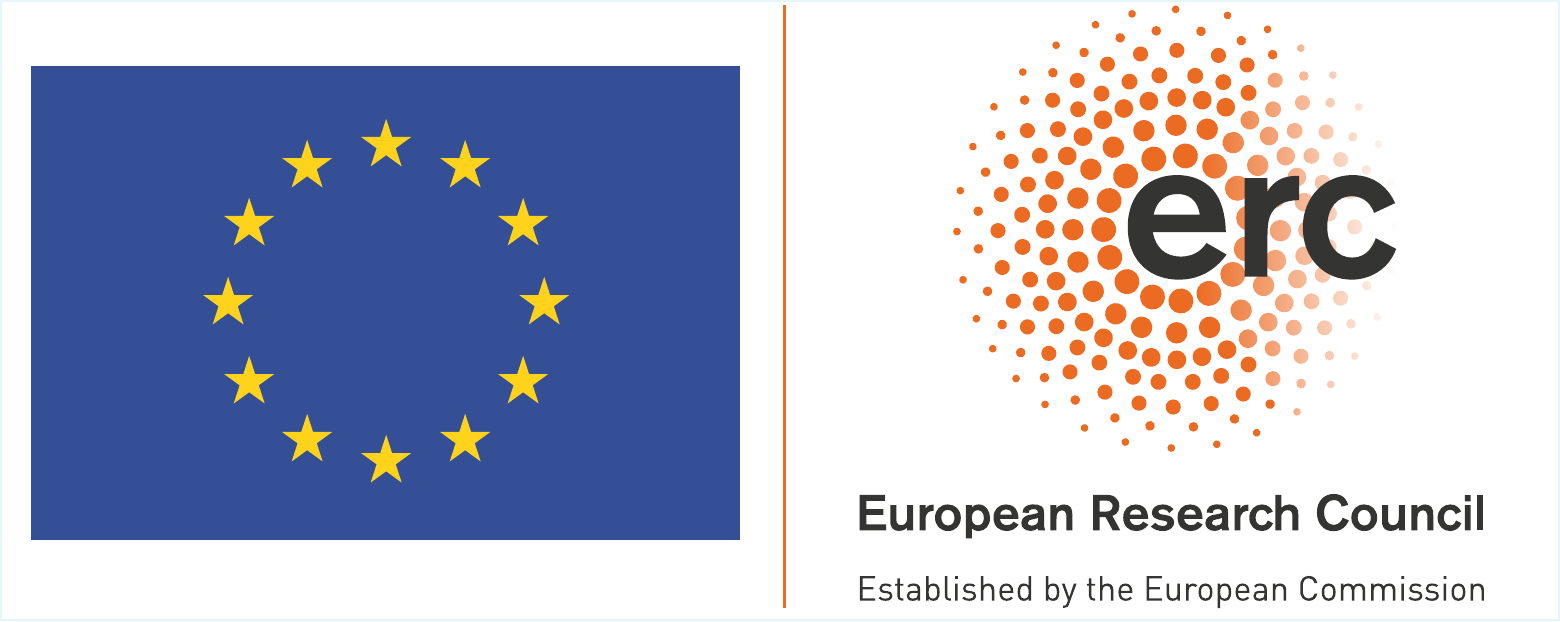}
(ERC, LASD, 101089343, \url{https://doi.org/10.3030/101089343} and FINABIS, 101077902, \url{https://doi.org/10.3030/101077902}) and the US 
National Science Foundation (grant number~2502292).
Andy Yang is supported by the US National Science Foundation Graduate Research Fellowship Program under Grant No.~2236418.

Views and opinions expressed are those of the author(s) only and do not necessarily reflect those of the European Union or the European Research Council Executive Agency. Neither the European Union nor the granting authority can be held responsible for them.

\bibliographystyle{icml2026}
\bibliography{ref}
\appendix
\onecolumn

\section{Relationship to Previous Length Complexity Bounds}

\subsection{\protect\citet{chen2025nonasymptotic}}
\label{app:nalg_crasp_definitions}

While \citeauthor{chen2025nonasymptotic}'s $\CRASP^{2,K,T}$ has a length complexity of $T^{O(K)}$, we show that $\CRASP_2$ has uncomputable length complexity. 
Here, we show why this is not a contradiction.
We will show that $\CRASP^{2,K,T}$ is strictly distinct from the full class of depth $2$ programs $\CRASP_2$. 
The subset $\CRASP^{2,K,T}$  contains $\CRASP$ programs of the following form:
\begin{definition}\citep[Definition~5.4]{chen2025nonasymptotic}\label{def:crasptwo}
    The class $\CRASP^{2,K,T}$ contains programs over alphabet $\Sigma=\{a,b\}$ parameterized by $0\leq z\leq T$ and distinct $\alpha_\psi,\beta_\psi,\lambda_\psi\in[-T,T]$ for $\psi\in\Psi=\{\psi_1,\psi_2,\ldots,\psi_K\}$, where $\frac{\alpha_\psi}{\beta_\psi}\in(0,1)$, $1\leq K\leq T^2$, and $\sum_{\psi\in\Psi} \lambda_\psi>z$, and which implement the formula
    \[\phi:=\sum_{\psi\in\Psi} \lambda_\psi\countl\psi>z\countl\top \]
    where each $\psi$ is of the form
    \[\psi:=\alpha_\psi\countl a > \beta_\psi\countl\top.\]
\end{definition}

First, we define Dyck languages of bounded depth, which will be the separating language class: 

\begin{definition}
    Define $\dyck_k$, the Dyck language of bounded depth $k$, as the language of strings of the following form:
    \begin{align*}
        \dyck_1&=(ab)^*\\
        \dyck_{k+1}&=(a\dyck_{k}b)^*
    \end{align*}
    In other words, strings in $\dyck_k$ satisfy $0\leq \countl a-\countl b\leq k$ everywhere, and satisfy $\countl a=\countl b$ at the end of the string.
\end{definition}

We prove that bounded Dyck languages are not expressible in $\CRASP^{2,K,T}$ using a geometric argument.
Intuitively, we show that all formulas of $\CRASP^{2,K,T}$ will eventually output a constant value for enough strings from any $\dyck_k$ for every $k$.
Then since $\dyck_{k+1}$ contains both strings in $\dyck_k$ and outside of $\dyck_k$, this shows that no such program can recognize any $\dyck_k$.

First we state a lemma, from which the theorem follows. 

\begin{lemma}\label{lem:crasp_band_constant}
    Let $\Psi$ be a set of $\CRASP$ programs $\psi$ of the form $\alpha_\psi\countl a > \beta_\psi\countl\top$.  
    There exists a constant $\epsilon_{k,\Psi}$ such that for all $\psi\in\Psi$ and all $\str{w}\in\dyck_{k}$, either $(\countl \psi)^{\str{w},i}=i\pm\epsilon_{k,\Psi}$ for all $i$, or $(\countl \psi)^{\str{w},i}=\pm\epsilon_{k,\Psi}$ for all $i$. 
    
\end{lemma}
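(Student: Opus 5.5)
Fix $k$ and $\Psi$. The key observation is that on any string $\str{w}\in\dyck_k$ the counts of $a$ are pinned to half the prefix length up to a bounded error. Since $0\le \countl a-\countl b\le k$ at every prefix and $\countl a+\countl b=\countl\top=i$, we get
\[
\countl a^{\str{w},i}=\tfrac{i}{2}+\delta_i,\qquad 0\le \delta_i\le \tfrac{k}{2}.
\]
So the formula $\psi$ holds at position $i$ iff $\alpha_\psi(\tfrac i2+\delta_i)>\beta_\psi i$, that is, iff $(\tfrac{\alpha_\psi}{2}-\beta_\psi)\,i+\alpha_\psi\delta_i>0$. Write $\gamma_\psi:=\alpha_\psi/2-\beta_\psi$ for the slope. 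The perturbation term satisfies $|\alpha_\psi\delta_i|\le |\alpha_\psi|k/2$, which is bounded independently of $\str{w}$ and $i$.

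I would then split into cases according to the sign of $\gamma_\psi$.

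\emph{Case $\gamma_\psi>0$.} For every $i> N_\psi:=|\alpha_\psi|k/(2\gamma_\psi)$, $\psi$ is true. Hence $\countl\psi^{\str{w},i}\ge i-N_\psi$, and trivially $\countl\psi^{\str{w},i}\le i$. So $\countl\psi=i\pm N_\psi$ for all $i$.

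\emph{Case $\gamma_\psi<0$.} Symmetrically, $\psi$ is false for all $i>|\alpha_\psi|k/(2|\gamma_\psi|)$. Hence $0\le\countl\psi\le N_\psi$.

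\emph{Case $\gamma_\psi=0$.} This means $\alpha_\psi=2\beta_\psi$, so $\alpha_\psi/\beta_\psi=2\notin(0,1)$. This case is excluded in $\CRASP^{2,K,T}$. If one wants the lemma for arbitrary $\Psi$, I would treat it separately: $\psi$ becomes $\alpha_\psi\delta_i>0$, which is not eventually constant (e.g.\ in $(ab)^*$ it alternates). This shows the lemma really needs $\gamma_\psi\neq0$, and I would add the hypothesis $\alpha_\psi\neq 2\beta_\psi$, which is implied by the ratio condition in \cref{def:crasptwo}. With integer coefficients, $|\gamma_\psi|\ge 1/2$ whenever it is nonzero. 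So each $N_\psi\le |\alpha_\psi|k$ is finite and explicit.

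Finally, set $\epsilon_{k,\Psi}:=\max_{\psi\in\Psi}\lceil N_\psi\rceil$. This is finite because $\Psi$ is finite, and it does not depend on $\str{w}$ since the $\delta_i$ bound is uniform over $\dyck_k$. This gives the dichotomy claimed in \cref{lem:crasp_band_constant}. Note that which alternative holds depends only on the sign of $\gamma_\psi$, not on $\str{w}$, which is what the subsequent theorem will use.

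The main obstacle is the degenerate slope $\gamma_\psi=0$. I expect to resolve it by the ratio condition $\alpha_\psi/\beta_\psi\in(0,1)$, which rules it out, as argued above. Everything else is the routine bounded-perturbation estimate.
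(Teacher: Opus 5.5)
Your proof is correct and is the paper's argument in algebraic form. The paper draws Dyck prefixes as a path in a width-$k$ band along the diagonal of the $(\countl a,\countl b)$-plane and argues that each line $\alpha_\psi x=\beta_\psi(x+y)$ through the origin eventually separates from that band, which is exactly your domination of the bounded term $\alpha_\psi\delta_i$ by $\gamma_\psi i$; your version also gives an explicit $\epsilon_{k,\Psi}\le\max_{\psi}|\alpha_\psi|k$ and shows that the alternative depends only on $\psi$, not on $\str{w}$, which is what the subsequent theorem uses.

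Your treatment of $\gamma_\psi=0$ is right and catches a real slip. The lemma as stated fails for $\alpha_\psi=2\beta_\psi>0$: on $(ab)^m$ one gets $\countl\psi^{\str{w},i}=\lceil i/2\rceil$. The paper's proof misses this because it lets $\ell_2$ have slope exactly $1$, i.e.\ be the edge of the band that every balanced prefix touches. The ratio condition of \cref{def:crasptwo} as printed does exclude this case, though it actually makes every $\psi$ constant from position $1$. If that condition is instead meant as a threshold $\countl a>r\countl\top$ with $r\in(0,1)$, as the paper's lines on both sides of the diagonal suggest, then $r=\tfrac12$ is allowed and still has to be handled separately.
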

\begin{proof}
    This can be seen through a geometric interpretation of $\CRASP$ formulas. 
    We can plot the number of $a$'s and $b$'s in each word on a 2d grid, where the $x$-axis denotes $\countl a$'s and the $y$-axis denotes $\countl b$. 
    Every string can be viewed as a path through the grid, where steps to the right denote $a$'s and steps up denote $b$'s. 
    
    Then, each $\phi$ defines a line $\ell_\phi$ demarcating a half-plane, such that the path of a word lands in a half-plane iff that word satisfies $\phi$. 
    Visually, we can think of $\dyck_k$ as the set of strings contained within the rectangular band coming out of the origin with width $k$.
    That is, the set of strings which correspond to paths that start and end on the line $\countl[a]=\countl[b]$ without ever leaving the band. 
    
    Let $\ell_1$ be the line with least slope greater than $1$ (or the $y$-axis), and let $\ell_2$ be the line with greatest slope less than or equal to $1$ (or the $x$-axis). 
    \cref{fig:dyck_diagram} visualizes $\dyck_k, \ell_1$, and $\ell_2$. 
    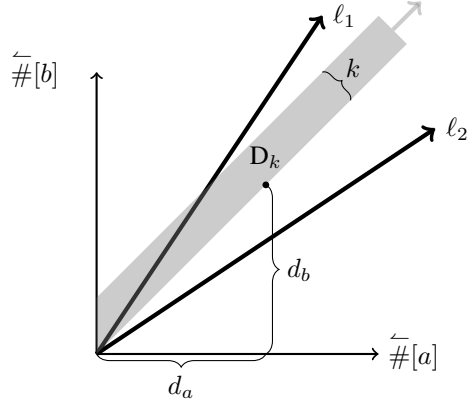
\begin{figure}
        \begin{center}
    
    \begin{tikzpicture}[x=0.75cm,y=0.75cm]
        
        Draw axes with number ticks
        \draw[thick,->] (0,0) -- (0,5);
        \node[left] at (-0.5,5) {$\countl[\sympred{b}]$}; 
        
        \draw[thick,->] (0,0) -- (5,0) node [right] {$\countl[\sympred{a}]$}; %

        \draw[ultra thick, ->] (0,0) -- (4,6) node [right] {$\ell_1$}; 
        \draw[ultra thick, ->] (0,0) -- (6,4) node [right] {$\ell_2$}; 

        \fill[black!50, opacity=0.4] (0,0) -- (0,1) -- (5,6) -- (5.5,5.5) -- (0,0) -- cycle;

        \node at (3,3.5) {$\dyck_k$};
        \draw[->, black!50, opacity=0.4, ultra thick] (5.25,5.75) -- (5.75,6.25);
        \draw[decorate,decoration={brace,amplitude=3pt}] (4,5) -- (4.5,4.5) node [midway, above right] {$k$};
        \node[fill, circle, black, inner sep=0.2ex] at (3,3) {};

        \draw[decorate,decoration={brace,amplitude=5pt,mirror}] (0,0) -- (3,0) node[midway,below=1ex] {$d_a$};
        \draw[decorate,decoration={brace,amplitude=5pt,mirror}] (3,0) -- (3,3) node[midway,right=1ex] {$d_b$};
    \end{tikzpicture}
\end{center}
        \caption{A visualization of the proof idea. $\ell_1$ and $\ell_2$ are the two lines above and below the line $\countl a =\countl b$, the horizontal band contains all strings in $\dyck_k$, and $\epsilon_{k,\Psi}=d_a+d_b$.}
        \label{fig:dyck_diagram}
    \end{figure}

    Because $\ell_1$ and $\ell_2$ are distinct lines coming out of the origin, the distance between them increases without bound the farther we get from the origin. 
    Then there is some length $\epsilon_{k,\Psi}$ such that any strings in $\dyck_k$ with length greater than $\epsilon_{k,\Psi}$ will have paths that do not cross $\ell_1$ or $\ell_2$. 
    The truth values of each $\psi\in\Psi$ will then be fixed to a constant $\top$ or $\bot$ for all such strings.
\end{proof}

\begin{theorem}
    For all $k$, $\dyck_k$ is not expressible in $\CRASP^{2,K,T}$ for any $K,T$. 
\end{theorem}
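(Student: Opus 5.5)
The plan is to argue by contradiction, using \cref{lem:crasp_band_constant} to show that any $\phi\in\CRASP^{2,K,T}$ has a fixed truth value on all sufficiently long strings of $\dyck_{k+1}$ that end on the diagonal. Suppose $\phi=\sum_{\psi\in\Psi}\lambda_\psi\countl\psi>z\countl\top$ defines $\dyck_k$. We apply \cref{lem:crasp_band_constant} with band width $k+1$ (since $\dyck_k\subseteq\dyck_{k+1}$, the band for $\dyck_{k+1}$ is the one to use). This gives a constant $\epsilon:=\epsilon_{k+1,\Psi}$ with the following property. For every $\psi\in\Psi$ there is a fixed choice $s_\psi\in\{0,1\}$ such that, for every $\str{w}\in\dyck_{k+1}$ and every $i$, we have $(\countl\psi)^{\str{w},i}=s_\psi\, i\pm\epsilon$. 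Here the choice depends only on which side of the line $\ell_\psi$ the diagonal $\countl a=\countl b$ lies. That is the content of the geometric argument: far from the origin, a band path cannot cross any $\ell_\psi$, because no $\ell_\psi$ has slope exactly $1$ (that would need $\alpha_\psi=2\beta_\psi$ in the $(\countl a,\countl b)$ coordinates, which cannot hold since $\alpha_\psi/\beta_\psi\in(0,1)$ forces $\alpha_\psi<\beta_\psi$ in absolute value).

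Now we evaluate $\phi$ at the last position $n=|\str{w}|$ of any $\str{w}\in\dyck_{k+1}$. The left side equals $\bigl(\sum_\psi\lambda_\psi s_\psi\bigr)n+O(\epsilon\sum_\psi|\lambda_\psi|)$, and the right side equals $zn$. Set $\Lambda:=\sum_\psi\lambda_\psi s_\psi-z$, which is an integer independent of $\str{w}$. If $\Lambda\neq0$, then for all $n>\epsilon\sum_\psi|\lambda_\psi|/|\Lambda|$ the sign of the difference is the sign of $\Lambda$. So $\phi$ is constant on all strings of $\dyck_{k+1}$ longer than some $N$.

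To finish, pick $n>N$ and two strings of that length in $\dyck_{k+1}$: the string $(ab)^{n/2}\in\dyck_k$, and a string of the same length containing $a^{k+1}b^{k+1}$, which lies in $\dyck_{k+1}\setminus\dyck_k$. They get the same value under $\phi$, but exactly one of them is in $\dyck_k$, which is a contradiction.

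The main obstacle is the degenerate case $\Lambda=0$. Then the leading terms cancel, and the outcome depends on the bounded error terms, which I cannot control uniformly. For this case I would first sharpen \cref{lem:crasp_band_constant}: beyond the crossing region, each $\countl\psi$ is $s_\psi i+c_\psi(\str{w})$, where $c_\psi$ is determined by the prefix of length at most $\epsilon$. Take the two strings above with a common long prefix $(ab)^{m}$ of length $\ge\epsilon$, placing the deviation $a^{k+1}b^{k+1}$ afterwards. Then all the $c_\psi$ coincide, so the two strings again get identical truth values. If even this fails, I would fall back on the side constraint $\sum\lambda_\psi>z$ to rule out $\Lambda=0$ for the relevant choice of $s_\psi$.
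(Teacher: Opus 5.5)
Your proof is correct and follows the paper's route: apply \cref{lem:crasp_band_constant} to $\dyck_{k+1}$, reduce $\phi$ at the last position to $C_1|\str{w}|+C_2>z|\str{w}|$, and contradict $\dyck_k\subsetneq\dyck_{k+1}$. Your common-prefix refinement should be the main argument rather than a fallback. Once $2m$ exceeds the crossing threshold, $(ab)^m(ab)^{k+1}$ and $(ab)^m a^{k+1}b^{k+1}$ give every $\psi$ the same truth value at every position, so $\phi$ agrees on them. This settles the case $\Lambda=0$ (i.e.\ $C_1=z$), which the paper's proof does not address because it treats the bounded error $C_2$ as a constant. It also makes the appeal to $\sum_\psi\lambda_\psi>z$ unnecessary; that constraint only controls the full sum, not $\sum_\psi\lambda_\psi s_\psi$, so it could not have ruled out $\Lambda=0$ anyway.
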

\begin{proof} 
     Assume for sake of contradiction that $\phi$ is a $\CRASP^{2,K,T}$ program which recognizes $\dyck_k$
     Let $\Psi$ be the set of all $\psi$ in $\phi$. 
     By \cref{lem:crasp_band_constant}, we can partition $\Psi$ into the set of formulas $\Psi_\top$ (which are satisfied at $|\str{w}|\pm\epsilon_{k,\Psi}$ positions) and $\Psi_\bot$ (which are satisfied at $\pm\epsilon_{k,\Psi}$ positions) over strings in $\dyck_{k+1}$.
     We can plug these values into the definition of $\phi$ to find that the truth value of $\phi$ is equivalent to:
     \[\sum_{i\colon \psi\in\Psi_\top} \lambda_\psi(|\str{w}|\pm \epsilon_{k,\Psi}) + \sum_{i\colon \psi\in\Psi_\bot} \lambda_\psi(\pm \epsilon_{k,\Psi})>z|\str{w}|.\]
     Here, the truth value of $\phi$ only depends on the length of the string $\str{w}\in \dyck_k$ on which it is evaluated. 
     Further grouping terms of the above equation into constants, we can see that the entire formula boils down to the following expression depending only on the length of the string:
     \[C_1|\str{w}|+C_2>z|\str{w}|.\]
     For $\str{w}$ with length greater than some sufficiently large threshold, $C_1|\str{w}|$ and $z|\str{w}|$ vastly outweigh $C_2$, and thus the truth value of $\phi$ becomes constant for any other string beyond the threshold length.
     This implies either $\dyck_{k+1}\subseteq \lang(\phi)$ or $\dyck_{k+1}\subseteq \Sigma^*\setminus \lang(\phi)$, restricted to strings of length beyond some threshold.  
     However, since $\lang(\phi)=\dyck_k$, this implies that either $\dyck_{k+1}\subseteq\dyck_k$ or $\dyck_{k+1}\subseteq \Sigma^*\setminus \dyck_k$, beyond some length threshold. 
     This contradicts the fact that $\dyck_k\subsetneq \dyck_{k+1}$. 
     Thus, no such program $\phi$ exists. 
\end{proof}

As a final note, we observe that \citet{chen2025nonasymptotic} use the absolute size of the constants $T$ as precision, while we use the bit-precision $p$ of the constants (exponentially smaller than the absolute size). 
This accounts for why it is non-contradictory that the depth $1$ $\CRASP$ fragment they study has quadratic length complexity, while we show an exponential lower bound for $\CRASP$ of depth $1$.

\subsection{\protect\citet{izzo2025quantitative}}
\label{sec:izzo}

\Citet{izzo2025quantitative} prove quantitative length generalization bounds for two variants of transformers. First, they consider transformers in which:
\begin{itemize}
\item Parameters and activations are stored with $(1,1)$-precision.
\item There is only $1$ layer.
\item Absolute position encodings (APEs) are periodic.
\item Relative position encodings (RPEs) are added to the attention logits; they must be local, that is, zero when the query and key distance exceeds a maximum distance.
\item The gap between the maximal attention logit and any non-maximal attention logit is guaranteed to be at least $\gamma$.
\item Attention logits are scaled by $\log n$ where $n$ is the input length.
\item Position-wise functions are $1$-Lipschitz continuous.
\end{itemize}
Then they obtain a length generalization bound exponential in $2^{1/\gamma}$ and polynomial in various properties of the transformer.
Our \cref{thm:frtfr_lg} gives a similar exponential bound for \frtfrs, but using a more standard definition of transformer and for an arbitrary number of layers.

Second, \citet{izzo2025quantitative} consider transformers in which:
\begin{itemize}
\item Parameters and activations are stored with $(\infty, \infty)$-precision.
\item There are at most $2$ layers.
\item Periodic APEs and local RPEs as above.
\item The gap between the maximal RPE and any non-maximal RPE is guaranteed to be at least $\gamma$.
\item The RPEs are scaled by $\log i$ where $i$ is the query position.
\item Position-wise functions are $1$-Lipschitz continuous.
\end{itemize}
Then they obtain a length generalization bound exponential in various properties of the transformer.
In the case where the absolute and relative position encodings are set to zero, this is a standard transformer, so their bound would appear to contradict our \cref{th:trans-unbounded}.
The difference seems to be in their assumption of Lipschitz-continuous position-wise functions. Our \cref{th:trans-unbounded} is based on a translation of $\CRASP$ to transformers, and all three such translations that we are aware of depend on some non-Lipschitz position-wise function: layer normalization \citep{yang2024counting}, the Heaviside step activation function \citep{huang2024formalframeworkunderstandinglength}, or rounding \citep{yang2025kneedeep}. Such a function seems critical for testing the equalities in system of a Diophantine equations.

\section{Transformer Size Bounds}\label{app:size_bounds}

\subsection{Definitions}
\label{sec:transformer_definitions}

\def\explogit{A}
\begin{definition} \label{def:fixed_precision}
A \emph{fixed-precision number} with $\numbits$ total bits and $\fracbits$ fractional bits is a rational number of the form $m \cdot 2^{-\fracbits}$ where $m$ is an integer and $-2^{\numbits-1} \le m < 2^{\numbits-1}$. We write $\mathbb{F}_{\numbits,\fracbits}$, or simply $\mathbb{F}$, for the set of all fixed-precision numbers with $p$ total bits and $s$ fractional bits.

We represent negative numbers using two's complement. If $\bitind \in [\numbits]$, the $\bitind$-th bit of a fixed-precision number $x$, written $\bit{x}{\bitind}$, is defined as
\begin{align*}
\bit{x}{\bitind} &= \begin{cases}
1 & \text{if $\lfloor x / 2^{\bitind-\fracbits-1} \rfloor$ is odd} \\
0 & \text{otherwise.}
\end{cases}
\end{align*}
In two's complement we note that 
\begin{equation*}
    x = -2^{\numbits}\cdot\bit{x}{\numbits}+\sum_{\bitind\in[\numbits-1]} 2^{\bitind-\fracbits-1}\cdot \bit{x}{\bitind}.
\end{equation*}

If $x$ is a real number, we write $\round_{\mathbb{F}}(x)$ or simply $\round(x)$ for the greatest element of\/ $\mathbb{F}$ less than or equal to $x$. 
\end{definition}

\begin{definition}\label{def:transformer}
A (future-masked) \emph{\rtfr} of depth $k$ is a function $\tfr \colon \Sigma^* \to \mathbb{F}$, defined in terms of functions
\begin{align*}
E \colon \Sigma &\to \mathbb{F}^d  && \text{word embedding}\\
W_{\textnormal{Q}}^{(\ell)}, W_{\textnormal{K}}^{(\ell)}, W_{\textnormal{V}}^{(\ell)}, \colon \mathbb{F}^d &\to \mathbb{F}^d \qquad \ell = 1, \ldots, k  && \text{self-attention}\\
f^{(\ell)} \colon \mathbb{F}^d &\to \mathbb{F}^d \qquad \ell = 1, \ldots, k  && \text{feed-forward}\\
W_{\textnormal{out}} \colon \mathbb{F}^d &\to \mathbb{F} && \text{output unembedding}.
\end{align*}
On input $\str{w}$, $\tfr(\str{w})$ is computed as follows:
\begin{align}
\mathbf{h}^{(0)}_{i}(\str{w}) &= E(w_i) \label{eq:emb} \\
\intertext{For $\ell = 1, \ldots, k$:}
\mathbf{q}^{(\ell)}_{i}(\str{w}) &= W_{\textnormal{Q}}^{(\ell)}\left(\mathbf{h}^{(\ell-1)}_{i}(\str{w})\right) \\
\mathbf{k}^{(\ell)}_{i}(\str{w}) &= W_{\textnormal{K}}^{(\ell)}\left(\mathbf{h}^{(\ell-1)}_{i}(\str{w})\right) \\
\mathbf{v}^{(\ell)}_{i}(\str{w}) &= W_{\textnormal{V}}^{(\ell)}\left(\mathbf{h}^{(\ell-1)}_{i}(\str{w})\right) \\
\mat{s}^{(\ell)}_{ij}(\str{w}) &= \mathbf{q}^{(\ell)}_{i}(\str{w}) \cdot \mathbf{k}^{(\ell)}_{j}(\str{w}) \label{eq:att_logit} \\
A^{(\ell)}_i(w)& =\sum_{j=1}^{i} \round\mleft(\exp \mleft(\mat{s}^{(\ell)}_{ij}(\str{w})\mright)\mright)\,\mathbf{v}^{(\ell)}_{j}(\str{w})\\
B^{(\ell)}_i(w)&=\sum_{j=1}^{i}\round\mleft(\exp \mleft(\mat{s}^{(\ell)}_{ij}(\str{w})\mright)\mright)\\
\mathbf{c}^{(\ell)}_{i}(\str{w}) &= \round\left(\frac{A^{(\ell)}_i(w)}{B^{(\ell)}_i(w)}\right) \label{eq:att} \\
\intertext{where \cref{eq:att} evaluates to the average of all $\mathbf{v}_j^{(\ell)}$ if the denominator is $0$, and finally}
\mathbf{h}^{(\ell)}_{i}(\str{w}) &= f^{(\ell)}\left(\mathbf{c}^{(\ell)}_{i}(\str{w}) + \mathbf{h}^{(\ell-1)}_i(\str{w})\right) \\
\tfr(\str{w}) &= W_{\textnormal{out}}\left(\mathbf{h}^{(k)}_{|\str{w}|}(\str{w})\right).
\end{align}

We say that $\tfr$ \emph{accepts} $w$ if\/ $\tfr(\str{w})>0$. 

To measure the size of a transformer, we assume that the constants in all of its defining functions are given in binary
and the precision is given in unary.
\end{definition}

Note crucially that \cref{eq:att} is written so that even if $i \gg 2^\fracbits$, it is still possible to obtain nonzero values.

\begin{definition}
A (future-masked) \emph{\frtfr} is defined similarly to a \rtfr{} except the attention weights are rounded before using them to take a weighted average of the value vectors:
\begin{align*} 
    A^{(\ell)}_{i,j}(w)& = \round\mleft(\exp \mleft(\mat{s}^{(\ell)}_{ij}(\str{w})\mright)\mright) \\
    \alpha^{(\ell)}_{i,j}(\str{w}) &= \round\mleft( \frac{A^{(\ell)}_{i,j}(\str{w})}{B^{(\ell)}_i(\str{w})} \mright)\\
    \mathbf{c}^{(\ell)}_{i}(\str{w}) &= \round\left(\sum_{j=1}^{i}
    \alpha^{(\ell)}_{i,j}(\str{w}) \,
    \mathbf{v}^{(\ell)}_{j}(\str{w})\right)
\end{align*}

\end{definition}

We will often make use of the following operator in $\CRASP$, which does not increase its expressive power or affect the depth of formulas, but saves space when writing.
    \begin{align*}
            (\cif{\phi}{t_\text{then}}{t_\text{else}})^{w,i} &= \begin{cases}
                t_\text{then} & w,i\models\phi\\
                t_\text{else} & w,i\not\models\phi\\
            \end{cases}
    \end{align*}
\begin{lemma}[\citealt{yang2024counting}]
    Any formula $\phi$ of\/ $\CRASP$ that uses the $?$ operator can be converted into a formula that does not use the $?$ operator, defines the same language as $\phi$, and has the same depth as $\phi$.
\end{lemma}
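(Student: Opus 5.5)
The plan is to eliminate each occurrence of the ternary operator bottom-up, replacing it inside the linear sum where it appears. By definition, a term $\cif{\phi}{t_\text{then}}{t_\text{else}}$ can only occur as a summand $\alpha\cdot(\cif{\phi}{t_1}{t_2})$ of some comparison $\sum_{t\in\mathcal{T}}\alpha_t t\sim k$. So it suffices to show how to remove one innermost occurrence (one whose branches $t_1,t_2$ are ordinary terms, i.e.\ counts $\countl[\psi]$ or constants) without changing the truth value at any position $(\str{w},i)$ and without raising the depth. Induction on the number of occurrences then finishes the argument. If nesting in the branches is allowed, we treat the innermost occurrence first, and the argument below still applies since it never inspects the shape of $t_1,t_2$ beyond their depth.

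\textbf{Case split on the condition.} Let the atomic comparison be $\chi := \bigl(\alpha(\cif{\phi}{t_1}{t_2}) + R \sim k\bigr)$, where $R$ is the remaining sum. Define $\chi_1 := (\alpha t_1 + R\sim k)$ and $\chi_2 := (\alpha t_2 + R \sim k)$. We replace $\chi$ by
\[ (\phi\land\chi_1)\lor(\lnot\phi\land\chi_2). \]
For correctness, fix $\str{w},i$. If $\str{w},i\models\phi$, the ternary term evaluates to $t_1^{\str{w},i}$, so $\chi$ holds iff $\chi_1$ holds, and the replacement agrees with it. The case $\str{w},i\not\models\phi$ is symmetric. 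Disjunction is expressible via $\lnot$ and $\land$, so the result is still a $\CRASP$ formula, and since the semantics is compositional, replacing a subformula by an equivalent one preserves equivalence of the whole formula and hence the language.

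\textbf{Depth.} By the depth rules, $\depth(\chi)\ge\max(\depth(t_1),\depth(t_2),\depth(R))$, and $\depth(\chi_j)\le\depth(\chi)$. The step I expect to need care is the condition $\phi$. This depends on how depth is assigned to the ternary term, which the paper leaves implicit. The natural convention, matching a transformer evaluating $\phi$ at the current position before selecting a branch, is $\depth(\cif{\phi}{t_1}{t_2})=\max(\depth\phi,\depth t_1,\depth t_2)$. Under that convention $\depth(\phi)\le\depth(\chi)$, and since Boolean connectives do not increase depth, the replacement has depth exactly $\max(\depth\phi,\depth\chi_1,\depth\chi_2)\le\depth(\chi)$. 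So overall depth is preserved.

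The blow-up is at most a factor of two per eliminated occurrence. In the DAG representation it is even smaller, because $\phi$ and $R$ can be shared. The lemma claims nothing about size, so this does not matter here.
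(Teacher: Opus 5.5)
Your proposal matches the paper's proof: rewrite the comparison containing the conditional term as $(\phi\land\chi_1)\lor(\lnot\phi\land\chi_2)$ and iterate. Your depth bookkeeping, which counts the condition in the depth of the conditional term, makes explicit what the paper leaves implicit.

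One small repair is needed. The \emph{total} number of $?$-occurrences need not decrease, because the remaining sum $R$ (and the condition $\phi$) is copied into both branches of the rewrite. So first make all proper subformulas $?$-free by structural induction. Then induct on the number of conditionals inside the single comparison being rewritten, always splitting on an outermost conditional summand. An innermost conditional nested inside a branch is not a direct summand, so your template does not literally apply to it.
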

\begin{proof}
    Any comparison formula involving the $?$ operator can be written in the form 
    \[ (\cif{\psi_{\text{if}}}{t_{\text{then}}}{t_{\text{else}}}) + \sum_{\termind \in [\numterms]} t_\termind \ge C, \]
    which can be rewritten as
    \[ \left( \psi_{\text{if}} \land t_{\text{then}} + \sum_{\termind \in [\numterms]} t_\termind \ge C \right) \lor \left( \lnot\psi_{\text{if}} \land t_{\text{else}} + \sum_{\termind \in [\numterms]} t_\termind \ge C \right). \]
    This rule can be used iteratively to rewrite all the $?$ operators out of a formula. 
\end{proof}

\subsection{Previous Work}

Previous work by \citet{li2025characterizing} characterized the expressivity of \frtfrs{} in terms of temporal logic, and furthermore provided empirical evidence that transformers length-generalized very well on this class of languages.

\begin{theorem}[{\citealp[Thm.~3.2]{li2025characterizing}}] \label{thm:transformer_to_tlp}
    Every \frtfr{} can be simulated by a $\TLP$ formula.
\end{theorem}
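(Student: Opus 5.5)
Every \frtfr{} $\tfr$ should be translated into a $\TLP$ formula by induction on the layers. The invariant is that every coordinate of $\mathbf{h}^{(\ell)}_i(\str{w})$ takes finitely many values: it is a vector in the finite set $\mathbb{F}^d$. So for each layer $\ell$ and each $v\in\mathbb{F}^d$ I will build a formula $\phi^{(\ell)}_v$ with
\[\str{w},i\models\phi^{(\ell)}_v \iff \mathbf{h}^{(\ell)}_i(\str{w})=v.\]
The base case is immediate: $\phi^{(0)}_v := \bigvee_{\sigma : E(\sigma)=v}\sympred\sigma$. The acceptance formula at the end is $\bigvee_{v : W_{\textnormal{out}}(v)>0}\phi^{(k)}_v$. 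Feed-forward layers and residual connections are position-wise functions on a finite domain, so for fixed $\mathbf{c}$ and $\mathbf{h}^{(\ell-1)}$ the output is a finite lookup table, expressible by Boolean combinations. The whole difficulty is therefore the attention step.

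For attention, fix a query value $q=W_Q(u)$, where $u$ is the value of $\mathbf{h}^{(\ell-1)}_i$. Each key position $j\le i$ then has a logit $s_{ij}$ determined by $\mathbf{h}^{(\ell-1)}_j\in\mathbb{F}^d$, hence a rounded weight $a_j=\round(\exp(s_{ij}))\in\mathbb{F}$ and a value vector $\mathbf{v}_j$. Both depend only on the finite type of position $j$. The output $\mathbf{c}_i$ is therefore a function of the count vector $(N_v)_{v\in\mathbb{F}^d}$, where $N_v$ is the number of $j\le i$ with $\mathbf{h}^{(\ell-1)}_j=v$.

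The key claim is that, because of rounding, this function depends only on the counts \emph{thresholded} at some constant $C$ depending on $p$. The denominator $B_i=\sum_v N_v a_v$ may be exact. However, $\alpha_{ij}=\round(a_j/B_i)$ is zero once $B_i>a_j\cdot 2^{s}$. Also, $\mathbf{c}_i$ is a rounded sum of at most $2^{p}$ distinct nonzero terms $\alpha\cdot\mathbf{v}$, each with multiplicity. Saturation at the top of $\mathbb{F}$ (or wraparound, depending on the overflow convention) makes each multiplicity relevant only up to some bound $2^{O(p)}$.

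I would argue this by cases on the largest weight class present. Let $a^*$ be the maximal nonzero weight among present types, and let $M$ be the total count of types attaining weight $a^*$. If $M$ exceeds $2^{s}$, then every $\alpha_{ij}$ rounds to $0$, which yields a fixed output (the zero vector, modulo the denominator-zero convention). Otherwise each count is at most $2^{s}$. The lower-weight classes then either matter only up to bounded counts, or collectively push $B_i$ past the threshold. In every case, knowing each $N_v$ up to a threshold $C=2^{O(p)}$ determines $\mathbf{c}_i$.

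This step is the main obstacle. It requires care with how rounding interacts with large sums of small weights, for example many positions with weight $2^{-s}$. If the clean threshold argument fails, my fallback is a looser but sufficient claim: membership of each $N_v$ in finitely many intervals determines the output.

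Given the claim, predicates of the form $N_v\ge c$ with $c\le C$ are expressible in $\TLP$. This uses exactly the nested-$\Psop$ construction from the proof of \cref{thm:crasppos_to_tlh} applied to $\psi=\phi^{(\ell-1)}_v$. Then $\phi^{(\ell)}_v$ is a finite disjunction over the query type $u$ and over the thresholded count profiles that yield output $v$. This completes the induction and gives the $\TLP$ formula. Formula size is unbounded in this argument, but only existence is claimed here.
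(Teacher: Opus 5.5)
Your proof is correct, and it takes a more direct route than the paper's. The paper only cites this theorem. Its own path to the same conclusion is the simulation in \cref{app:transformer_to_logic}, followed by \cref{lem:decomposition} and \cref{thm:crasppos_to_tlh}. That route tracks activations bit by bit and writes the attention denominator and numerator as exact linear counting terms over $O(p)$ bit predicates. It adds $2^{p}$ to each nonzero summand so that negative values need no negative coefficients, and uses that at most $2^{s}$ positions get nonzero weight. These terms are compared with bounded constants to give a $\CRASPpos$ program, and only then is each linear threshold split into single thresholds $\countl[\phi]\ge c$ and unrolled into nested $\Psop$.

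You instead work with whole types $v\in\F^d$, show directly that $\mathbf{c}_i$ depends only on per-type counts capped at a constant, and go straight to nested $\Psop$. This bypasses the intermediate logic, the decomposition lemma and the sign-shift trick, because you never have to express a signed sum as a counting term. The cost is size. A disjunction over capped count profiles for all $|\F^d|=2^{pd}$ types has $(C+1)^{2^{pd}}$ disjuncts, so your formula is doubly exponential. Aggregating by bits keeps the paper's blow-up singly exponential, and the length-complexity upper bound in \cref{sec:transformers} depends on that. For the existence claim here the difference is irrelevant.

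Your key step can be made clean, which makes the fallback unnecessary. Do not rest it on saturation or wraparound of the output sum. Wraparound would make multiplicities matter modulo a constant, which $\TLP$ cannot express. Saturation does not control cancellation between positive and negative values.

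What actually works is this. The weight $\alpha_{ij}=\round(a_j/B_i)$ is $0$ exactly when $B_i>2^{s}a_j$, and $2^{s}a_j<2^{p-1}$.
\begin{itemize}
\item If some weight survives, then $B_i<2^{p-1}$. Every type with $a_v\ge 2^{-s}$ then has $N_v<2^{p-1+s}$, so $B_i$ and $\mathbf{c}_i$ are determined by counts capped at $C=2^{p-1+s}$.
\item If a capped count of a type with $a_v>0$ reaches $C$, all weights are forced to $0$, so $\mathbf{c}_i=\round(0)=0$.
\item Types with $a_v=0$ are irrelevant.
\end{itemize}
This is what your case split on the maximal weight class is doing.

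Finally, read the zero-denominator case as uniform attention with rounded weights $\round(1/i)$, which vanish for $i>2^{s}$. Taken as an exact average of the $\mathbf{v}_j$, it would let a single layer compute majority, which is not $\TLP$-definable.
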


\begin{theorem}[{\citealp[Thm.~3.3]{li2025characterizing}}] \label{thm:tlp_to_transformer}
    Every $\TLP$ formula can be simulated by a \frtfr{} .
\end{theorem}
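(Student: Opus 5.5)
The plan is to simulate the $\TLP$ formula by a \frtfr{} built bottom-up along its DAG (program) representation, one layer per temporal operator. Each layer writes the truth value of one new subformula as a $0/1$ coordinate of the residual stream and copies all earlier coordinates unchanged. Concretely, for a program $P_1:=\phi_1,\dots,P_k:=\phi_k$ we keep the invariant that after layer $\ell$, coordinate $j$ of $\mathbf{h}^{(\ell)}_i(\str{w})$ equals $1$ if $\str{w},i\models P_j$ and $0$ otherwise, for every $j$ already processed. The base case is the embedding $E$, which sets the indicator coordinates of $\sympred\sigma$ for each $\sigma\in\Sigma$. Boolean connectives are handled by the feed-forward maps $f^{(\ell)}$: on $\{0,1\}$-valued inputs, $\lnot x = 1-x$ and $x\land y=\max(0,x+y-1)$ are computable exactly in fixed precision. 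So each layer can also fold in every Boolean definition that depends only on already computed coordinates.

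The only nontrivial steps are $\Psop$ and $\Hsop$. Since $\Hsop\phi \equiv \phi\land\lnot\Psop\lnot\phi$, it suffices to realise $\Psop\psi$ given a coordinate $x_j=[\str{w},j\models\psi]$. I would use one attention layer whose logits make positions satisfying $\psi$ dominate. Take $\mat{s}_{ij}= C x_j$ for a large constant $C$, so that $\round(\exp(C))$ is huge while $\exp(0)=1$. The value vector carries $x_j$, and the attention output $\mathbf{c}_i$ is then (up to rounding) the fraction of the weight on $\psi$-positions in $[1,i]$. This fraction is close to $1$ if some $j\le i$ satisfies $\psi$ and equals $0$ otherwise. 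Thresholding in $f^{(\ell)}$ then yields $[\exists j\le i:\ \str{w},j\models\psi]$, the non-strict version. The strict $\Psop\psi$ needs one more coordinate: $\Psop\psi$ holds iff either some $j\le i$ satisfies $\psi$ while $\lnot\psi$ holds at $i$, or at least two positions $\le i$ satisfy $\psi$. Alternatively, I would compute the non-strict past of $\psi\land\lnot(\text{``current is first }\psi\text{''})$ by chaining two non-strict layers. I expect a cleaner route to be: apply the non-strict operator to obtain $N_i$, note that $\Psop\psi$ at $i$ equals $N_{i-1}$, and emulate the one-step shift with a second attention layer.

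The main obstacle is precisely this interaction with the rounding in the \frtfr{}. In the $(1,1)$ model each weight $\alpha_{i,j}=\round(A_{i,j}/B_i)$ is individually rounded, so with $i\gg 2^{\fracbits}$ nonmaximal weights round to $0$ and could make the attention output vanish. I would therefore design the logits so that the relevant signal survives rounding. With $\mat{s}_{ij}=Cx_j$ and no matching position, all weights are uniform and may round to $0$, giving output $0$, which is the correct answer. With a matching position, the matching positions share the weight; if there are many of them, each share rounds to $0$ as well. To fix this, I would break ties toward the leftmost match using the $\bos$-free trick: attend with logit depending on the \emph{first} occurrence, computed by an extra layer marking $j$ with $\psi\land\lnot(\text{earlier }\psi)$. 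At most one position is marked, so it receives weight $\approx 1$ after rounding. The marking itself is circular, which is the point I expect to need care. I would resolve it by the inductive characterisation in \cref{thm:tlp_to_transformer} of \citet{li2025characterizing}, which we may cite directly as the statement. The construction above serves only to make the layer count (linear in the formula depth) and polynomial size explicit.
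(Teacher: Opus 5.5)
The paper gives no proof of this statement; it imports it from \citet{li2025characterizing}. Your closing move, citing that result, is therefore what the paper does. However, the construction you present as the substance of your argument, and as the basis for your layer and size claims, fails at its only nontrivial step. Citing the theorem replaces that construction; it does not repair it.

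Here is where it fails. With logits $\mat{s}_{ij}=Cx_j$, every non-$\psi$ position still contributes $\round(\exp(0))=1$ to $B_i$. A single early $\psi$-position then gets weight about $\round(\exp(C))/(\round(\exp(C))+i)$, which tends to $0$ as $i$ grows. So ``close to $1$'' is false even for few matches, before per-weight rounding enters. For many matches, each weight is at most $1/k$ and rounds to $0$ once there are more than $2^{\fracbits}$ of them, as you observe. The first-occurrence remedy is circular: the marker $\psi\land\lnot(\text{earlier }\psi)$ is exactly $\psi\land\lnot\Psop\psi$. The shift route is not available either, since without positional encodings a head cannot in general single out position $i-1$.

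The missing idea is to read the signal off one unique position instead of off the $\psi$-positions. Use the \bos{} that the paper's notion of simulation already prepends; the fix needs \bos, not a \bos-free trick.
\begin{itemize}
\item Give \bos{} value $1$ and every other position value $0$.
\item Choose logits so that $\round(\exp(\mat{s}_{ij}))$ is $1$ at \bos, a constant $E$ at $\psi$-positions, and exactly $0$ elsewhere (a sufficiently negative logit).
\item Then $\mathbf{c}_i=\round(1/(1+kE))$, where $k$ is the number of $\psi$-positions in $[1,i]$.
\item For $E\ge 2^{\fracbits}$ this equals $1$ when $k=0$ and $0$ for every $k\ge 1$, however large $k$ is. The per-weight rounding now works for you rather than against you.
\end{itemize}

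For the strict operator, take $(2^{\fracbits}-1)/2<E\le 2^{\fracbits}-1$. Then $\mathbf{c}_i$ is $1$, $2^{-\fracbits}$, or $0$ according as $k=0$, $k=1$, or $k\ge 2$. The feed-forward map can then output your own characterization $(\lnot\psi\land k\ge 1)\lor(\psi\land k\ge 2)$ of $\Psop\psi$, so no shift and no first-occurrence marker is needed.

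Combined with your Boolean and $\Hsop$ steps, this gives a self-contained proof. Note that with the single head per layer of \cref{def:transformer}, it uses one layer per occurrence of a temporal operator, not one per level of depth. So your claim that the layer count is linear in formula depth does not follow.
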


Their construction did not explicitly compute the size bounds, but a careful reading shows that a singly exponential blowup occurs when simulating \frtfrs{} in $\TLP$, and a polynomial blowup occurs the other way around. 

Our contribution is to derive precise size bounds in converting between $\CRASPpos$ and \frtfrs{}, which also turn out to be singly exponential and polynomial, in the same way. 
However, an additional point we note is that the depth of the $\TLP$ formula turns out to be exponential in the size of the transformer, while the depth of the $\CRASPpos$ formula is linear in the depth of the transformer. 
This perspective may be used to shed more precise insight on the sizes of transformers, similar to how \citet{yang2025kneedeep} used a depth-preserving equivalence between $\CRASP$ and \rtfrs{} in order to derive a strict depth hierarchy for transformers.

\subsection{Fixed-Precision Transformers to Logic}\label{app:transformer_to_logic}

\begin{definition}
Let $\vec{x}_i \colon \Sigma^* \to \F^d$ be a sequence of activation vectors depending on the input string $\str{w}$.
We will often omit the dependence on  $\str{w}$ and just write $\mat{x}_i$ instead of $\mat{x}_i(\str{w})$ whenever it is clear from context. 

We say that a set of $\CRASPpos$ predicates $P_{\bit{\vec{x}_c}{\bitind}}$ (for $c \in [d]$ and $\bitind \in [\numbits]$) simulates $\vec{x}$ if for every $c \in [d]$ and $\bitind\in[\numbits]$,
    \[\str{w},i \models P_{\bit{\vec{x}_c}{\bitind}} \iff \bit{\vec{x}_i(\str{w})_{c}}{\bitind}=1. \]
\end{definition}
We first define what it means for a \rtfr{} and a formula to simulate each other.

\begin{definition}
    We say that a $\CRASP$ formula $\phi$ \emph{simulates} a \rtfr{} $\tfr$ %
    if, for all $w\in \Sigma^*$,
    \begin{equation*}
    w,i\models \phi \iff \tfr(\bos\cdot w)_i>0.
    \end{equation*}
    In other words, $w\models\phi$ if and only if $\tfr$ accepts $\bos \cdot w$.
    
    We say that a \rtfr{} $\tfr$ with 
    depth $k$ and dimension $d$
    \emph{simulates} a formula $\phi$ of \/$\CRASP$ $\tfr$ if, for all $w\in \Sigma^*$, 
    \begin{equation*}
        \tfr(\bos \cdot w)_i>0 \iff w,i\models\phi.
    \end{equation*}
    Again, $\tfr$ accepts $\bos \cdot w$ if and only if $w\models\phi$.
\end{definition}

We first state a utility lemma which we use repeatedly to complete the main theorem. It says that any positionwise function on fixed-precision numbers can be defined in propositional logic.

\begin{lemma}\label{lem:finite_function}
    Let $f\colon\F^{d}\to\F^{d}$ be a function.
    Let $\vec{x}_i\colon\Sigma^*\to\F^d$ be a sequence of activation vectors simulated by predicates $P_{\bit{\vec{x}_c}{\bitind}}$ for $c\in[d]$ and $\bitind\in[\numbits]$. 
    Then there exist $\TLP$ programs 
    \begin{enumerate}[label=(\alph*)]
    \item\label{item:function} $P_{\bit{f(\vec{x})_c}{\bitind}}$ simulating $f(\vec{x}_i)$, with size $O(p^2d^2 2^{pd})$.
    \item\label{item:constant} $P_{\vec{x} = \vec{y}}$, where $\vec{y} \in \F^d$ (not depending on $w$) such that
    \begin{align*}
    \str{w},i\models P_{\vec{x}=\vec{y}} &\iff \vec{x}_{i}(\str{w})=\vec{y}
    \end{align*} 
    with size $pd$.
    \end{enumerate}
\end{lemma}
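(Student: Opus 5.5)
The idea is to treat $f$ as a finite Boolean function of the $pd$ input bits and write each output bit in disjunctive normal form over the predicates $P_{\bit{\vec{x}_c}{\bitind}}$. No temporal operators are needed: every formula below is a Boolean combination of the given predicates, so it is a $\TLP$ program of depth zero on top of them. I would prove part~\ref{item:constant} first, because it is the building block for part~\ref{item:function}.

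For part~\ref{item:constant}, fix $\vec{y}\in\F^d$. The equality $\vec{x}_i(\str{w})=\vec{y}$ holds iff every bit matches. I would define
\[ P_{\vec{x}=\vec{y}} := \bigwedge_{c\in[d]}\bigwedge_{\bitind\in[\numbits]} \ell_{c,\bitind}, \]
where $\ell_{c,\bitind}$ is $P_{\bit{\vec{x}_c}{\bitind}}$ if $\bit{\vec{y}_c}{\bitind}=1$ and $\lnot P_{\bit{\vec{x}_c}{\bitind}}$ otherwise. Correctness follows because the predicates simulate $\vec{x}$ and because a fixed-precision number is determined by its $\numbits$ bits in two's complement (\cref{def:fixed_precision}). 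The formula is a conjunction of $pd$ literals. Since each reference to a previously defined predicate has size $1$, the size is $O(pd)$, and it is exactly $pd$ if we count literals.

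For part~\ref{item:function}, I would define, for each $c\in[d]$ and $\bitind\in[\numbits]$,
\[ P_{\bit{f(\vec{x})_c}{\bitind}} := \bigvee_{\vec{y}\in\F^d \,:\, \bit{f(\vec{y})_c}{\bitind}=1} P_{\vec{x}=\vec{y}}. \]
At each position $i$ exactly one disjunct $P_{\vec{x}=\vec{y}}$ is true, namely the one with $\vec{y}=\vec{x}_i(\str{w})$. Hence the disjunction is true iff $\bit{f(\vec{x}_i(\str{w}))_c}{\bitind}=1$, which is exactly the simulation condition.

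For the size count, there are $|\F^d|=2^{pd}$ candidate vectors $\vec{y}$, so I would take one of two accounting choices.
\begin{itemize}
\item If the equality predicates are shared in the DAG, they cost $pd\cdot 2^{pd}$ once. Each of the $pd$ output-bit definitions then costs at most $2^{pd}$ references, for a total of $O(pd\,2^{pd})$.
\item If each output bit inlines its own conjunctions, each output costs $O(pd\,2^{pd})$, and over $pd$ outputs the total is $O(p^2d^2 2^{pd})$.
\end{itemize}
Either way the result lies within the claimed $O(p^2d^2 2^{pd})$.

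The only delicate points are that rounding and two's-complement conventions make bits a faithful encoding, and that the size accounting matches the DAG convention. I expect neither to be an obstacle, since the whole construction is a finite truth table.
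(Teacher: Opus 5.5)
Your proof is correct and follows the paper's argument. Both write each output bit of $f$ as a disjunction, over the truth table of $\F^d$, of minterms in the bit predicates, take $P_{\vec{x}=\vec{y}}$ to be a conjunction of $pd$ literals, and arrive at the same $O(p^2d^2 2^{pd})$ count. Your version is more careful on one point: by routing through $P_{\vec{x}=\vec{y}}$, your minterms include the negated literals for the zero bits, which is what makes exactly one disjunct true. The paper's displayed formula for part (a) conjoins only the predicates of bits equal to $1$, so read literally it does not pin down $\vec{x}_i=\vec{v}$.
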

\begin{proof}
    \labelcref{item:function} For each $c\in[d]$ and $\bitind\in[\numbits]$ we define the program line:
    \begin{align*}
        P_{\bit{f(\vec{x})_c}{\bitind}} &:= \bigvee_{\substack{\vec{v}\in\F^d\\\bit{f(\vec{v})_c}{\bitind}=1}} \bigwedge_{\substack{c'\in[d],\bitind'\in[\numbits]\\\bit{\vec{v}_{c'}}{\bitind'}=1}} P_{\bit{\vec{x}_{c'}}{\bitind'}}.
    \end{align*}
    The size of each such line is at most $pd\,2^{pd}$. 
    As there are $pd$ such lines, this increases the size of the program (additively) by $O(p^2d^22^{pd})$.

    \labelcref{item:constant} Define
    \begin{align*}
        P_{\vec{x}=\vec{y}} &:= \bigwedge_{c\in[d],\bitind\in[\numbits]} \left( 
        P_{\bit{\vec{x}_c}{\bitind}} 
        \leftrightarrow \bit{\vec{y}_c}{\bitind} \right).
    \end{align*}
    The size of this line is $pd$.

    Overall, the size of the program is $O(p^2d^2\,2^{pd})$. 
    Note that this is in $O(2^{\mathsf{poly}(x,y)})$.
    We note this can be generalized to multi-ary functions with still an exponential bound. 
\end{proof}

\begin{theorem}
    For a \frtfr{} $\tfr$ with bit precision $p$, hidden dimension $d$, and depth $L$, there exists a $\CRASPpos$ program $P$ with:
    \begin{itemize}
        \item Size $O(|\Sigma|Lp^2d2^{p^3d})$
        \item Depth $O(L)$
        \item Precision $O(p)$
        \item Girth $O(p)$ 
    \end{itemize}
\end{theorem}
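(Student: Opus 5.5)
We build the program layer by layer, by induction on $\ell = 0,\dots,L$: for each layer we construct predicates $P_{\bit{\vec{h}^{(\ell)}_c}{\bitind}}$ (for $c\in[d]$, $\bitind\in[\numbits]$) that simulate the activations $\vec{h}^{(\ell)}_i$. Each layer should add depth $O(1)$ and size $O(|\Sigma| p^2 d\, 2^{p^3 d})$; summing over the $L$ layers then gives the stated size bound and depth $O(L)$.

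\emph{Embedding.} Set $P_{\bit{\vec{h}^{(0)}_c}{\bitind}} := \bigvee_{\sigma\in\Sigma,\ \bit{E(\sigma)_c}{\bitind}=1} \sigma$. This has size $O(|\Sigma|pd)$.

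\emph{Position-wise parts.} The query, key and value maps $W_{\textnormal{Q}},W_{\textnormal{K}},W_{\textnormal{V}}$, the residual sum, and the feed-forward map $f^{(\ell)}$ are all functions on $\F^d$. Each is handled by \cref{lem:finite_function}\labelcref{item:function}, using only Boolean operations.

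\emph{Attention.} Because the query $\vec{q}_i$ takes one of finitely many values, we case-split on it with $P_{\vec{q}=\vec{y}}$ from \cref{lem:finite_function}\labelcref{item:constant}, or more economically by bits. For a fixed query $\vec{y}$, the rounded weight $\round(\exp(\vec{y}\cdot\vec{k}_j))$ is a finite function of $\vec{k}_j$, so it falls into one of at most $2^p$ classes $a\in\F$. The key fact is that $\round(A/B)$ with only $s$ fractional bits depends on the counts $N_a := \countl[P_{\vec{k}\in a}]$ only up to a threshold. A key class with weight $a$ contributes $\alpha_{ij} = \round(a/B_i)$, and this is nonzero only if $B_i \le a\,2^{\fracbits}$. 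Since $B_i\ge a N_a$, the value $B_i$ is relevant only while every $N_a \le 2^{O(p)}$; once any count of a positive-weight class exceeds about $2^{2p}$, all weights at or below the maximal class round to values we can determine from thresholds alone.

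So the attention output at position $i$ is determined by the vector of truncated counts $\min(N_{a,v}, 2^{O(p)})$, taken over pairs $(a,v)$ of weight class and value vector. Each truncated count is expressible by decomposed atoms $\countl[\phi] \ge c$ with $c\le 2^{O(p)}$, which gives precision $O(p)$. We can avoid enumerating value vectors jointly by working bitwise per coordinate of $\vec{c}_i$: for each output bit we take a disjunction over admissible count profiles that make that bit $1$. This is a finite table, and its lookup costs $2^{\poly(p)}$ per coordinate. I would arrange the counting to be per bit of $\vec{v}_c$, summing $2^{\bitind}$-weighted counts, which yields girth $O(p)$, to reach the claimed $2^{p^3 d}$ exponent: about $p$ weight classes, times $p$ bits, times thresholds of $p$ bits. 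The resulting counting formulas have depth one more than the predicates they count, which keeps the depth $O(L)$.

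\emph{Main obstacle.} The delicate step is the rounding analysis for $(1,1)$-precision attention. We must show rigorously that the $\round$ in $\alpha_{ij}$ makes the output depend only on threshold-bounded counts. This requires determining which weight classes survive, namely those with $\round(a/B)>0$, and bounding the necessary thresholds by $2^{O(p)}$. We also have to handle the zero-denominator convention and the $\bos$ token. My first attempt would be to prove that $\alpha_{ij}=0$ whenever $B_i > 2^{\fracbits+p}$ for all but the maximal class. From there I would derive an explicit threshold-count characterization of $\vec{c}_i$, and only then bound the size of the table.
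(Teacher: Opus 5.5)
Your skeleton matches the paper's: induction over layers, the embedding as a disjunction over $\Sigma$, position-wise maps via \cref{lem:finite_function}, and a case split on the query value. Your observation that $(1,1)$-precision attention depends only on threshold-truncated counts is correct, and it already yields a $\CRASPpos$ program of depth $O(L)$ and precision $O(p)$.

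\textbf{The gap.} It lies in the size and girth bound, which is the real content of the theorem. Your size estimate rests on ``about $p$ weight classes''. But the weight $\round(\exp(\vec q\cdot\vec k_j))$ ranges over $\F$, so there are up to $2^p$ classes, and also $2^p$ values of $v_c$. A disjunction over admissible profiles of the counts $N_{a,v_c}$ (or of the counts $N_{a,\bitind}$ per bit of $v_c$) is therefore a table indexed by $2^{\Omega(p)}$ separate counts. Such a table has doubly exponential size, not $2^{\poly(p)}$. Switching to $2^{\bitind}$-weighted sums does not by itself repair this. The coefficient $\round(a/B_i)$ of class $a$ depends on the query position through $B_i$, so one linear form over all classes and bits has girth about $p\,2^p$. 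A girth that large makes the later translation into $\TLP$ (\cref{thm:crasppos_to_tlh}, cost $2^{\poly(p,g,d)}$) doubly exponential.

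\textbf{The missing idea.} You need to compute the attention output exactly, with one weighted count term per case, instead of looking it up in a table. For a fixed query $\vec q$, the paper writes $B_i$ as a single weighted count, of girth $O(p)$, of the bit predicates $A_{\vec q,\bitind}$ of $\round(\exp(\vec q\cdot\vec k_j))$. It then treats $\alpha_{\vec q,j}$ and the product $\alpha_{\vec q,j}\vec v_{j,c}$ as finite functions whose bits are predicates. The attention numerator thus becomes one weighted count of girth $O(p)$, compared against each of the $2^{O(p)}$ candidate outputs.

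Two further ingredients are needed:
\begin{itemize}
\item For $\alpha_{\vec q,j}$ to be a genuine one-position predicate, you should also enumerate the $2^{O(p)}$ relevant values of $B_i$, just as for $\vec q$.
\item You must handle signs, because $\CRASPpos$ allows only coefficients in $\N$ while two's-complement values can be negative. The paper adds $2^{\numbits}$ to every summand with $\alpha_{\vec q,j}>0$. It enumerates their number $M\le 2^{\fracbits}$ (the bound of \citet{li2025characterizing}) and compares the sum with $x+M\cdot 2^{\numbits}$. Enumerating the bounded negative part would also work.
\end{itemize}
The case split over $2^{pd}$ queries and $2^{O(p)}$ denominators and targets then gives singly exponential size, girth $O(p)$ and precision $O(p)$.

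\textbf{On your stated main obstacle.} The rounding step is easy. Every weight is at most $\max\F<2^{\numbits-\fracbits-1}$, so $\round(a/B_i)>0$ forces $B_i<2^{\numbits-1}$. Hence either all $\alpha_{ij}$ vanish, with no exception for a maximal class, or fewer than $2^{\numbits+\fracbits-1}$ positions carry positive weight.
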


\begin{proof}
    We will inductively construct a $\CRASPpos$ program that simulates simulate $T$, while maintaining an upper bound on the size of the program.
    We will show for every activation $\mat{h}_{i,c}^\ell$ there exists a program line $P_{\langle \mat{h}_{c}^\ell\rangle_b}$ that simulates $\mat{h}_{i,c}^\ell$.
    To make our presentation more concise, we will often write many $\CRASPpos$ definitions condensed as a large formula in a single line. 
    To rewrite it in proper $\CRASPpos$ semantics, we can expand it into a program with size upper bounded by the number of symbols occurring in the formula. 

    The construction proceeds by induction on $\ell$. For $\ell=0$, the transformer consists of just a word embedding layer, which we simulate using the following program:
    \begin{align*}
        P_{\bit{\mat{h}^{(0)}_{c}}{\bitind}}:=\displaystyle\bigvee_{\substack{\sigma\in\Sigma \\ \bit{E(\sigma)_c}{\bitind}=1}} \sympred\sigma.
    \end{align*}
    Here there are $pd$ program lines, each of size $O(|\Sigma|)$. 
    This program has precision $O(1)$, size $P(pd|\Sigma|)$, and depth $O(1)$

    Now suppose the inductive hypothesis holds for layer $L$, and consider layer $(L+1)$.
    That is, first assume we have defined program lines $P_{\bit{\mat{h}^{(L-1)}_c}{\bitind}}$ for each $\bitind\in[\numbits]$ and $c\in[d]$ simulating the activations coming out of the $(\ell-1)$-th layer, with:
    \begin{itemize}
        \item Size $O(|\Sigma|(L-1)p^2d2^{p^3d})$
        \item Depth $O(L-1)$
        \item Precision $O(p)$
        \item Girth $O(p)$
    \end{itemize}
    To simulate the next layer, we want
    program lines $P_{\bit{\mat{q}^{(\numlayers)}_{c}}{\bitind}}$, $P_{\bit{\mat{k}^{(\numlayers)}_{c}}{\bitind}}$, and $P_{\bit{\mat{v}^{(\numlayers)}_{c}}{\bitind}}$ such that:
    \begin{align*}
        \str{w},i\models P_{\bit{\mat{q}^{(\numlayers)}_{c}}{\bitind}}&\iff \bit{\mat{q}^{(\numlayers)}_{i,c}(\str{w})}{\bitind}=1 \label{eq:q} \\
        \str{w},i\models P_{\bit{\mat{k}^{(\numlayers)}_{c}}{\bitind}}&\iff \bit{\mat{k}^{(\numlayers)}_{i,c}(\str{w})}{\bitind}=1\\
        \str{w},i\models P_{\bit{\mat{v}^{(\numlayers)}_{c}}{\bitind}} &\iff \bit{\mat{v}^{(\numlayers)}_{i,c}(\str{w})}{\bitind}=1.
    \end{align*}

    This allows us to simulate the attention weight matrices. 
    These program lines can be defined using \cref{lem:finite_function}, adding $O(p^2d^22^{p^2d})$ to the program size but not changing the depth, girth, or precision.

    Next, we want to compute the summands $A$ and $B$ in the numerator and denominator of attention. 
    One potential difficulty is that summands retrieve values from two positions ($i$ and $j$) simultaneously, whereas a $\CRASPpos$ program can retrieve a value from one position at a time. 
    However, since $\mat{q}_i$ can only take on finitely many values, we can simulate retrieval by enumerating all these possible values. 

    To perform this enumeration we write program lines $P_{\mat{q}^{(\numlayers)}=\vec{q}}$, $P_{\mat{k}^{(\numlayers)}=\vec{k}}$, and $P_{\mat{v}^{(\numlayers)}=\vec{v}}$ for each $\vec{q},\vec{k},\vec{v} \in \mathbb{F}^d$ (not depending on $w$) such that:
    \begin{align*}
       \str{w},i\models P_{\mat{q}^{(\numlayers)}=\vec{q}} &\iff \mat{q}_i^{(\numlayers)}(\str{w})=\vec{q}\\
        w,j\models P_{\mat{k}^{(\numlayers)}=\vec{k}} &\iff \mat{k}_j^{(\numlayers)}(\str{w})=\vec{k}\\
        w,j\models P_{\mat{v}^{(\numlayers)}=\vec{v}} &\iff \mat{v}_j^{(\numlayers)}(\str{w})=\vec{v}\\
    \end{align*}
    This is also achieved using \cref{lem:finite_function}, adding $O(p^2d^22^{p^2d})$ to the program size but not changing the depth, girth, or precision.
    Then, we compute $\mat{s}^{(\numlayers)}_{ij}(\str{w})$, found in the numerator of the attention scores. 
    This depends on the value of $\vec{q}_i$, so for each $\vec{q}\in\F^d$ we will define a program line $\explogit^{(\numlayers)}_{\vec{q},\bitind}$ such that:
     \begin{align*}
        \str{w},j\models \explogit^{(\numlayers)}_{\vec{q},\bitind} &\iff \bigbit{\round\mleft(\exp \mleft(\vec{q} \cdot \mathbf{k}^{(\numlayers)}_{j}(\str{w})\mright) \mright)}{\bitind}=1.
    \end{align*}
    We write these formulas as
    \begin{align*}
         \explogit^{(\numlayers)}_{\vec{q},\bitind} &:=  \bigvee_{\substack{\vec{k}\in\mathbb{F}^d\colon\\\bigbit{\round\mleft(\exp \mleft(\vec{q} \cdot \mathbf{k}\mright) \mright)}{\bitind}=1}} P_{\mat{k}^{(\numlayers)}=\vec{k}}\\
    \end{align*}
    These add $O(2^{pd})$ to the program size but don't change the depth, girth, or precision.
    
    Then we use this to compute the summand $B^{(\numlayers)}_i(\str{w})=\sum_{j=1}^{i} \round\mleft(\exp \mleft(\mat{s}^{(\numlayers)}_{ij}(\str{w})\mright)\mright)$.
    The idea is to fix $\mat{q}_i^{(\numlayers)}$ for each  $\mat{q}\in\F^d$ and compute what $B^{(\numlayers)}_i(\str{w})$ would be as a counting term, and then perform a disjunction over all $\mat{q}$ to truncate these values back to fixed precision. 
    First, for each $\mat{q}\in\F^d$ we can fix $\mat{q}_i^{(\numlayers)}$ and compute the value as a term $C_{B^{(\numlayers)}_{\mat{q}}}\in\N$ such that:
    \begin{align*}
        C_{\mat{q},B^{(\numlayers)}}^{w,i}&=2^{\fracbits}\sum_{j\leq i} \round\left(\exp\left(\mat{q} \cdot \mat{k}^{(\numlayers)}_j(\str{w})\right)\right).
    \end{align*}
    Because each term is always nonnegative, we can ignore the most significant bit and write the counting term as follows:
    \begin{align*}
    C_{\mat{q},B^{(\numlayers)}} &:=\sum_{\bitind\in[\numbits-1]}2^{\bitind+\fracbits-1}\cdot\countl\left[\explogit^{(\numlayers)}_{\vec{q},\bitind}\right]
    \end{align*}

    Then, we perform a disjunction over all $\mat{q}$ to compute $B^{(\numlayers)}_i(\str{w})$ and then round to fixed-precision. 
    Note that we truncate any values above the maximum fixed-precision number, as per the rounding rules defined. 
    The program line is as follows:
    \begin{align*}
    B^{(\numlayers)}_{\bitind} &:= \bigvee_{\mat{q}\in\F^d}\left(P_{\mat{q}^{(\numlayers)}=\mat{q}}\left(\bigvee_{\substack{x\in\mathbb{F} \\ x<2^{\numbits-s-1} \\ \bit{x}{\bitind}=1}} 
         (C_{\mat{q},B^{(\numlayers)}}=2^s\cdot x) \right)
         \lor 
         (C_{\mat{q},B^{(\numlayers)}}\geq 2^s\cdot x)\right).
    \end{align*}

    Each $C_{\mat{q},B^{(\numlayers)}}$ has size $O(p)$, precision $O(p)$, girth $O(p)$, and depth $O(1)$. 
    Thus in total all the $B^{(\numlayers)}_{\bitind}$ program lines contribute size $O(p^22^{p^2d})$, precision $O(p)$, girth $O(p)$, and depth $O(1)$.  
    
    The next step is to simulate $\alpha_{ij}$.
    To begin, using \cref{lem:finite_function} we can write program lines $P_{B^{(\numlayers)}=v}$ for $v\in\F$ such that 

    \begin{align*}
        w,i\models P_{B^{(\numlayers)}=v} \iff B^{(\numlayers)}=v
    \end{align*}

    This adds $O(p^2d^22^{p^2d})$ to the program size without changing the depth, girth, or precision.
    To remove the dependence of $\alpha_{ij}$ on multiple positions, we again need to enumerate over all $\mat{q}\in\F^d$. 
    We use the notation $\alpha_{\vec{q},j}=\round\left(\frac{\round\mleft(\exp \mleft(\vec{q} \cdot \mathbf{k}^{(\numlayers)}_{j}(\str{w})\mright) \mright)}{\sum_{j=1}^{i} \round\mleft(\exp \mleft(\vec{q} \cdot \mathbf{k}^{(\numlayers)}_{j}(\str{w})\mright) \mright)}\right)$.
    Then for each $\vec{q}$ we can write formulas 
    $P_{\bit{\alpha_{\vec{q}}}{\bitind}}$ such that:
    \begin{align*}
        \str{w},j\models P_{\bit{\alpha_{\vec{q}}}{\bitind}} &\iff \left\langle\alpha_{\vec{q},j}(\str{w})\right\rangle_{\bitind}=1.
    \end{align*}
    
    This uses the multi-ary generalization of \cref{lem:finite_function}, where $\alpha_{\vec{q},j}=f(A^{(\numlayers)}_{\mat{q},\bitind},B^{(\numlayers)}_{\bitind})$.
    This adds $O(p^2d^22^{p^2d})$ to the program size without changing the depth, girth, or precision.
    
    Now that the attention scores have been simulated, the next step is to simulate the weighted average of the values by the attention scores. 
    As observed by \citet{li2025characterizing} and \citet{merrill2023a}, there are finitely many positions where this can be nonzero.
    We label this bound $M^{(\numlayers)}$.
    For this we want to write a count $C_{\mat{q},M^{(\numlayers)}}$ such that 
    \[\left(C_{\mat{q},M^{(\numlayers)}}\right)^{\str{w},i}=\left|\left\{j\in[|\str{w}|]\colon \alpha_{\mat{q}j}>0\right\}\right|\]

    First, we write a formula $P_{\alpha_{\vec{q}}>0}$ for each $\vec{q}\in\F^d$ such that:
    \[w,j\models P_{\alpha_{\vec{q}}>0} \iff \alpha_{\vec{q},j}(\str{w})>0 \]
    
   This can be achieved using \cref{lem:finite_function},  $O(p^2d^22^{p^2d})$ to the program size without changing the depth, girth, or precision.
   Then we write a count term 

    \[C_{\mat{q},M^{(\numlayers)}}:=\countl\left[P_{\alpha_{\vec{q}}>0}\right]\]

    The exact value of $C_{\mat{q},M^{(\numlayers)}}$ must range from $0$ to $2^s$ \citep[Proposition~B.6]{li2025characterizing}. 
    So there exist formulas $P_{M^{(\numlayers)}=M}$ for $M\in [0,2^s]$ such that 

    \[\str{w},i\models P_{M^{(\numlayers)}=M} \iff M^{(\numlayers)} = M.\]

    We can then write this as follows:

    \begin{align*}
        P_{M^{(\numlayers)}=M} := \bigvee_{\mat{q}\in\F^d} C_{\mat{q},M^{(\numlayers)}}=M.
    \end{align*}

    Each $C_{\mat{q},M^{(\numlayers)}}$ has size $O(1)$, girth $O(1)$, depth $(1)$, and precision $O(1)$. 
    When plugged into each $P_{M^{(\numlayers)}=M}$, this results in an overall contribution of size $O(2^{pd})$, precision $O(1)$, depth $O(1)$, and girth $O(1)$. 
    
    Next, we need to check if $\round\left(\sum_{j=1}^{i}\alpha_{\vec{q},j}\mathbf{v}^{(\numlayers)}_{j}(\str{w})\right) = \vec{x}$ for $\vec{x}\in \F^d$. 
    The reason this is not straightforward is that we may have to compute a summation of negative-valued $\mat{v}$ (which is not definable in $\CRASPpos$).
    
    The solution we use is to turn the summation nonnegative by adding $2^{\numbits}$ to any nonzero summand. 
    Since there are exactly $M^{(\numlayers)}$ positions receiving nonzero attention, we add exactly $M^{(\numlayers)}\cdot2^{\numbits}$ to the total. 
    Stated more formally, we simulate the output of attention using only the summation of nonnegative values:
    \[\round\left(\sum_{1\leq j\leq i}^{i}\alpha_{\vec{q},j}\left(\mathbf{v}^{(\numlayers)}_{j}(\str{w})_c\right)\right) = \vec{x}_c\iff \round\left(\sum_{\substack{1\leq j\leq i\colon\\\alpha_{\mat{q}j}>0}}\alpha_{\vec{q},j}\left(\mathbf{v}^{(\numlayers)}_{j}(\str{w})_c\right)+2^{\numbits}\right) = \vec{x}_c+M^{(\numlayers)}_{\vec{q}}\cdot2^{\numbits}.\]
   
    To accomplish this, first we write program lines $P_{\bit{\alpha_{\mat{q}}\mat{v}^{(\numlayers)}_{c}}{\bitind}}$ to define the bits of $\bit{\alpha_{\mat{q}j}\mat{v}^{(\numlayers)}_{j,c}}{\bitind}$. 
    That is:
    \begin{align*}
        \str{w},j\models P_{\bit{\alpha_{\mat{q}}\mat{v}^{(\numlayers)}_{c}}{\bitind}} \iff \bit{\alpha_{\mat{q}j}\mat{v}^{(\numlayers)}_{j,c}}{\bitind}=1
    \end{align*}
    This is achieved again using \cref{lem:finite_function}, again adding $O(p^2d^22^{p^2d})$ to the program size without changing the depth, girth, or precision.

    Then, for the summation, we can sum the first $\numbits-1$ bits as usual (because they are all positive). 
    Because we add a $2^{\numbits}$ whenever $\alpha_{\mat{q}j}\mat{v}^{(\numlayers)}_{j,c}$, this interacts with the summation of $-2^{\numbits}$ in $\numbits$-th bit. 
    Specifically, to compute the sum:
    \begin{itemize}[noitemsep]
        \item If $\alpha_{\vec{q}}=0$, we add $0$ 
        \item If $\alpha_{\vec{q}}>0$ and $\bit{\alpha_{\mat{q}j}\mat{v}^{(\numlayers)}_{j,c}}{\numbits}=1$, we add $0$
        \item If $\alpha_{\vec{q}}>0$ and $\bit{\alpha_{\mat{q}j}\mat{v}^{(\numlayers)}_{j,c}}{\numbits}=0$, we add $2^p$
    \end{itemize}
    Thus we just need to check $P_{\alpha_{\vec{q}}>0}\land \lnot P_{\bit{\alpha_{\mat{q}}\mat{v}^{(\numlayers)}_{c}}{\bitind}}$ to determine how to perform the summation in the $\numbits$-th bit.

    Thus we can compute the modified sum as follows:

    \begin{align*}
        C_{\mat{q},\mat{c}^{(\numlayers)}_c} := 2^{\numbits}\cdot\countl\left[P_{\alpha_{\vec{q}}>0}\land \lnot P_{\bit{\alpha_{\mat{q}}\mat{v}^{(\numlayers)}_{c}}{\bitind}}\right]+\sum_{\bitind\in[\numbits-1]} 2^{\bitind-\fracbits-1}\cdot \countl\left[P_{\bit{\alpha_{\mat{q}}\mat{v}^{(\numlayers)}_{c}}{\bitind}}\right]
    \end{align*}

    Finally we round to fixed precision. For $\vec{x}\in\F$ we write a program line checking if $\mat{c}_c^{(\numlayers)}=\vec{x}$ as follows. 
    Crucially, we know there are exactly $M$ positions where $\alpha_{\mat{q}j}\mat{v}^{(\numlayers)}_{j,c}$ is not zero in every bit.
    So we can check the value of $P_{M_{\vec{q}=M}}$ and use that to add the correction factor to the sum.

    \begin{align*}
    P_{\mat{c}_c^{(\numlayers)}=\vec{x}} &:= \bigvee_{\mat{q}\in\F^d} P_{\mat{q}^{(\numlayers)}=\mat{q}} \left(\bigvee_{M\in\F_{s,0}} \left(P_{M_{\vec{q}=M}} \land \left(\bigvee_{\substack{x\in\mathbb{F} \\ x<2^{\numbits-s-1} \\ \bit{x}{\bitind}=1}} 
     (C_{\mat{c}^{(\numlayers)}_c}=2^s\cdot x+M\cdot 2^{\numbits}) \right)
     \lor 
     (C_{\mat{c}^{(\numlayers)}_c}\geq 2^s\cdot x+M\cdot 2^{\numbits})\right)\right)
    \end{align*}

    Each $C_{\mat{q},\mat{c}^{(\numlayers)}_c} $ has size $O(p)$, girth $O(p)$, depth $O(1)$, and precision $O(p)$.
    When plugged into each $P_{\mat{c}_c^{(\numlayers)}=\vec{x}}$, there is a total contribution of size $O(pd2^{p^3d})$, depth $O(1)$, girth $O(p)$, and precision $O(p)$.

    Finally, to simulate $\mathbf{h}^{(\numlayers)}_{i}(\str{w}) = f^{(\numlayers)}\left(\mathbf{c}^{(\numlayers)}_{i}(\str{w}) + \mathbf{h}^{(\numlayers-1)}_i(\str{w})\right) $ we use the multi-ary case of \cref{lem:finite_function} once more, as we have simulated $\mathbf{c}^{(\numlayers)}_{i}$ and $\mathbf{h}^{(\numlayers-1)}_i$. 
    This again adds $O(p^2d^22^{p^2d})$ to the program size without changing the depth, girth, or precision.
    
    Overall we have increased the size additively by $O(p^2d2^{p^3d})$, increased the depth additively by $O(1)$, left the precision at $O(p)$, and left the girth at $O(p)$. 
    This results in a program with 
    \begin{itemize}
        \item Size $O(|\Sigma|Lp^2d2^{p^3d})$
        \item Depth $O(L)$
        \item Precision $O(p)$
        \item Girth $O(p)$
    \end{itemize}
\end{proof}

\subsection{Logic to Fixed-Precision Transformers}\label{app:logic_to_transformer}

\begin{theorem}\label{thm:crasppos_to_transformers}
    Let $\phi$ be a $\CRASPpos$ program with precision $p$, depth $L$, and size $d$. 
    There exists a \frtfr{} $T$ with precision $O(p)$, depth $O(L)$, and hidden dimension $O(d)$ which simulates $\phi$.
\end{theorem}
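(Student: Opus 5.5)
We proceed by induction on the program lines of $\phi$, after first applying \cref{lem:decomposition} so that every counting formula has the form $\countl[\psi]\ge c$. The decomposition lemma could blow up the size exponentially in girth, though. To keep the hidden dimension $O(d)$, we would instead handle sums $\sum_t \alpha_t t\ge k$ directly, with each count saturated at $k\le 2^p$. Since all coefficients are in $\N$, any count exceeding $k$ can be replaced by $k$ without changing the truth value. The transformer keeps one coordinate per program line (a Boolean $0/1$ value), plus a constant number of auxiliary coordinates per counting term. This gives dimension $O(d)$.

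Prepend \bos, and embed each symbol $\sigma$ by a one-hot indicator on the coordinates of the atomic lines $\sympred\sigma$. Boolean lines $\lnot\phi_1$ and $\phi_1\land\phi_2$ are computed positionwise by the feed-forward functions $f^{(\ell)}$. This is allowed because $f^{(\ell)}$ is an arbitrary function on $\F^d$ and adds no depth; if needed, we use one extra layer per nesting level of the Boolean structure between counts. Lines are scheduled so that a line of counting depth $j$ is computed by layer $O(j)$, which gives depth $O(L)$.

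The main step is to simulate $\countl[\psi]\ge c$ (saturating) with a fixed-precision attention layer. In a \frtfr{} we cannot average uniformly over all $i$ positions, because the weights $1/i$ round to $0$. Instead we exploit the rounding: attention can only spread nonzero weight over boundedly many ($\le 2^s$) positions. The plan is to use a query/key construction in which positions satisfying $\psi$ receive a large logit $\gg$ others, so that $\round(\exp(\cdot))$ saturates to the maximum representable value $2^{p-s-1}$. All other positions, including \bos, get a fixed smaller weight. Then the attention weight at \bos{}, or the average of the $\psi$-indicator value, is a monotone function of $\min(\countl[\psi],2^{O(p)})$, and this function distinguishes every count up to $c$. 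Concretely, take logit $0$ on \bos{} and on non-$\psi$ positions and logit $\lambda$ on $\psi$-positions, and let the value be the indicator of $\psi$. The output is then $\round\!\big(N e^\lambda/(N e^\lambda + i - N)\big)$, which unfortunately depends on $i$. This is where I expect the obstacle. I would first try setting non-$\psi$ positions to logit $-\infty$, i.e.\ rounding to weight $0$ with a sufficiently negative logit. Then only \bos{} and the $\psi$-positions carry weight, and the value at \bos{} is $\round(1/(1+N))$. Using $O(p)$ fractional bits, this distinguishes $N=0,\dots,c$ for $c\le 2^{O(p)}$, and all $N\ge c$ yield values at most $1/(c+1)$, which is exactly the threshold information needed. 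A positionwise $f^{(\ell)}$ then maps the rounded value to the bit $[N\ge c]$.

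For a weighted sum $\sum_t\alpha_t t\ge k$, we compute each saturated count $\min(t,k)$ in its own head or coordinate within the same layer, at precision $O(p)$. The feed-forward then evaluates the comparison as a finite function. Finally, we check the bounds: each line contributes $O(1)$ coordinates, each depth level $O(1)$ layers, and precision $O(p)$. We need to be careful that the positionwise functions are described with constants in binary, so the size stays polynomial.
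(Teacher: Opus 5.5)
Your construction is sound, given multi-head layers (see the second paragraph), but it takes a different route from the paper.

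The paper never computes individual counts. For an inequality $C+\sum_t\alpha_t\countl[\phi_t]\ge k$ it uses a single head that gives weight $1$ to \bos{} and to positions satisfying $\bigvee_t\phi_t$, and weight $0$ elsewhere via a logit with $\round(\exp(\cdot))=0$. As values it places the shifted threshold on \bos{} and $\sum_t\alpha_t\mathbb{I}[\phi_t]$ on the other positions. It then reads the answer off the \emph{sign} of the rounded average, using that floor rounding keeps negative numbers negative.

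You instead spend one head per summand and read off the \bos{} weight $\round(1/(1+N))$. You take $2^s\ge k(k+1)$ so that this value determines $\min(N,k)$, and let the feed-forward layer evaluate the inequality on the saturated counts. Saturation is sound because the coefficients are nonnegative.

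Your route buys an explicit treatment of the $(1,1)$ semantics, where each weight is rounded before averaging. The paper plugs into the unrounded-weight formula~\eqref{eq:att} and leaves implicit why the sign test survives per-weight rounding. It does survive once the \bos{} bias is $C-k$ rather than $C-(k-1)$: if $\round(1/(N+1))=0$ then $N\ge 2^s\ge k$, so the inequality already holds. The paper's route buys one head per inequality regardless of girth, and uses roughly half the fractional bits.

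The step to fix is your alternative ``or coordinate within the same layer.'' In \cref{def:transformer} a layer has one attention head, and all value coordinates share its attention distribution. Different summands need different masks, so they cannot share a head. You therefore need genuinely multi-head layers. The paper's depth-$O(L)$ claim also tacitly assumes this, since it spends one layer per counting formula. Otherwise you must serialize the counts into separate layers, and then the depth grows with the girth and with the number of counting formulas per level instead of being $O(L)$. Folding each whole inequality into one head, as the paper does, at least removes the dependence on girth.
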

\begin{proof}
    All cases follow similarly as described in \citet{yang2025kneedeep, li2025characterizing} except for $ \phi=\sum _{t\in \mathcal{T}} \alpha_t t \sim k $, where $\alpha_i,k\in\N$ and $\mathord\sim\in\{<,\leq, =,\geq,>\}$. 
    We will just discuss the case of $\sim\in\{\geq\}$, from which all others are definable using boolean functions and adding a constant $+1$.

    Consider a subformula $\phi = \sum_{t\in \mathcal{T}} \alpha_t t\ge k$.
    We can separate out the constants terms to rewrite it in the form $\phi = C + \sum_{f\in \mathcal{F}} \alpha_f \countl[\phi_f]\ge k$, where $C\in\N$. 
    Assume that previous layers have computed $\mathbb{I}[w, j \models \phi_t]$ for $t \in \mathcal{T}$ at all positions $j \in [n]$. We want to
    construct a new layer that computes $\mathbb{I}[w,i \models \phi]$ at all positions $i \in [n]$.

    We pick a fixed-precision representation $\F$ which contains $\frac{1}{k}$ as well as $k$, and also contains a value $x$ such that $\round(\exp(x))=0$. 
    This is always possible.

    Using constructions from \citet{anonymous2025the}, it is possible to set $W_{\textnormal{Q}}$,$W_{\textnormal{K}}$, and $W_{\textnormal{V}}$ so that:

    \begin{align*}
        \round(\exp(s_{ij}))&=\begin{cases}
            1 & \text{$\str{w},j\models Q_{\bos}$}\\
            1 & \text{$\str{w},j\models \bigvee _{t\in\mathcal{T}}\phi_t$}\\
            0 & \text{otherwise}
        \end{cases}\\
        \mat{v}_j &= \begin{cases}
            -(k-1) + C & \text{$\str{w},j\models Q_{\bos}$}\\
            \sum_{t\in\mathcal{T}} \mathbb{I}[w, j \models \phi_t] &\text{otherwise}
        \end{cases}
    \end{align*}

    Then, after plugging into the definition of attention in \cref{eq:att} we get that:

    \begin{align*} 
        \mathbf{c}^{(\numlayers)}_{i}(\str{w}) &= \round\left(\frac{-(k-1)+\sum_{j=1}^{i}
        \sum_{t\in\mathcal{T}} \mathbb{I}[w, j \models \phi_t]}{|\{j\in[|\str{w}|]\colon \mathbb{I}\left[w, j \models\bigvee _{t\in\mathcal{T}}\phi_t \right]\}|+1}\right)
    \end{align*}

    We observe that $\mathbf{c}^{(\numlayers)}_{i}\geq 0 \iff \str{w},i\models \sum_{t\in \mathcal{T}} \alpha_t t\ge k$. 
    Thus, $\mathbf{c}^{(\numlayers)}_{i}$ rounds to $-2^{-\fracbits}$ or below if $\phi$ is false, and rounds to $0$ or above if $\phi$ is true. 
    We can then use the FFNN $f$ to map these two cases to $0$ or $1$, respectively. 
    This adds one layer, some constant number of hidden dimensions, and uses the same precision $p$.

\end{proof}

\section{Depth 2 Is Sufficient}\label{app:crasp_two}

\begin{proposition}
    There is a satisfiability-preserving reduction from \CRASP{} to \CRASP{} with depth at most 2.    
\end{proposition}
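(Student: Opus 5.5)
The plan is to use an extended alphabet that stores, at each position, the truth values of all subformulas of the given program. Correctness of these stored values will be enforced by depth-2 constraints, using the standard "guess-and-check" (Tseitin-style) flattening.

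Let $P = (\varphi_1,\dots,\varphi_k)$ be a $\CRASP$ program over $\Sigma$. First I normalize $P$: by the remarks after \cref{def:TLC_semantics}, $\Psop$ and $\Hsop$ are syntactic sugar for counting formulas, so I assume every line is one of the following:
\begin{itemize}
\item an atom $\sympred\sigma$;
\item a Boolean combination of earlier lines;
\item a linear comparison $\sum_t \alpha_t t \sim k$ in which every term is a constant or $\countl[\varphi_j]$ for an earlier line $j$.
\end{itemize}
Let $\Gamma := \Sigma\times\{0,1\}^k$. A letter $(\sigma,\vec b)$ at position $i$ is read as the guess that $w_i=\sigma$ and that line $j$ holds at $i$ iff $b_j=1$. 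For each $j$ I define the depth-0 formula
\[ B_j := \bigvee_{(\sigma,\vec b)\in\Gamma,\ b_j=1} (\sigma,\vec b). \]

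The new program $P'$ over $\Gamma$ states $\Hsop\chi \wedge B_k$, where $\chi$ is the conjunction over all $j$ of $B_j \leftrightarrow \widehat{\varphi_j}$. Here $\widehat{\varphi_j}$ is obtained from $\varphi_j$ by two replacements:
\begin{itemize}
\item every reference to an earlier line $\varphi_{j'}$ becomes $B_{j'}$;
\item every atom $\sympred\sigma$ becomes $\bigvee_{\vec b}(\sigma,\vec b)$.
\end{itemize}
Each $\widehat{\varphi_j}$ has depth at most 1, since its counting terms are now $\countl[B_{j'}]$ with $B_{j'}$ of depth 0. Hence $\chi$ has depth 1, $\Hsop\chi$ (expanded as $\countl[\neg\chi]=0$) has depth 2, and $P'$ has depth at most 2. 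The alphabet blows up exponentially in $k$. That is acceptable, because only a computable satisfiability-preserving reduction is needed, not a polynomial one.

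To keep the alphabet small, I could instead keep $\Sigma$ and encode each bit $b_j$ by its own letter in a block per position. I would avoid this, since it breaks position alignment for the counts.

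Correctness has two directions.
\begin{itemize}
\item If $w\models P$, label each position $i$ with $w_i$ and the true values of all lines at $i$. The resulting $\Gamma$-string satisfies $P'$.
\item Conversely, suppose $u\in\Gamma^*$ satisfies $P'$. Induct on $j$ and then on positions to show that the projected bit $b_j$ at position $i$ equals the truth of $\varphi_j$ at $i$ in the projection $\pi(u)\in\Sigma^*$. Counting terms $\countl[B_{j'}]$ only involve positions $\le i$, where by induction the bits of line $j'<j$ are already correct. So $\widehat{\varphi_j}$ evaluates as $\varphi_j$ on $\pi(u)$, and $\chi$ forces $b_j$ to agree with it. Finally $B_k$ at the last position gives $\pi(u)\models P$.
\end{itemize}
Thus $\lang(P')\ne\emptyset$ iff $\lang(P)\ne\emptyset$.

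The step needing the most care is the normalization, specifically checking that the sugar for $\Psop$ and $\Hsop$ introduces only counting terms over earlier lines. This is what keeps each $\widehat{\varphi_j}$ at depth exactly 1; $\Hsop\phi$ becomes $\countl[\neg\phi]=0$, so it is fine after introducing a fresh line for $\neg\phi$. A secondary subtlety is that the $\Hsop\chi$ check must be evaluated at the final position, which it is, since $w\models\phi$ is evaluated at $|w|$.
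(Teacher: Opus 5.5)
Your proposal is correct and is essentially the paper's argument. In both, fresh propositions guess the truth values of subformulas, and a single depth-2 constraint forces each guess to agree everywhere with its depth-$\le 1$ definition; your $\Hsop\chi$ plays the role of the paper's $\#\bigl((P_\psi\wedge\neg\psi')\vee(\neg P_\psi\wedge\psi')\bigr)=0$. The differences are cosmetic: you name all lines at once and make the fresh propositions explicit through the product alphabet $\Sigma\times\{0,1\}^k$, whereas the paper proceeds inductively on depth and leaves that encoding implicit.
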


The proof of this proposition is similar to the so-called \emph{Scott's Normal Form} for first-order logic with two variables \cite{GKV97}. Given $\varphi \in \CRASP$, we construct $\varphi'$ with $\depth(\varphi') = 1$ and a set $S := S_{\varphi}$ of ``basic'' formulas such that $\varphi$ is satisfiable iff $\varphi' \wedge \bigwedge_{\psi \in S} \psi$ is satisfiable. The proof goes by induction on the depth of $\varphi$. If $\depth(\varphi) \leq 1$, then $S = \{\varphi\}$. If $\depth(\varphi) > 2$, then for each $\#\psi$ with $\depth(\psi) = 1$ occuring in $\varphi$, we have by induction constructed $\psi'$ with $\depth(\psi') = 1$ and the set $S_{\psi}$ of basic formulas. Starting with $S_{\varphi} = \emptyset$, we introduce a new proposition $P_{\psi}$ and do the following: (i) replace $\psi$ in $\varphi$ by $P_{\psi}$, and (ii) add to $S_{\varphi}$ the formula $\#((P_{\psi} \wedge \neg \psi') \vee (\neg P_{\psi} \wedge \psi'))=0$. This results in a formula $\varphi'$ with $\depth(\varphi') = 1$ and $S_{\varphi}$ such that $\varphi$ is satisfiable iff $\varphi' \wedge \bigwedge_{\psi \in S_{\varphi}} \psi$ is satisfiable.

\end{document}